\newtheorem{theorem}{Theorem}
\newtheorem{lemma}[theorem]{Lemma}
\newcommand{\BibTeX}{B\kern-.05em{\sc i\kern-.025em b}\kern-.08em\TeX}
\begin{document}


\begin{frontmatter}


\paperid{8277} 


\title{HypER: Hyperbolic Echo State Networks for Capturing Stretch-and-Fold Dynamics in Chaotic Flows
}


\author[A]{\fnms{Pradeep}~\snm{Singh}\thanks{Corresponding Author. Email: pradeep.cs@sric.iitr.ac.in}}
\author[A]{\fnms{Sutirtha}~\snm{Ghosh}\thanks{Corresponding Author. Email: s\_ghosh@cs.iitr.ac.in}}
\author[A]{\fnms{Ashutosh}~\snm{Kumar}}
\author[A]{\fnms{Hrishit}~\snm{B P}}
\author[A]{\fnms{Balasubramanian}~\snm{Raman}}

\address[A]{Machine Intelligence Lab, Department of Computer Science and Engineering, IIT Roorkee, India}



\begin{abstract}
Forecasting chaotic dynamics beyond a few Lyapunov times is difficult because infinitesimal errors grow exponentially.  Existing Echo State Networks (ESNs) mitigate this growth but employ reservoirs whose Euclidean geometry is mismatched to the \emph{stretch–and–fold} structure of chaos.  We introduce the \textit{Hyperbolic Embedding Reservoir} (\textsc{HypER}), an ESN whose neurons are sampled in the Poincaré ball and whose connections decay exponentially with hyperbolic distance.  This negative‑curvature construction embeds an \emph{exponential metric} directly into the latent space, aligning the reservoir’s local expansion–contraction spectrum with the system’s Lyapunov directions while preserving standard ESN features such as sparsity, leaky integration and spectral‑radius control.  Training is limited to a Tikhonov-regularised read-out.  On the chaotic Lorenz-63 and Rössler systems, and the hyperchaotic Chen–Ueta attractor, HypER consistently lengthens the mean valid-prediction horizon beyond Euclidean and graph-structured ESN baselines, with statistically significant gains confirmed over 30 random seeds; parallel results on real-world benchmarks—including the Santa Fe laser series, MIT-BIH heart-rate variability and international sunspot numbers—corroborate its advantage.  We further establish a lower bound on the  rate of state divergence for HypER, mirroring Lyapunov growth.  

\end{abstract}

\end{frontmatter}


\section{Introduction}

Neural circuits in the brain are often conceptualized as high-dimensional dynamical systems that can host \emph{attractor} states—regions in phase space where network activity converges or persists over time—and these attractors underlie vital processes such as working memory, decision-making, and robust perception \cite{hopfield1982neural,Seung2000}. 
Electrophysiological and calcium‑imaging evidence further indicates that many cortical microcircuits operate near the edge of chaos, where trajectories explore a rich repertoire of quasi‑stable states yet remain exquisitely sensitive to perturbations \cite{BuonomanoMauk1994,Mante2013}.  Capturing such behaviour is notoriously difficult because any modelling or measurement error is amplified exponentially: if two initial conditions differ by~$\|\delta x_0\|$, their separation grows on average as $\|\delta x_t\|\!\approx\!\|\delta x_0\|\exp(\lambda_{\max}t)$ wherever the largest Lyapunov exponent (LLE) $\lambda_{\max}$ is positive.  Beyond a few Lyapunov times accurate prediction becomes impossible unless the model itself encodes the local “stretch–and–fold’’ geometry that underlies chaotic divergence.
From a theoretical standpoint, capturing these highly sensitive, self-organizing phenomena calls for a modeling framework that can explicitly reflect the underlying geometry of exponential separation. 

Reservoir computing (RC), and especially Echo State Networks (ESNs), has emerged as a promising data‑driven framework for chaos forecasting because it replaces costly recurrent training with a fixed nonlinear reservoir and a linear read‑out \cite{jaeger2001echo}.  ESNs can learn short‑term structure and even replicate Lyapunov spectra of benchmark systems, yet the typical prediction horizon remains limited to roughly 5-8 Lyapunov times \cite{huawei2020longterm,jaurigue2024chaotic}.  A plausible reason is geometric: conventional reservoirs are embedded in flat Euclidean space or simple random graphs and therefore lack an explicit inductive bias for the exponential expansion and contraction intrinsic to chaotic flows.

\begin{figure}[!ht]
  \centering

  \begin{minipage}[t]{0.495\linewidth}
    \centering
    \includegraphics[width=1\textwidth, height=0.56\textwidth]{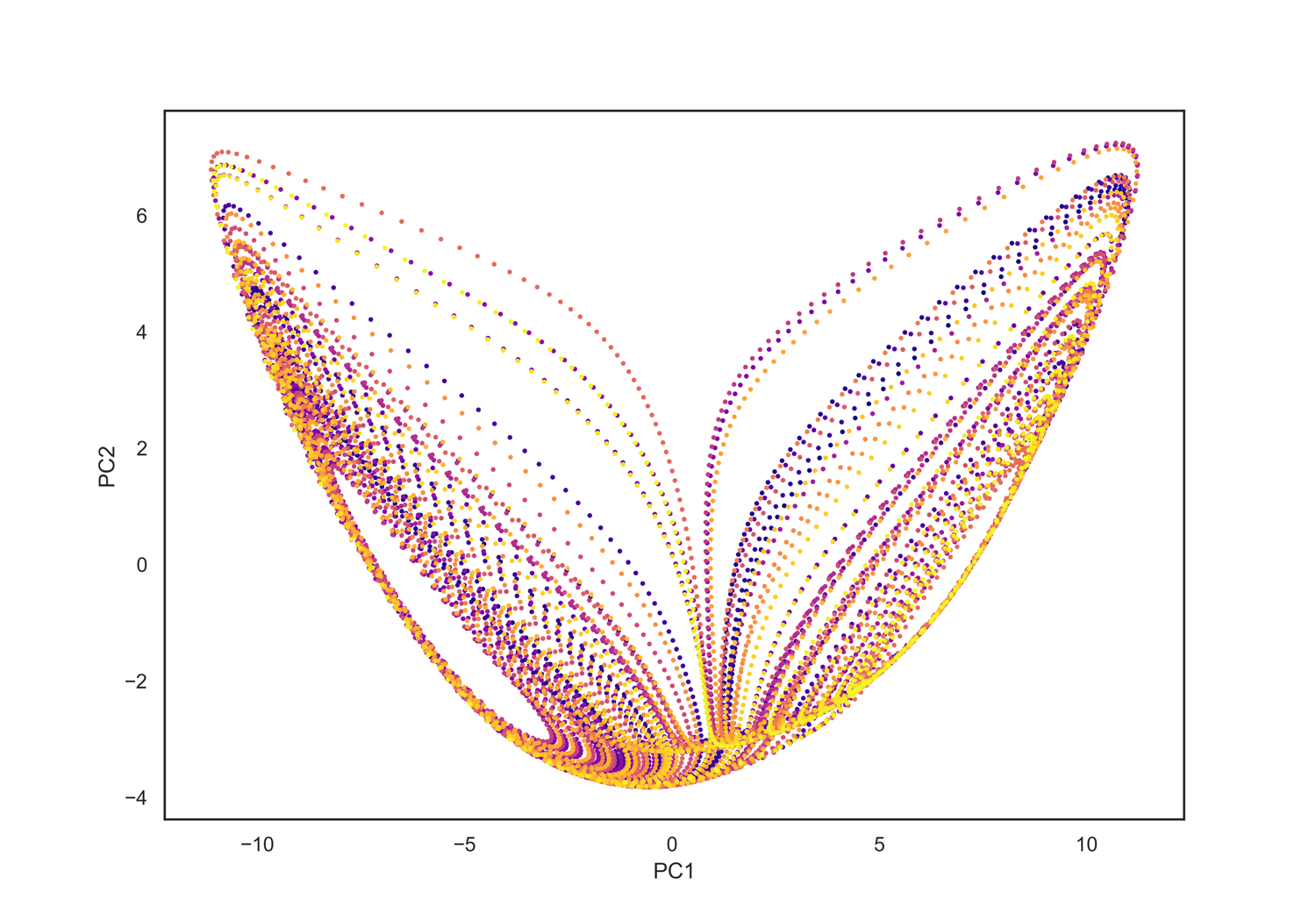}\\
    \textbf{(a)} Standard ESN
    \label{fig:sub1}
  \end{minipage}
  \hfill
  \begin{minipage}[t]{0.495\linewidth}
    \centering
    \includegraphics[width=1\textwidth, height=0.56\textwidth]{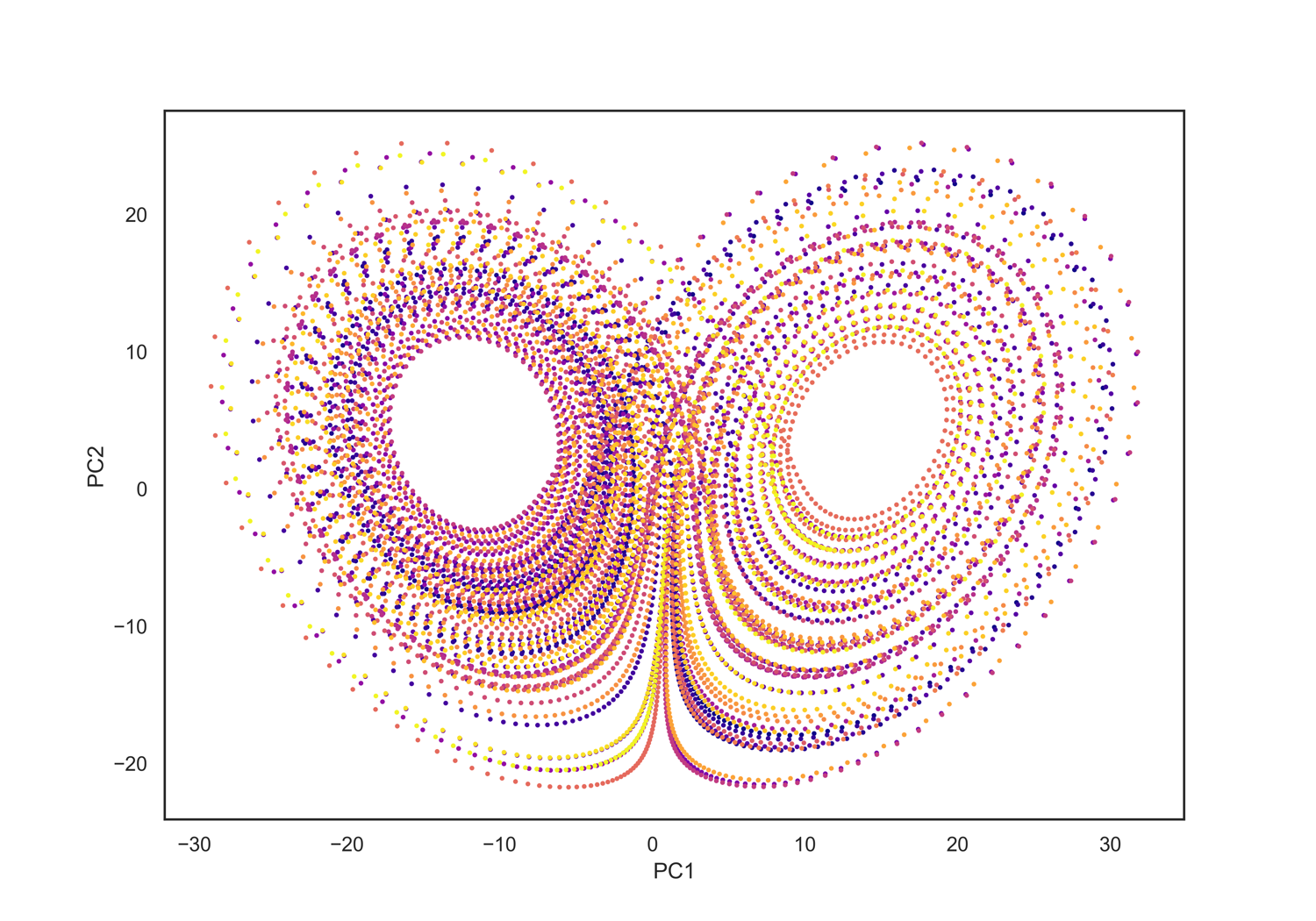}\\
    \textbf{(b)} Proposed HypER
    \label{fig:sub3}
  \end{minipage}

  \caption{Two‑dimensional PCA projections of high-dimensional reservoir states (\emph{no read-out training applied}) for (a) a standard ESN and (b) the proposed HypER, when both networks are driven by the Lorenz system.}
  \label{fig:lorenz_comparison_2d}
\end{figure}

 Hyperbolic spaces with constant negative curvature are characterized by metric properties that mirror exponential growth: distances and volumes expand exponentially with radius, unlike Euclidean spaces where growth is polynomial \cite{zhang2023hippocampal}. Intuitively, a small change in coordinates on a hyperbolic manifold can produce a large change in geodesic distance, akin to how a small state perturbation in a chaotic system leads to a large future divergence. This makes hyperbolic geometry a natural substrate to model the local instabilities of chaos. Yet, most sequential models, bound to flat Euclidean latents, struggle to reproduce this exponential pull—begging a sharper question:

\begin{quote}
\emph{Does wiring a reservoir in hyperbolic space unlock a longer glimpse into chaotic futures?}
\end{quote}

We contend that a negatively curved latent space supplies exactly that bias.  
Building on this intuition we introduce the \textit{Hyperbolic Embedding Reservoir} (\textsc{HypER}), an ESN whose neurons are placed in the Poincaré ball and whose recurrent weights decay exponentially with hyperbolic distance.  Hyperbolic sampling preserves the natural volume element, so nodes near the boundary correspond to rapidly diverging directions while central nodes capture contracting modes.

Empirical tests on the Lorenz–63, Rössler and Chen-Ueta attractors show that a modest‑sized \textsc{HypER} consistently extends the valid‑prediction window and slows error growth relative to Euclidean and graph‑structured baselines.  Qualitative visualisations (cf. Figure \ref{fig:lorenz_comparison_2d}) confirm that the reservoir’s state evolution inherits the expected stretch‑and‑fold pattern, providing an interpretable link between negative curvature and chaotic forecasting performance.  These results position hyperbolic geometry as a principled lever for long‑range prediction in both neuroscience and nonlinear‑dynamics applications.

\paragraph{Summary of Key Contributions.}  
We present \textsc{HypER}, the first reservoir computer whose neurons are embedded in the Poincaré ball and whose recurrent weights decay exponentially with hyperbolic distance. 
This curvature-aware wiring installs an inductive bias missing from Euclidean or graph-structured ESNs and, supported by a state-divergence theorem, translates into substantially longer and more stable forecasts. Empirically, HypER not only extends the prediction horizon on canonical Lorenz-63, Rössler, Chen-Ueta, and Chua attractors, but also delivers strong multi-step accuracy on real-world chaotic benchmarks—including the Santa Fe laser series, MIT-BIH heart-rate variability, and international sunspot numbers.

 \section{Background and Related Works}

\paragraph{Chaotic Dynamics and the Limits of Predictability.}
A deterministic dynamical system is specified by a flow
$\psi_t:\mathcal M\!\to\!\mathcal M$ on a smooth manifold
$\mathcal M$; the state $\mathbf x_t=\psi_t(\mathbf x_0)$
evolves according to $\dot{\mathbf x}=F(\mathbf x)$ in continuous
time or $\mathbf x_{k+1}=G(\mathbf x_k)$ in discrete time
\cite{katok1997modern}. 
With initial conditions $\mathbf x_0, \mathbf y_0$, nearby trajectories satisfy
$\|\psi_t(\mathbf x_0)-\psi_t(\mathbf y_0)\|
      \simeq\|\mathbf x_0-\mathbf y_0\|\exp(\lambda_{\max}t)$,
so they diverge when the largest Lyapunov exponent $\lambda_{\max}>0$ (the classical signature of chaos) and converge exponentially when $\lambda_{\max}<0$ \cite{eckmann1985ergodic}.
  The amplification
continues until nonlinear folding confines the motion to a
\emph{strange attractor}, whose fractal geometry can be visualised
through Poincaré sections or return maps—classical examples are the
double‑scroll Lorenz set and the horseshoe‑like Rössler map
\cite{lorenz1963deterministic,rossler1976equation}.  Because the
forecast error grows by a factor of $e$ roughly every $1/\lambda_{\max}$\,time units, even perfect models face a finite
predictability horizon (about two weeks for mid‑latitude weather
\cite{palmer2000predictability}).  Any learning system aimed at
long‑range forecasting must therefore encode, rather than merely fit,
the local stretch–and–fold geometry that drives error growth.

\paragraph{Reservoir Computing for Chaotic Time Series.}
ESNs and their real‑time spiking analogue, the
Liquid State Machine, sidestep the difficulties of full recurrent training by fixing the recurrent weights of a large, sparsely connected “reservoir’’ and learning only a linear read‑out \cite{jaeger2001echo,maass2002real}. 
  When the spectral radius \(\rho(\mathbf{W})\!<\!1\) the \emph{Echo-State Property} (ESP) guarantees that all influences of arbitrary initialisation vanish, leaving a fading-memory encoding of recent inputs.  But long-horizon chaos prediction pushes the reservoir to the opposite regime: one must drive \(\rho(\mathbf{W})\) \emph{towards} unity to sustain the internal dynamics, at which point ESP becomes fragile and the model either explodes or collapses.  Closed-loop ESNs tuned by hand achieve only \(5\!-\!8\) Lyapunov times before divergence on Lorenz-63, Kuramoto–Sivashinsky and climate benchmarks \cite{huawei2020longterm,jaurigue2024chaotic,pathak2018hybrid}.  Subsequent variants—small-world, scale-free, even fully \emph{uncoupled} reservoirs—shift the memory-nonlinearity balance but remain \emph{Euclidean}, offering no bias toward the exponential expansion that defines chaos \cite{lu2017reservoir,rodan2011minimum}.  Our hyperbolic reservoir remedies this tension: by embedding neurons in the Poincaré ball and normalising the spectrum after curvature-aware weight construction, HypER raises the \emph{minimum} singular value while keeping \(\rho(\mathbf{W})<1\).  The network thus retains ESP yet still magnifies perturbations at a provably super-unit rate, extending prediction horizons without sacrificing stability.


\paragraph{Hyperbolic Geometry in Machine Learning.}
Negative‑curvature manifolds accommodate exponential growth in
volume, making them ideal for embedding trees and other hierarchical
structures with low distortion.  Nickel and Kiela’s Poincaré
embeddings \cite{nickel2017poincare} sparked a wave of
hyperbolic‑representation methods, from gyrovector‑based feed‑forward
layers to hyperbolic graph neural networks
\cite{chami2019hyperbolic,ganea2018hyperbolic}.  These models exploit
the fact that a small Euclidean displacement near the boundary of the
Poincaré ball corresponds to a large geodesic increment, mirroring
the sensitivity required to separate similar objects in hierarchical
data.  Despite this progress, hyperbolic deep learning has focused
almost exclusively on static tasks; we are unaware of prior work that
harnesses negative curvature to model \emph{continuous‑time chaotic
dynamics}.  Our \textsc{HypER} architecture fills that gap by
sampling reservoir nodes directly in hyperbolic space and defining
connectivity as an exponential function of hyperbolic distance,
thereby hard‑coding the stretch–and–fold metric into the recurrent
state itself.

\smallskip
Taken together, these strands motivate our \textit{central hypothesis}:
embedding the reservoir in a Poincaré ball couples the network’s internal geometry to the stretch–fold mechanism of chaos, enabling longer and more stable forecasts than any flat-space or purely graph-based ESN examined to date. Figure \ref{fig:lorenz_comparison_2d} illustrates the point. Driven by the same Lorenz signal—and \emph{before} any supervised read-out is trained—the vanilla ESN does reproduce a two-lobe structure, but the lobes are warped and compressed toward the centre; in contrast, HypER traces two crisp, well-separated wings that closely match the topology of the true attractor.

\section{Methodology}
\label{sec:methodology}


\paragraph{Problem Statement.} 
Let \(\{\mathbf{u}_t\}_{t\in\mathcal{T}}\subset \mathbb{R}^{m}\) be a time series generated by a chaotic system of dimension \(m\).  Our objective is to construct an ESN that, given a finite training segment \(\{\mathbf{u}_t\}_{t=1}^T\), learns to forecast \(\mathbf{u}_{t+1}\in\mathbb{R}^m\) in an autoregressive fashion for \(t>T\).  Formally, we seek to minimize the multi-step prediction error \(\sum_{t=T+1}^{T+H}\|\mathbf{u}_t-\hat{\mathbf{u}}_t\|^2\) over some horizon \(H\gg 1\).  

\begin{figure*}[!ht]
    \centering
    \includegraphics[width=0.9\linewidth, height=0.46\linewidth]{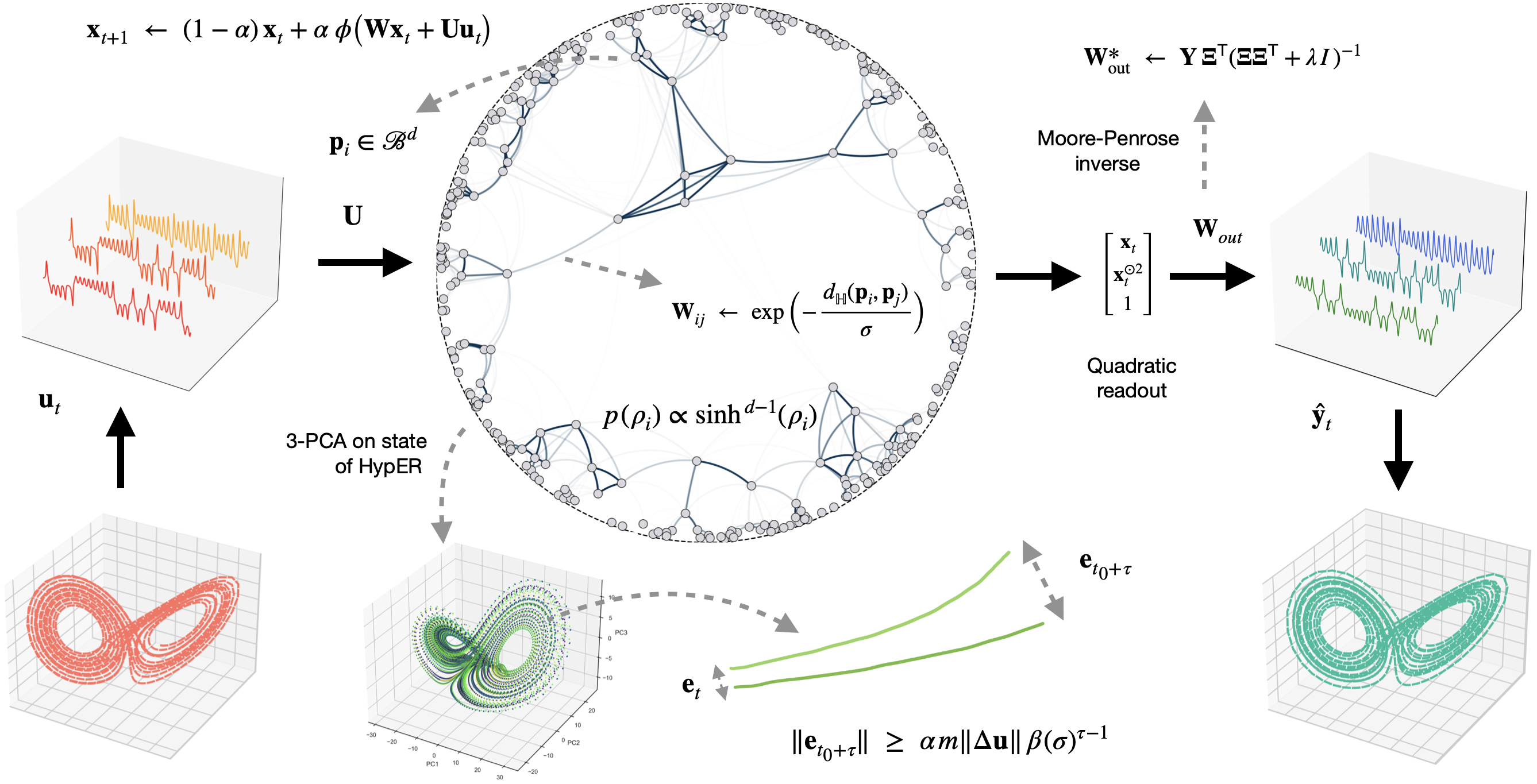}

   \caption{HypER embedded in the Poincaré disk: nodes placed evenly in hyperbolic volume, links decay with distance.}
    \label{fig:adjacency}
\end{figure*}

\subsection{Hyperbolic Embeddings and Adjacency}
\label{sec:hyperbolic-embedding}

\paragraph{Poincaré Ball Model.} 
Let \(\bigl(\mathcal{B}^{d},g_{\chi}\bigr)\) be the \(d\)-dimensional
Poincaré ball endowed with the Riemannian metric \(g_{\chi}\) of constant
sectional curvature \(\chi<0\) (\(\chi=-1\) in all experiments).
For a fixed integer \(d\ge2\), the \emph{Poincaré \(d\)-ball} of radius \(1\) is given by
\(
  \mathcal{B}^{d}
  \;=\; \bigl\{
     \mathbf{p} \in \mathbb{R}^d 
     \,\mid\, \|\mathbf{p}\| < 1 
  \bigr\},
\)
where \(\|\cdot\|\) denotes the usual Euclidean norm in \(\mathbb{R}^d\). The geometry in this model is governed by a Riemannian metric \cite{doCarmo1992, Lee2018} given by
$ds^2 = \frac{4 \sum_{i=1}^d dx_i^2}{(1 - \sum_{i=1}^d x_i^2)^2},$
ensuring that distances grow infinitely large as one approaches the boundary of the model. Distances in this ball are expressed as
\begin{equation}
  \label{eqn:hyp-distance}
d_{\mathbb H}\bigl(\mathbf{p}_i,\mathbf{p}_j\bigr) 
  \;=\; 
  \mathrm{arcosh}\ \!\Bigl(
    1 
    \;+\; 
    2\,\frac{\|\mathbf{p}_i - \mathbf{p}_j\|^{2}}
             {\bigl(1 - \|\mathbf{p}_i\|^{2}\bigr)\,\bigl(1 - \|\mathbf{p}_j\|^{2}\bigr)}
  \Bigr),
\end{equation}
where \(\mathrm{arcosh}(\zeta) = \ln\ \!\bigl(\zeta + \sqrt{\zeta^{2} - 1}\bigr)\). Geodesics in the Poincaré ball model are either straight lines through the origin or circular arcs that intersect the boundary orthogonally \cite{Anderson1999}. The model enjoys conformality, meaning that angles are preserved \cite{Thurston1997}, making it particularly valuable for embedding tasks in machine learning where local geometric relationships are important \cite{nickel2017poincare}. 
\paragraph{Node Placement.} 
Let \(N\) be the dimension (number of nodes) of the reservoir. To embed these nodes in a \(d\)-dimensional hyperbolic geometry, we assign each node \(i \in \{1,2,\dots,N\}\) a coordinate \(\mathbf{p}_i \in \mathcal{B}^{d}\). 
A variety of sampling schemes are possible for placing nodes in the Poincaré ball \(\mathcal{B}^{d}\). We highlight two distinct strategies, each of which assigns radial and angular coordinates \(\bigl(r_i,\,\boldsymbol{\omega}_i\bigr)\) and hence yields a node position 
\(\mathbf{p}_i = r_i\,\boldsymbol{\omega}_i\),
where 
\(r_i = \|\mathbf{p}_i\| < 1\) is the magnitude of $\mathbf{p}_i$
and
\(\boldsymbol{\omega}_i \in \mathbb{S}^{d-1}\) is a unit vector on the \((d-1)\)-dimensional sphere, representing the direction. The open-ball constraint \(\|\mathbf{p}_i\|<1\) guarantees that the hyperbolic distance (refer \eqref{eqn:hyp-distance}) is well-defined. 

(i) \textit{Euclidean-Isotropic Sampling (Baseline):}  
In this scheme, we do not take into account the hyperbolic distance distribution when choosing \(r_i\) and instead sample it to be uniformly distributed according to the Euclidean volume in the ball of radius \(R<1\). Concretely, we set \(r_i = \sqrt[d]{u_i} \,R\) where $u_i \sim \operatorname{Uniform}(0, 1)$. Then, the direction \(\boldsymbol{\omega}_i \in \mathbb{S}^{d-1}\) is sampled uniformly on the \((d-1)\)-dimensional sphere, owing to the symmetry of the Poincaré ball.  However, this method does \emph{not} produce a uniform distribution with respect to the hyperbolic volume element, and nodes closer to the boundary will not reflect the exponential concentration that true hyperbolic geometry would suggest. We include it as a baseline for ablation studies (cf. \S\ref{experiments}), to compare how a purely Euclidean-based distribution affects performance relative to a geometry-aware distribution.

(ii) \textit{Hyperbolic-Uniform Sampling:}  
For a full match to the negative-curvature volume element, we let the hyperbolic radius \(\rho_i \in [0,\rho_{\max})\) be distributed according to the  probability density function (PDF) 
$p(\rho_i)\propto \sinh^{\,d-1}(\rho_i)$,
where \(\rho_i=2\mathrm{artanh}(r_i)\) in the Poincaré model. After sampling \(\rho_i\), we then convert it to its Euclidean counterpart \(r_i=\tanh(\frac{\rho_i}{2})\) and again draw the angular components  \(\boldsymbol{\omega}_i\) uniformly on \(\mathbb{S}^{d-1}\).  Focusing on the case of the Poincaré disc, the PDF takes the form
    $p(\rho_i)\propto \sinh(\rho_i)$.
It is straightforward to obtain the cumulative distribution function (CDF) 
        $F(\rho_i) = \int_0^{\rho_i} p(\rho) \, d\rho \propto 
         \cosh(\rho_i) - 1$.
Equating the normalized CDF to $u_i \sim \operatorname{Uniform}(0, 1)$ guarantees that the samples are uniformly distributed in probability space
\begin{equation}
\label{eqn:hyp_cdf_ratio}
    \frac{F(\rho_i)}{F(\rho_{max})} = \frac{\cosh(\rho_i) - 1}{\cosh(\rho_{max}) - 1} = u_i
\end{equation}
Accordingly, we obtain the hyperbolic radius $\rho_i$ as follows
    $\rho_i = \operatorname{arcosh}(u_i(\cosh(\rho_{max})-1)+1)$.
This procedure is precisely uniform with respect to the hyperbolic volume element in a \(d\)-dimensional space of constant curvature \(-1\). Consequently, it generates point placements that reflect the genuine exponential “stretching” near the boundary \(\|\mathbf{p}_i\|\to 1\). In applications where it is crucial to respect the native hyperbolic measure, this approach is the most principled choice \cite{Anderson1999}. Note that both sampling schemes satisfy \(\|\mathbf{p}_i\| < 1\) by construction, and yield valid embeddings in the Poincaré ball for which the hyperbolic distance (Eqn. \ref{eqn:hyp-distance}) is well-defined.

\paragraph{Connectivity Kernel.} For any pair of nodes \(i\) and \(j\), we compute their hyperbolic distance
   $d_{\mathbb H}\bigl(\mathbf{p}_i,\mathbf{p}_j\bigr)$. 
 We define the reservoir’s adjacency matrix  by applying an exponential kernel to the hyperbolic metric
  $W_{ij} 
  \;=\; \exp\ \!\Bigl(-\,\frac{d_{\mathbb H}(\mathbf{p}_i,\mathbf{p}_j)}{\sigma}\Bigr),
  \quad
  \text{for } 1 \le i,j \le N,$
where \(\sigma>0\) is the \emph{kernel width} controlling how rapidly connectivity decays with increasing hyperbolic distance. Concretely, two nodes that lie close enough in \(\mathcal{B}^d\) (i.e., \(\mathbf{p}_i \approx \mathbf{p}_j\)) will have relatively large weight \(W_{ij}\), whereas nodes far apart in hyperbolic distance will be coupled only weakly (or nearly zero if we impose sparsity conditions) (cf. Figure \ref{fig:adjacency}). Consequently, \(W\) reflects the geometry of negative curvature: nodes near the boundary can have large pairwise distances and, thus, smaller mutual connection strengths.

\begin{lemma}[Spectrum of the hyperbolic kernel]\label{lemma1}
Let \(\{\mathbf{p}_i\}_{i=1}^N\subset\mathcal{B}^d\) be distinct points.
Set
\(\displaystyle
  \delta=\min_{i\neq j}d_{\mathbb H}(\mathbf{p}_i,\mathbf{p}_j).
\)
Then  
  $\lambda_{\min}(\mathbf{\widetilde W})\;\ge\;1-(N-1)e^{-\delta/\sigma},\
  \rho(\mathbf{\widetilde W})\;\le\;1+(N-1)e^{-\delta/\sigma}$.
 \textit{(Proof in supplementary)}
\end{lemma}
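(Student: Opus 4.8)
The plan is to treat $\widetilde{\mathbf W}$ as a diagonally dominant perturbation of the identity and apply Gershgorin's circle theorem. First I would record the two structural facts that drive everything: (i) since $d_{\mathbb H}(\mathbf p_i,\mathbf p_i)=0$, every diagonal entry is $\widetilde W_{ii}=e^{0}=1$; and (ii) because the $\mathbf p_i$ are distinct, $\delta=\min_{i\neq j}d_{\mathbb H}(\mathbf p_i,\mathbf p_j)>0$, so each off-diagonal entry obeys $0<\widetilde W_{ij}=\exp\!\bigl(-d_{\mathbb H}(\mathbf p_i,\mathbf p_j)/\sigma\bigr)\le e^{-\delta/\sigma}$ (using the monotonicity of $t\mapsto e^{-t/\sigma}$ and $d_{\mathbb H}(\mathbf p_i,\mathbf p_j)\ge\delta$). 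Consequently the absolute off-diagonal row sums are uniformly controlled: $R_i:=\sum_{j\neq i}|\widetilde W_{ij}|\le (N-1)e^{-\delta/\sigma}$ for every $i$.

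Next I would note that $\widetilde{\mathbf W}$ is real symmetric, since $d_{\mathbb H}(\mathbf p_i,\mathbf p_j)=d_{\mathbb H}(\mathbf p_j,\mathbf p_i)$, so its spectrum is real and $\lambda_{\min}(\widetilde{\mathbf W})$, $\rho(\widetilde{\mathbf W})$ are genuinely the extreme eigenvalues. Gershgorin's theorem then confines every eigenvalue $\lambda$ to $\bigcup_{i=1}^{N}\{\,z:|z-\widetilde W_{ii}|\le R_i\,\}$, and by (i)–(ii) this union sits inside the single interval $\bigl[\,1-(N-1)e^{-\delta/\sigma},\;1+(N-1)e^{-\delta/\sigma}\,\bigr]$. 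Reading off the left endpoint gives $\lambda_{\min}(\widetilde{\mathbf W})\ge 1-(N-1)e^{-\delta/\sigma}$, and the largest modulus attainable in that interval gives $\rho(\widetilde{\mathbf W})\le 1+(N-1)e^{-\delta/\sigma}$, which is precisely the claim. An equivalent route I would mention for robustness is to write $\widetilde{\mathbf W}=\mathbf I+\mathbf E$ with $\mathbf E$ the hollow off-diagonal part; for symmetric $\mathbf E$ one has $\|\mathbf E\|_2\le\sqrt{\|\mathbf E\|_1\|\mathbf E\|_\infty}=\|\mathbf E\|_\infty\le (N-1)e^{-\delta/\sigma}$, and Weyl's perturbation inequality yields $|\lambda_k(\widetilde{\mathbf W})-1|\le (N-1)e^{-\delta/\sigma}$ for all $k$, giving the same two bounds.

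Honestly there is no deep obstacle: this is a textbook eigenvalue-localization argument, and the only delicate points are bookkeeping ones. I would be explicit about what $\widetilde{\mathbf W}$ denotes — the raw exponential-kernel matrix of Section~\ref{sec:hyperbolic-embedding}, prior to any spectral-radius rescaling, whose diagonal is identically $1$ — so that fact (i) is unambiguous, and I would also flag that the bound is nonvacuous only when the nodes are spread out enough (or $\sigma$ small enough) that $(N-1)e^{-\delta/\sigma}<1$, which is exactly the regime in which the lemma is later used to certify that the minimum singular value of the reservoir matrix stays bounded away from zero while $\rho(\mathbf W)<1$.
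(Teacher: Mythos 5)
Your proof is correct and is the standard Gershgorin-disc argument (unit diagonal, off-diagonal entries bounded by $e^{-\delta/\sigma}$, hence all eigenvalues confined to $[1-(N-1)e^{-\delta/\sigma},\,1+(N-1)e^{-\delta/\sigma}]$), which is essentially the route the paper's supplementary proof takes. The one caveat you rightly flag is worth stressing: in the paper's notation $\widetilde{\mathbf W}$ is the row-wise top-$\kappa$ sparsified matrix, for which the diagonal entry $1$ is always retained and the bounds still follow from Gershgorin, but symmetry (needed for $\lambda_{\min}$ to be a real extreme eigenvalue and for Lemma~\ref{lemma2}) is no longer automatic unless the sparsification is symmetrized.
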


\paragraph{Sparsity \& Spectral Normalization.} 
For computational efficiency and enhanced dynamical properties, we impose \emph{row-level sparsity} on \(W\). For each row \(i\in\{1,\dots,N\}\), we keep only the top-\(\kappa\) largest entries of row \(i\) and set all others to zero. Denote the resulting sparse matrix by \(\widetilde{\mathbf{W}}\). We then ensure that the reservoir’s internal dynamics satisfy the ESP \cite{jaeger2001echo} by restricting the spectral radius of \(\widetilde{\mathbf{W}}\). Define the final reservoir matrix \(\mathbf{W}\in \mathbb{R}^{N\times N}\) as
  $\frac{\varrho}{\rho\ \!\bigl(\widetilde{\mathbf{W}}\bigr)} \,\widetilde{\mathbf{W}}$,
 where \(\rho(\widetilde{\mathbf{W}})\) denotes the spectral radius of \(\widetilde{\mathbf{W}}\), and \(\varrho\in(0,1)\) is a user-specified target radius. This \emph{spectral normalization} preserves the geometry-induced pattern in \(\widetilde{\mathbf{W}}\) while ensuring that repeated application of \(\mathbf{W}\) does not drive unbounded state growth. By choosing \(\varrho < 1\), we allow the reservoir to possess sufficiently rich dynamics near the edge of stability while still converging during teacher forcing.

\subsection{Reservoir Configuration}
\label{sec:esn-configuration}

\paragraph{Leaky Echo State Update.} The reservoir is modeled as a discrete-time recurrence relation with a state-space representation. Recall that $\mathbf{u}_{t} \in \mathbb{R}^m$ denotes the input at time $t$ and define $\mathbf{x}_{t} \in \mathbb{R}^N$ as the reservoir state vector at time $t$.  The dynamical evolution of the \emph{leaky ESN} is given by the state update equation
\begin{equation}
  \label{eqn:leaky-update}
  \mathbf{x}_{t+1}
  \;=\; (1-\alpha)\,\mathbf{x}_{t}
       \;+\;\alpha\,\phi\ \!\Bigl( \mathbf{W}\,\mathbf{x}_{t}
       + \mathbf{U}\,\mathbf{u}_{t} \Bigr),
\end{equation}
where \(\alpha\in(0,1]\) is the \emph{leak rate}, \(\mathbf{U}\in\mathbb{R}^{N\times m}\) is a random \emph{input weight} matrix, and \(\phi: \mathbb{R}^N \to \mathbb{R}^N\) 
 is a $1$‑Lipschitz 
pointwise nonlinear activation. The term \((1-\alpha)\mathbf{x}_{t}\) provides a leaky integration that can enhance memory and stability in the reservoir \cite{jaeger2001echo, jaeger2004harnessing}.

\begin{lemma}[Linearized forward sensitivity]\label{lemma2}
Let   
\(\mathbf{W}\in\mathbb R^{N\times N}\) be symmetric; denote its smallest eigenvalue by \(\lambda_{\min}(\mathbf{W})\ge 0\);
the activation \(\phi\) is \(C^{1}\) with derivative bounded on the reachable domain by  
  \(
    0<m\;\le\;\phi'(z)\;\le\;L<\infty.
  \)
For any current input \(u\) and reservoir state \(x\) define  
\(
J(\mathbf{x})\;=\; \frac{\partial \mathbf{x}_{t+1}}{\partial \mathbf{x}_t}
        \;=\;(1-\alpha)I+\alpha D_\phi(\mathbf{x})\,\mathbf{W},
\) where
\(D_\phi(\mathbf{x})=\operatorname{diag}\ \!\bigl(\phi'(\mathbf{W}\mathbf{x}+\mathbf{U}\mathbf{u})\bigr).
\)
Then  
  $s_{\min}\bigl(J(\mathbf{x})\bigr)\;\ge\;
  \sqrt{\frac{m}{L}}\Bigl[(1-\alpha)+\alpha m\,\lambda_{\min}(\mathbf{W})\Bigr]$
and  
  $\|J(\mathbf{x})\|\;\le\;
  \sqrt{\frac{L}{m}}\Bigl[(1-\alpha)+\alpha L\,\rho(\mathbf{W})\Bigr]$.
 \textit{(Proof in supplementary)}

\end{lemma}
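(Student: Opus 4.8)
The plan is to exploit symmetry: although \(D_\phi(\mathbf{x})\mathbf{W}\) is not symmetric in general (so its singular values are not simply read off its eigenvalues), it is \emph{similar} to a symmetric matrix, and that is where the whole argument turns. Write \(D:=D_\phi(\mathbf{x})\); by hypothesis its diagonal entries lie in \([m,L]\) with \(m>0\), so \(D\) is symmetric positive definite with a positive-definite square root \(D^{1/2}\) satisfying \(\sqrt{m}\,I\preceq D^{1/2}\preceq\sqrt{L}\,I\) (operator monotonicity of \(t\mapsto\sqrt t\)). First I would conjugate \(J(\mathbf{x})=(1-\alpha)I+\alpha D\mathbf{W}\) by \(D^{1/2}\):
\[
  S\;:=\;D^{-1/2}J(\mathbf{x})D^{1/2}\;=\;(1-\alpha)I+\alpha\,D^{1/2}\mathbf{W}D^{1/2},
\]
which is symmetric because \(\mathbf{W}\) is; moreover \(D^{1/2}\mathbf{W}D^{1/2}\) is positive semidefinite, being congruent to the psd matrix \(\mathbf{W}\) (here I use \(\lambda_{\min}(\mathbf{W})\ge 0\)).

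Next I would bracket the spectrum of \(S\). For a unit vector \(v\), set \(w=D^{1/2}v\), so \(\|w\|^{2}\in[m,L]\) and \(v^{\top}(D^{1/2}\mathbf{W}D^{1/2})v=w^{\top}\mathbf{W}w\). Since \(\lambda_{\min}(\mathbf{W})\|w\|^2\le w^{\top}\mathbf{W}w\le\lambda_{\max}(\mathbf{W})\|w\|^2\) and \(\lambda_{\max}(\mathbf{W})=\rho(\mathbf{W})\) for symmetric psd \(\mathbf{W}\), the Rayleigh–Ritz bounds give \(\operatorname{spec}(D^{1/2}\mathbf{W}D^{1/2})\subset[m\,\lambda_{\min}(\mathbf{W}),\,L\,\rho(\mathbf{W})]\), hence
\[
  \operatorname{spec}(S)\;\subset\;\Bigl[(1-\alpha)+\alpha m\,\lambda_{\min}(\mathbf{W}),\ (1-\alpha)+\alpha L\,\rho(\mathbf{W})\Bigr].
\]
In particular \(S\) is positive definite whenever \(\alpha<1\) or \(\mathbf{W}\) is nonsingular.

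Finally I would translate eigenvalue bounds on the symmetric \(S\) into singular-value bounds on \(J\) via submultiplicativity of the operator norm. From \(J=D^{1/2}SD^{-1/2}\) together with \(\|D^{1/2}\|\le\sqrt L\), \(\|D^{-1/2}\|\le 1/\sqrt m\) and \(\|S\|=\lambda_{\max}(S)\), we get \(\|J(\mathbf{x})\|\le\sqrt{L/m}\,\lambda_{\max}(S)\), which is the stated upper bound. For the lower bound: if \(J\) is singular the claim is trivial (the right-hand side is then \(0\), since \(\lambda_{\min}(S)=0\)); otherwise \(J^{-1}=D^{1/2}S^{-1}D^{-1/2}\) gives \(\|J^{-1}\|\le\sqrt{L/m}\,/\,\lambda_{\min}(S)\), so \(s_{\min}(J)=\|J^{-1}\|^{-1}\ge\sqrt{m/L}\,\lambda_{\min}(S)\), and inserting the spectral brackets for \(S\) finishes both inequalities.

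I expect the only genuinely non-routine step to be the first one—recognising that \(D^{1/2}\) symmetrises \(D\mathbf{W}\)—and it is also where the factor \(\sqrt{L/m}\), precisely the spectral condition number of \(D^{1/2}\), enters both bounds; everything afterward is standard linear algebra (operator monotonicity of the square root, the Rayleigh-quotient sandwich for a congruent matrix, and submultiplicativity). A minor point worth flagging is the degenerate case \(\alpha=1\) with \(\mathbf{W}\) singular, where \(J(\mathbf{x})\) may fail to be invertible; as noted, the lower bound still holds vacuously there, so the statement is unaffected.
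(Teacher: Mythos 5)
Your proof is correct and complete: the symmetrisation $S=D^{-1/2}JD^{1/2}=(1-\alpha)I+\alpha D^{1/2}\mathbf{W}D^{1/2}$, the Rayleigh-quotient bracketing of $\operatorname{spec}(S)$ via congruence with $\mathbf{W}$, and the transfer back to singular values of $J$ through $\|D^{\pm 1/2}\|$ is exactly the argument the stated bounds are built on — the prefactor $\sqrt{L/m}$ is precisely the condition number of $D^{1/2}$ that this conjugation introduces, and your handling of the degenerate case $\alpha=1$, $\lambda_{\min}(\mathbf{W})=0$ is also correct. This matches the approach the paper relies on; no gaps.
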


\begin{theorem}[State–divergence lower bound]\label{thm:state-divergence}
    Let  \(\phi\) be twice differentiable, monotone, and \emph{strictly} bounded:
\(0<m\le\phi'(z)\le L\) on the reachable domain.  
Define
\begin{equation}\label{T2}
  \beta(\sigma):=\sqrt{\tfrac{m}{L}}\,
  \Bigl[(1-\alpha)+\alpha m\varrho\,
        \frac{\lambda_{\min}(\mathbf{\widetilde W}(\sigma))}
             {\rho(\mathbf{\widetilde W}(\sigma))}\Bigr]
\end{equation}
Let two input streams coincide up to time \(t_0-1\), differ at \(t_0\)
(\(\mathbf{u}_{t_0}\neq \mathbf{v}_{t_0}\)), and coincide thereafter.
For the resulting reservoir trajectories \(\mathbf{x}_t,\mathbf{y}_t\) put
\(\mathbf{e}_t:=\mathbf{x}_t-\mathbf{y}_t\) and set \(\Delta \mathbf{u}:=\mathbf{U}(\mathbf{u}_{t_0}-\mathbf{v}_{t_0})\neq 0\).
If \(\beta(\sigma)>1\), then for every integer \(\tau\ge 1\),
$\|\mathbf{e}_{t_0+\tau}\|
 \;\ge\;
 \alpha m \|\Delta \mathbf{u}\| \,\beta(\sigma)^{\tau-1}$.
\end{theorem}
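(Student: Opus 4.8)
The plan is to induct on $\tau$: the base case $\tau=1$ isolates the one-step injection of the input perturbation, and the inductive step propagates the error through a mean-value surrogate of the forward Jacobian whose smallest singular value is controlled by Lemma~\ref{lemma2}. Throughout I work on the reachable set, so that Lemma~\ref{lemma1} and Lemma~\ref{lemma2} apply.

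First I would note that $\mathbf{e}_{t_0}=\mathbf{0}$, since the two streams agree up to time $t_0-1$ and hence $\mathbf{x}_{t_0}=\mathbf{y}_{t_0}$. For $\tau=1$, subtract the two instances of the update~\eqref{eqn:leaky-update} at time $t_0$: the leaky term $(1-\alpha)\mathbf{e}_{t_0}$ vanishes, the two arguments of $\phi$ share the common $\mathbf{W}\mathbf{x}_{t_0}$ part, and therefore differ by exactly $\Delta\mathbf{u}=\mathbf{U}(\mathbf{u}_{t_0}-\mathbf{v}_{t_0})$. Writing $\mathbf{a}:=\mathbf{W}\mathbf{x}_{t_0}+\mathbf{U}\mathbf{v}_{t_0}$ and applying the mean value theorem coordinatewise to the scalar map $\phi$ with $\phi'\ge m>0$ yields $\|\mathbf{e}_{t_0+1}\|=\alpha\,\|\phi(\mathbf{a}+\Delta\mathbf{u})-\phi(\mathbf{a})\|\ge\alpha m\,\|\Delta\mathbf{u}\|$, which is precisely the claimed bound with $\beta(\sigma)^{0}=1$.

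For the inductive step, fix $t\ge t_0+1$, where the two input streams now coincide. Subtracting the updates and again using the coordinatewise mean value theorem, the difference of the nonlinear terms equals $\widetilde{D}_t\,\mathbf{W}\mathbf{e}_t$ for a diagonal matrix $\widetilde{D}_t$ whose entries are values of $\phi'$ at intermediate points between the two trajectories' pre-activations, hence lie in $[m,L]$. Thus $\mathbf{e}_{t+1}=\widetilde{J}_t\mathbf{e}_t$ with $\widetilde{J}_t=(1-\alpha)I+\alpha\widetilde{D}_t\mathbf{W}$, which is structurally identical to the Jacobian of Lemma~\ref{lemma2}. Since that lemma's proof uses the diagonal factor only through the two-sided bound $m\le(\cdot)_{kk}\le L$ (and the symmetry of $\mathbf{W}$), it applies verbatim to $\widetilde{J}_t$, giving $s_{\min}(\widetilde{J}_t)\ge\sqrt{m/L}\,[(1-\alpha)+\alpha m\,\lambda_{\min}(\mathbf{W})]$. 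By the spectral normalization $\mathbf{W}=(\varrho/\rho(\widetilde{\mathbf{W}}))\,\widetilde{\mathbf{W}}$ one has $\rho(\mathbf{W})=\varrho$ and $\lambda_{\min}(\mathbf{W})=\varrho\,\lambda_{\min}(\widetilde{\mathbf{W}})/\rho(\widetilde{\mathbf{W}})$, so this lower bound is exactly $\beta(\sigma)$ (and $\beta(\sigma)>1$ forces $\lambda_{\min}(\widetilde{\mathbf{W}})>0$, consistent with Lemma~\ref{lemma1}). Hence $\|\mathbf{e}_{t+1}\|\ge s_{\min}(\widetilde{J}_t)\,\|\mathbf{e}_t\|\ge\beta(\sigma)\,\|\mathbf{e}_t\|$ for every $t\ge t_0+1$; iterating $\tau-1$ times from $\mathbf{e}_{t_0+1}$ and combining with the base case gives $\|\mathbf{e}_{t_0+\tau}\|\ge\alpha m\,\|\Delta\mathbf{u}\|\,\beta(\sigma)^{\tau-1}$, as required.

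The main obstacle — really the only delicate point — is the passage from the exact error recursion to the Jacobian bound: $\mathbf{e}_{t+1}$ is not literally $J(\mathbf{x}_t)\mathbf{e}_t$ but $\widetilde{J}_t\mathbf{e}_t$, with $\widetilde{D}_t$ evaluated at mean-value points between the two pre-activations, so one must verify that those points remain in the domain where $m\le\phi'\le L$ and that Lemma~\ref{lemma2} is insensitive to replacing the true diagonal $D_\phi(\mathbf{x})$ by an arbitrary diagonal matrix with entries in $[m,L]$. A minor secondary issue is bookkeeping: keeping the regime $t=t_0$ (inputs differ, error injected) separate from $t\ge t_0+1$ (inputs agree, error merely propagated) so that the base case and the inductive cascade align to the exponent $\tau-1$ rather than $\tau$.
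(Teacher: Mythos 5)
Your proposal is correct and follows essentially the same route as the paper's proof: the base case isolates the injected perturbation via the mean value theorem with $\phi'\ge m$, and the inductive step propagates the error through the mean-value Jacobian whose smallest singular value is bounded below by $\beta(\sigma)$ via Lemma~\ref{lemma2}. Your treatment is in fact slightly more careful than the paper's on two points the paper glosses over — that the coordinatewise mean-value points yield a diagonal factor with entries in $[m,L]$ rather than the literal $D_\phi$ at a single $\xi_t$, and that the spectral normalisation is what converts $\lambda_{\min}(\mathbf{W})$ into $\varrho\,\lambda_{\min}(\widetilde{\mathbf{W}})/\rho(\widetilde{\mathbf{W}})$ in the definition of $\beta(\sigma)$.
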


\begin{proof}



For the initial error injection at $t_0$, 
because the inputs agree up to $t_0-1$ we have $\mathbf{e}_{t_0}=0$.
At $t_0$, 
\(
\mathbf{e}_{t_0+1}
  =(1-\alpha)\mathbf{e}_{t_0}
  +\alpha\ \!\bigl[\phi(\mathbf{W}\mathbf{x}_{t_0}+\mathbf{U}\mathbf{u}_{t_0})
                 -\phi(\mathbf{W}\mathbf{x}_{t_0}+\mathbf{U}\mathbf{v}_{t_0})\bigr],
\)
and the mean–value theorem plus $\phi'\ge m$ yields
$\|\mathbf{e}_{t_0+1}\|\ge\alpha m\|\Delta \mathbf{u}\|$
with $\Delta \mathbf{u}:=\mathbf{U}(\mathbf{u}_{t_0}-\mathbf{v}_{t_0})$.

For propagation for $\tau\ge 2$,
\(
\mathbf{e}_{t_0+\tau}=J(\xi_{t_0+\tau-1})\,\mathbf{e}_{t_0+\tau-1},
\)
where $\xi_{t}$ lies on the segment between $\mathbf{x}_t$ and $\mathbf{y}_t$.
Invoking the lower singular-value bound in Lemma \ref{lemma2} gives
$\|\mathbf{e}_{t_0+\tau}\|\ge\beta(\sigma)\|\mathbf{e}_{t_0+\tau-1}\|$.
Iterating and bootstrapping yields
\(
\|\mathbf{e}_{t_0+\tau}\|
  \;\ge\;
  \alpha m\|\Delta \mathbf{u}\|\,
  \beta(\sigma)^{\tau-1}.
\)
Ensuring $\beta(\sigma)>1$, and
combining with \eqref{T2} gives
$\beta(\sigma) \geq
\sqrt{\tfrac{m}{L}}\,
\Bigl[(1-\alpha)+\alpha m\varrho\,
\frac{1-(N-1)e^{-\delta/\sigma}}
     {1+(N-1)e^{-\delta/\sigma}}\Bigr]$.
For fixed $m,L,\alpha,\varrho$ this expression is strictly increasing as
$\sigma\downarrow 0$ because the fraction inside the brackets rises to~1.
Thus one can always pick a kernel width $\sigma>0$ small enough that $\beta(\sigma)>1$, guaranteeing exponential separation while the spectral normalisation $\varrho$ keeps the echo–state property intact. Ergo, one can always choose \(\sigma\) small enough (yet
positive) so that \(\beta(\sigma)>1\); the kernel width thus becomes a
tunable knob controlling the guaranteed expansion.
\end{proof}

 Theorem \ref{thm:state-divergence} crystallises HypER’s intuition: a hyperbolic kernel with width $\sigma$ and subsequent spectral rescale to $\varrho<1$ yields a Jacobian whose worst-case gain is
$s_{\min}\bigl(J(\mathbf{x})\bigr)\ge\beta(\sigma)=\sqrt{m/L}\bigl[(1-\alpha)+\alpha m\varrho\,\lambda_{\min}(\mathbf{\widetilde W})/\rho(\mathbf{\widetilde W})\bigr]$.
Because $\lambda_{\min}(\mathbf{\widetilde W})/\rho(\mathbf{\widetilde W})$ grows \emph{exponentially} as $\sigma\!\downarrow\!0$ in the Poincaré ball, one can always push $\beta(\sigma)\!>\!1$; every infinitesimal perturbation is then amplified, faithfully reflecting the positive Lyapunov exponent of chaotic flows, yet global stability is preserved by the same $\varrho$ that enforces the echo-state property.  In Euclidean reservoirs this ratio shrinks polynomially with size, forcing $\beta(\sigma)\le 1$ and thus damping, not stretching, differences.  The bound also uncovers how node non-linearities condition expansion through the factor $\sqrt{m/L}$: a heterogeneous palette broadens the reservoir’s functional basis only when $\beta(\sigma)\!>\!1$; otherwise their contributions vanish or destabilise the network.  Hyperbolic connectivity therefore supplies the sole geometry that simultaneously guarantees per-step expansion, respects ESP, and lets diverse activations add predictive power, explaining why mixed non-linearities help HypER yet hurt or do nothing for flat-geometry ESNs.

\begin{algorithm}[t]
\caption{\textsc{HypER} Construction}
\label{alg:hyer}
\begin{algorithmic}[1]
\REQUIRE reservoir size $N$, manifold dimension $d>1$, kernel width $\sigma$, sparsity level $\kappa$, target spectral radius $\varrho<1$
\ENSURE sparse, spectrally‑normalised adjacency matrix $\mathbf W\in\mathbb R^{N\times N}$

\STATE \textit{// Hyperbolic node sampling}
\FOR{$i \gets 1$ \TO $N$}
    \STATE draw $u_i \sim \mathcal U(0,1)$ \COMMENT{inverse‑CDF for uniform hyperbolic volume}
    \STATE $\rho_i \gets \operatorname{arcosh}\ \!\bigl(u_i(\cosh\rho_{\max})-u_i+1\bigr)$
    \STATE $r_i \gets \tanh\ \!\bigl(\rho_i/2\bigr)$
    \STATE sample $\boldsymbol\omega_i \in \mathbb S^{d-1}$ uniformly
    \STATE $\mathbf p_i \gets r_i\,\boldsymbol\omega_i$ \COMMENT{node position in $\mathcal B^{d}$}
\ENDFOR

\STATE \textit{// Geometry‑aware weight kernel}
\FORALL{$(i,j)\in\{1,\dots,N\}^2$}
    \STATE $d_{ij} \gets \operatorname{arcosh}\ \!\Bigl(1+2\frac{\|\mathbf p_i-\mathbf p_j\|^2}{(1-\|\mathbf p_i\|^2)(1-\|\mathbf p_j\|^2)}\Bigr)$
    \STATE $W_{ij}\gets\exp\ \!\bigl(-d_{ij}/\sigma\bigr)$
\ENDFOR

\STATE \textit{// Row‑wise sparsification}
\FOR{$i \gets 1$ \TO $N$}
    \STATE keep the $\kappa$ largest entries in row $i$ of $\mathbf W$; set others to $0$
\ENDFOR

\STATE \textit{// Spectral normalisation to satisfy ESP}
\STATE $\rho \gets$ largest eigenvalue modulus of $\mathbf W$
\STATE $\mathbf W \gets (\varrho / \rho)\,\mathbf W$

\RETURN $\mathbf W$
\end{algorithmic}
\end{algorithm}

\begin{table*}[!ht]
\centering
\caption{NRMSE for autoregressive forecasting across multiple prediction horizons on canonical chaotic benchmarks.}
\label{tab:nrmse_horizons}
\resizebox{0.91\textwidth}{!}{
\begin{tabular}{l cc c c c ccc}
\toprule
\multirow{2}{*}{\textbf{Dataset}} & \multirow{2}{*}{\textbf{Horizon}}& \multicolumn{7}{c}{\textbf{NRMSE $\downarrow$}}\\
\cmidrule(lr){3-9}& & ESN& SCR& CRJ& SW-ESN& MCI-ESN &DeepESN &HypER\\
\midrule
 \multirow{5}{*}{Lorenz}& 200& 0.1050 ± 0.1586& 0.1061 ± 0.2249& 0.0494 ± 0.0243& 0.0224 ± 0.0143& 0.0035 ± 0.0026& 0.2746 ± 0.4853&\textbf{0.0002 ± 0.0001}\\
 & 400& 1.1795 ± 0.3572& 1.1718 ± 0.6335& 0.9309 ± 0.2552& 0.6417 ± 0.2895& 0.3086 ± 0.3725& 1.2616 ± 0.4225&\textbf{0.0124 ± 0.0104}\\
 & 600& 1.7705 ± 0.2058& 1.9862 ± 0.9876& 1.6462 ± 0.1888& 1.4118 ± 0.3359& 0.8031 ± 0.4980& 1.7473 ± 0.2495&\textbf{0.0483 ± 0.0414}\\
 & 800& 1.9840 ± 0.1512& 2.3832 ± 1.2483& 1.8971 ± 0.2022& 1.7489 ± 0.2176& 1.4190 ± 0.2962& 1.9723 ± 0.1619&\textbf{0.8865 ± 0.2899}\\
 & 1000& 2.1093 ± 0.2033& 2.5817 ± 1.4351& 2.0490 ± 0.3540& 1.8956 ± 0.2003& 1.6795 ± 0.2205&  2.0663 ± 0.1326&\textbf{1.2580 ± 0.2721}\\

 \midrule
 \multirow{5}{*}{Rössler}& 200& 0.0036 ± 0.0006& 0.0053 ± 0.0023& 0.0066 ± 0.0023& 0.0063 ± 0.0015& 0.0038 ± 0.0017& 0.0059 ± 0.0010&\textbf{0.0007 ± 0.0003}\\
 & 400& 0.0637 ± 0.1875& 0.0664 ± 0.1317& 0.0319 ± 0.0250& 0.0289 ± 0.0215& 0.0083 ± 0.0042& 0.0374 ± 0.0341&\textbf{0.0019 ± 0.0008}\\
 & 600& 0.0936 ± 0.2846& 0.1222 ± 0.2882& 0.0481 ± 0.0387& 0.0470 ± 0.0364& 0.0113 ± 0.0064& 0.0542 ± 0.0583&\textbf{0.0024 ± 0.0010}\\
 & 800& 0.1603 ± 0.4714& 0.2134 ± 0.5601& 0.0629 ± 0.0486& 0.0700 ± 0.0574& 0.0194 ± 0.0245& 0.0735 ± 0.0911&\textbf{0.0033 ± 0.0014}\\
 & 1000& 0.1963 ± 0.5000& 0.2494 ± 0.4606& 0.0812 ± 0.0499& 0.0932 ± 0.0830& 0.0312 ± 0.0509& 0.1095 ± 0.1243&\textbf{0.0061 ± 0.0031}\\
 \midrule
 \multirow{5}{*}{Chen-Ueta}& 200
& 1.6613 ± 0.7941& 1.6366 ± 0.4114& 1.6308 ± 1.0914& 1.5562 ± 0.2400& 1.1049 ± 0.3810& 1.8560 ± 0.5049&\textbf{0.0259 ± 0.0244}\\
 & 400
& 2.3164 ± 1.0948& 2.3481 ± 0.8614& 2.3479 ± 1.6356& 2.0338 ± 0.2648& 1.8995 ± 0.3987& 2.5622 ± 0.7545&\textbf{0.8992 ± 0.3583}\\
 & 600
& 2.6037 ± 1.1631& 2.7002 ± 1.1561& 2.5733 ± 1.8155& 2.2990 ± 0.1988& 2.1915 ± 0.3383& 2.7744 ± 0.9136&\textbf{1.5692 ± 0.3497}\\
 & 800
& 2.6638 ± 1.1837& 2.9429 ± 1.3532& 2.6705 ± 1.8522& 2.3862 ± 0.2291& 2.3021 ± 0.2791& 2.8746 ± 0.9632&\textbf{1.9195 ± 0.2265}\\
 & 1000& 2.6851 ± 1.1655& 3.0044 ± 1.5006& 2.7412 ± 1.8404& 2.3884 ± 0.1746& 2.3194 ± 0.2103& 2.8407 ± 0.9803&\textbf{2.0413 ± 0.1697}\\
 \bottomrule
\end{tabular}
}
\end{table*}

\begin{table*}[!ht]
\centering
\caption{NRMSE for forecasting tasks on real-world datasets ( $^*$ denotes open loop settings).}
\label{tab:nrmse_horizons_auto}
\resizebox{\textwidth}{!}{
\begin{tabular}{l cc c c c ccc}
\toprule
\multirow{2}{*}{\textbf{Dataset}} & \multirow{2}{*}{\textbf{Horizon}}& \multicolumn{7}{c}{\textbf{NRMSE $\downarrow$}}\\
\cmidrule(lr){3-9}& & ESN& SCR& CRJ& SW-ESN& MCI-ESN &DeepESN &HypER\\

\midrule
\multirow{3}{*}{MIT–BIH}& 200* &       2.3520 ± 0.5484 &  2.0300 ± 0.5225 &  1.5681 ± 0.3373 &  1.9976 ± 0.3539 &  1.1408 ± 0.0604 &  2.7377 ± 0.9642 &  \textbf{0.7412 ± 0.2972} \\   
& 500* &       1.7548 ± 0.3699 &  1.5411 ± 0.3469 &  1.2561 ± 0.2130 &  1.5351 ± 0.2293 &  1.0512 ± 0.0328 &  2.0153 ± 0.6588 &  \textbf{0.6735 ± 0.2018} \\   
& 1000* &       1.4852 ± 0.2254 &  1.3715 ± 0.2000 &  1.2103 ± 0.1132 &  1.3543 ± 0.1299 &  1.1262 ± 0.0156 &  1.6513 ± 0.4127 &  \textbf{0.7528 ± 0.1217} \\    

\midrule
\multirow{3}{*}{Sunspot (monthly)} 
&200*   &     0.4986 ± 0.0035  & 0.5626 ± 0.0215 &  0.5160 ± 0.0068  & 0.5041 ± 0.0036 &  \textbf{0.4870 ± 0.0003} &  0.4881 ± 0.0092  & 0.4903 ± 0.0035 \\   
&500*   &     0.3702 ± 0.0021  & 0.4197 ± 0.0105 &  0.3825 ± 0.0032  & 0.3731 ± 0.0020 &  0.3727 ± 0.0006 &  \textbf{0.3647 ± 0.0033}  & 0.3819 ± 0.0052 \\   
&1000*  &     0.3560 ± 0.0028  & 0.4143 ± 0.0117 &  0.3833 ± 0.0084  & 0.3643 ± 0.0049 &  \textbf{0.3361 ± 0.0002} &  0.3562 ± 0.0063  & 0.3488 ± 0.0033 \\      

\midrule
\multirow{3}{*}{Santa Fe Laser} 
&200*        &0.3014 ± 0.0046  & 0.3345 ± 0.0203 &  0.3127 ± 0.0103 &  0.3043 ± 0.0035&   0.3421 ± 0.0018 &  0.3057 ± 0.0074 &  \textbf{0.2853 ± 0.0123} \\   
&500*        &0.3107 ± 0.0015  & 0.3196 ± 0.0065 &  0.3120 ± 0.0036 &  0.3122 ± 0.0014&   0.3260 ± 0.0007 &  0.3105 ± 0.0026 &  \textbf{0.2905 ± 0.0077} \\   
&1000*       &0.2615 ± 0.0013  & 0.3046 ± 0.0194 &  0.2636 ± 0.0069 &  0.2529 ± 0.0011&   0.2737 ± 0.0004 &  0.2590 ± 0.0040 &  \textbf{0.2451 ± 0.0074} \\   

\midrule
\multirow{3}{*}{Sunspot (monthly)} 
&5     &     2.7590 ± 0.1903 &    3.0103 ± 0.6578  &   2.9065 ± 0.4330 &    \textbf{2.5280 ± 0.2193} &    2.9715 ± 1.0825&    2.6047 ± 0.2740   &  2.5474 ± 0.5361\\ 
&10    &     2.9977 ± 0.2709 &    2.7096 ± 0.4210  &   2.4177 ± 0.1901 &    2.6527 ± 0.3082 &    2.6360 ± 0.6055&    2.4866 ± 0.4580   &  \textbf{2.3770 ± 0.5746}\\ 
&15    &     3.9134 ± 0.4048 &    3.3159 ± 0.6076  &   2.9029 ± 0.2172 &    3.4978 ± 0.5451 &    3.0750 ± 0.7089&    3.0730 ± 0.7431   &  \textbf{2.8761 ± 0.8182}\\ 

\midrule
\multirow{3}{*}{Santa Fe Laser} 
&5  & 2.0033 ± 0.3560 & 1.6533 ± 0.3241& \textbf{1.1579 ± 0.4128} &  2.0322 ± 0.2692 &  2.3768 ± 0.1001 &  1.8717 ± 0.3477 & 1.4738 ± 0.3298 \\          
&10 & 2.2450 ± 0.4354 & 1.7234± 0.2513& 1.7210 ± 0.3957 &  2.1163 ± 0.2813 &  2.3586 ± 0.1106 &  2.1346 ± 0.3688 & \textbf{1.7108 ± 0.3346} \\          
&15 & 2.1451 ± 0.3156 & 2.1405 ± 0.2713& 2.4815 ± 0.9362 &  2.1235 ± 0.1442 &  2.1557 ± 0.0746 &  2.2228 ± 0.2123 & \textbf{1.9747 ± 0.4748} \\          

\bottomrule
\end{tabular}
}
\end{table*}

\paragraph{Polynomial Readout.} Following standard ESN methodology, we \emph{train} only an output layer that maps \(\mathbf{x}_t\) to the next state \(\mathbf{u}_{t+1}\). During training, the input signal \(\{\mathbf{u}_t\}_{t=1}^T\) is fed into the reservoir and the corresponding reservoir states \(\{\mathbf{x}_t\}_{t=T_w+1}^T\) are recorded after the washout. We adopt a polynomial expansion for the readout
  $\boldsymbol{\xi}_t
  \;=\;\bigl[\,x^1_t,\dots,x^N_t,\,(x^1_t)^2,\dots,(x^N_t)^2,\;1\,\bigr]^{\mathsf{T}},$
where the squared terms add a limited second-order nonlinearity, and the constant \(1\) term captures any bias \cite{Carroll2022, Ohkubo2024}. Hence, \(\boldsymbol{\xi}_t\in\mathbb{R}^{2N+1}\) and the readout matrix \(\mathbf{W}_{\mathrm{out}}\in\mathbb{R}^{m\times(2N+1)}\). The single-step-ahead predicted output is expressed as
  $\hat{\mathbf{u}}_{t+1}
  \;=\;\mathbf{W}_{\mathrm{out}}\;\boldsymbol{\xi}_t$.
After collecting \(\{\boldsymbol{\xi}_t, \mathbf{u}_{t+1}\}\) for \(t=T_{\mathrm{w}}+1,\dots,T\), we solve the \emph{ridge regression} problem
  $\min_{\mathbf{W}_{\mathrm{out}}}
  \;\sum_{t=T_{\mathrm{w}}+1}^{T}
    \bigl\| \mathbf{u}_{t+1} \;-\;\mathbf{W}_{\mathrm{out}}\;\boldsymbol{\xi}_t\bigr\|^{2}
   \;+\;\lambda\,\|\mathbf{W}_{\mathrm{out}}\|^{2}_{F},$
where \(\|\cdot\|_{F}\) denotes the Frobenius norm, and \(\lambda>0\) is the $L_2$-regularization coefficient. The closed-form analytical solution for the ridge regression yields
  $\mathbf{W}_{\mathrm{out}}^*
  \;=\;\mathbf{Y}\,\mathbf{\Xi}^{\mathsf{T}} (\mathbf{\Xi} \mathbf{\Xi}^{\mathsf{T}} + \lambda I)^{-1}$,
where \(\mathbf{Y}\) stacks the row vectors \(\mathbf{u}_{t+1}\), \(\mathbf{\Xi}\) stacks the column features \(\boldsymbol{\xi}_t\) and $I$ is the $(2N+1) \times (2N+1)$ identity matrix. The regularization term $\lambda I$ ensures that $\mathbf{\Xi} \mathbf{\Xi}^{\mathsf{T}} + \lambda I$ is invertible, thus guaranteeing a unique solution, while also penalizing overfitting. 

\paragraph{Operating Protocols.}
\emph{Washout:}  We start from $\mathbf{x}_0=\mathbf 0$ and iterate the leaky update for $T_{\mathrm w}$ steps without collecting data to allow transients to decay.

\emph{Teacher forcing:}  During training and validation, the true signal $\mathbf u_t$ is injected at every step, so the read-out learns from single-step errors while the reservoir itself remains open-loop and perfectly stabilised.

\emph{Autoregressive forecasting:}  For test-time prediction we close the loop, feeding the network’s own output $\hat{\mathbf u}_{t+1}=\mathbf{W}_{\text{out}}\mathbf{\xi}_t$ back as the next input.  This autonomous mode probes long-horizon fidelity: any geometric or spectral mismatch inside $W$ accumulates multiplicatively, so gains in Lyapunov-time accuracy translate directly into extended prediction windows.




\section{Experiments and Discussion}\label{experiments}

\paragraph{Setup.}
To evaluate the effectiveness of  HypER, we conduct a series of experiments on several canonical chaotic systems—standard benchmarks in nonlinear time series modeling— including the butterfly-shaped \emph{Lorenz–63} attractor (\(\lambda_{\max}\!\approx\!0.905\)) \cite{lorenz1963deterministic},  the slower \emph{Rössler} scroll (\(\lambda_{\max}\!\approx\!0.071\)) \cite{rossler1976equation}, the kink-dominated double scroll of \emph{Chua’s circuit} \cite{chua1986double}, the hyper-chaotic \emph{Chen–Ueta} flow with two positive exponents \cite{chen1999yet} and the infinite-dimensional delay-differential \emph{Mackey–Glass} system \cite{mackey1977oscillation}. All benchmark trajectories are generated with the LSODA integrator in \texttt{SciPy}’s \verb|odeint| at a fixed step size $\Delta t = 0.02$, producing $12,500$ samples per system.  The first $2000$ points are discarded as wash‑out to mitigate the effects of transient dynamics, the next $80\%$ used for training, and the final $20\%$ reserved for evaluation. Specifically, we integrate the Lorenz $(\sigma=10,\rho=28,\beta=\tfrac83;\,x_0=y_0=z_0=1.0)$, Rössler $(a=0.2,b=0.2,c=5.7;\,x_0=0.0,y_0=1.0,z_0=0.0)$, Chua $(\alpha=15.6,\beta=28,m_0=-1.143,m_1=-0.714;\,x_0=0.7,y_0=z_0=0.0)$ and Chen-Ueta $(a=35,b=3,c=28;\,x_0=y_0=z_0=0.1)$ systems, parameter regimes that ensure sustained chaotic behaviour in all benchmarks.

We benchmark HypER against several baseline models including ESN \cite{jaeger2001echo}, SCR \cite{li2024simplecyclereservoirsuniversal}, CRJ \cite{rodan2011simple}, SW-ESN \cite{kawai2019smallworld}, MCI-ESN \cite{liu2024minimum} and DeepESN \cite{gallicchio2020deepechostatenetwork}. All of these are single-reservoir models, except MCI-ESN, which employs two interconnected reservoirs, and DeepESN, which stacks multiple reservoirs in a hierarchical structure. The input weights $\mathbf{U}$ are drawn from a zero-meaned Gaussian distribution clipped to a small symmetric interval, for all models. We instantiate all on equal‑footing with 300  units (three $100$‑unit layers for DeepESN). Global hyperparameters—input weights $\mathbf{U}\!\sim\!\mathcal{N}(0,0.2^{2})$, spectral radius $\rho(\mathbf{W})=0.99$, leak rate $\alpha=0.8$ and ridge coefficient $\lambda=10^{-5}$—were selected through budgeted hyperparameter search tailored to the Lorenz system; analogous tuning was performed for other datasets. Model‑specific hyperparameters were also optimized using the same procedure \textit{(Further details are laid out in the supplementary file)}. All metrics are averaged over thirty independent random seeds. 


\begin{table}[!ht]
\centering
\caption{Normalized VPT and ADev for autoregressive forecasting over a $1000$-step horizon.}
\label{tab:vpt_adev}
\resizebox{0.37\textwidth}{!}{
\begin{tabular}{l lc c }
\toprule
{\textbf{Dataset}}& \textbf{Model}&\textbf{Norm. VPT $\uparrow$}&\textbf{ADev $\downarrow$}\\
\midrule
 \multirow{7}{*}{Lorenz}& ESN& 5.285 ± 0.72& 51.43 ± 17.56\\
 & SCR& 5.437 ± 1.19&51.67 ± 13.29\\
 & CRJ& 5.558 ± 0.18&43.83 ± 13.39\\
 & SW-ESN& 5.927 ± 0.95&35.70 ± 9.86\\
 & MCI-ESN& 8.049 ± 2.36&33.50 ± 14.57\\
 & DeepESN& 4.883 ± 1.15&54.43 ± 22.81\\
 & HypER& \textbf{12.215 ± 1.23}&\textbf{19.67 ± 6.82}\\

 \midrule
 \multirow{7}{*}{Rössler}& ESN
& 3.442 ± 1.42& 4.27 ± 6.23\\
 & SCR
& 2.857 ± 1.72&7.97 ± 6.02\\
 & CRJ
& 3.150 ± 1.15&3.03 ± 1.52\\
 & SW-ESN
& 3.104 ± 1.17&3.10 ± 1.87\\
 & MCI-ESN
& 3.427 ± 0.68&1.80 ± 1.52\\
 & DeepESN
& 3.177 ± 1.35&2.10 ± 1.63\\
 & HypER& \textbf{5.142 ± 0.76}&\textbf{1.27 ± 1.17}\\
 \midrule
 \multirow{7}{*}{Chen-Ueta}& ESN
& 2.006 ± 0.34&121.77 ± 20.48\\
 & 
SCR& 1.946 ± 0.41&130.20 ± 25.69\\
 & CRJ& 1.934 ± 0.33&101.23 ± 17.04\\
 & 
SW-ESN& 2.107 ± 0.18&104.87 ± 14.99\\
 & MCI-ESN& 2.628 ± 0.44&97.57 ± 14.77\\
 & 
DeepESN& 1.635 ± 0.60&138.37 ± 40.28\\
 & HypER& \textbf{5.067 ± 0.70}&\textbf{80.60 ± 9.25}\\
 \bottomrule
\end{tabular}
}
\end{table}

\begin{table}[!ht]
\centering
\caption{Ablation on \emph{manifold geometry}, \emph{embedding dimension}, and—in the Poincaré disc setting—\emph{node-sampling strategy} ($1000$-step Lorenz forecasting). Here, `Euc. / Uni.' refers to uniform sampling in Euclidean space; `Hyp. / Euc.-Iso' refers to Euclidean-volume isotropic sampling but in hyperbolic space; `Hyp. / Hyp.-Uni.' refers to uniform sampling in hyperbolic space.}
\label{tab:ablation_full}
\resizebox{0.47\textwidth}{!}{
\begin{tabular}{lcccc}
\toprule
\textbf{Manifold / Sampling}&\textbf{Dim.\,$d$}& \textbf{NRMSE$\downarrow$} & \textbf{Norm. VPT$\uparrow$} & \textbf{ADev$\downarrow$} \\
\midrule
Euc. / Uni. &2& 1.3981& 11.191& 23.13\\
Hyp. / Euc.–Iso.&2& 1.3540& 11.947& 23.43\\
Hyp. / Hyp.–Uni. &2& \textbf{1.2580}& \textbf{12.215}& \textbf{19.67}\\
Hyp. / Hyp.–Uni. &3& 1.3702& 11.192& 21.37\\
Hyp. / Hyp.–Uni. &4& 1.4639& 10.817& 22.03\\
\bottomrule
\end{tabular}
}
\end{table}

\begin{table}[!ht]
\centering
\caption{Performance comparison of models on the Lorenz dataset using mixed heterogeneous node-wise activation functions.  MCI-ESN is excluded as its built-in complementary sine-cosine activations preclude a fair comparison.}
\label{tab:vpt_lorenz_tanh_mixed}
\resizebox{0.33\textwidth}{!}{
\begin{tabular}{lccc}
\toprule
\textbf{Model} & \textbf{NRMSE} $\downarrow$& \textbf{Norm. VPT} $\uparrow$ &\textbf{ADev} $\downarrow$\\
\midrule
ESN        & 1.9436& 5.933&42.57\\
SCR        & 1.7270& 7.086&32.30\\
CRJ        & 1.6718& 7.423&31.97\\
SW-ESN     & 1.4974& 9.636&26.20\\
DeepESN    & 2.0225& 5.257&40.53\\
HypER & \textbf{1.2580}& \textbf{12.215}&\textbf{19.67}\\
\bottomrule
\end{tabular}
}
\end{table}

\begin{table}[!ht]
\centering
\caption{Ablation over HypER parameters ($1000$-step autoregressive forecasting of the Lorenz dataset).}
\label{tab:ablation_hyper_params}
\resizebox{0.46\textwidth}{!}{
\begin{tabular}{llccc}
\toprule
 \textbf{Parameter}&\textbf{Setting}& \textbf{NRMSE$\downarrow$} & \textbf{Norm. VPT$\uparrow$} & \textbf{ADev$\downarrow$}\\
\midrule
 \multirow{5}{*}{Kernel Width}&$\sigma = 0.05$& 1.3421& 12.052& 24.57\\
 & $\sigma = 0.1$& \textbf{1.2580}& \textbf{12.215}&\textbf{19.67}\\
 &$\sigma = 0.2$& 1.3311& 12.101& 20.87\\
 &$\sigma = 0.3$& 1.3668& 11.591& 23.63\\
 &$\sigma = 0.5$& 1.4257& 11.524& 23.60\\
\midrule
 \multirow{5}{*}{Row-level Sparsity}&$\kappa = 10$& 1.2827& 12.210& 19.77\\
 &$\kappa = 20$& 1.2923& 12.211& 20.67\\
 &$\kappa = 40$& \textbf{1.2580}& \textbf{12.215}& 19.67\\
 &$\kappa = 60$& 1.2665& 12.213& 18.67\\
 &$\kappa = 80$& 1.2761& 12.212& \textbf{17.70}\\
\bottomrule
\end{tabular}
}
\end{table}

\paragraph{Metrics.}
\textit{Normalized Root Mean Squared Error (NRMSE):} We assess the model’s prediction accuracy using NRMSE. Given a true trajectory \(\{\mathbf{u}_t\}_{t=1}^T \subset \mathbb{R}^m\) and a model-predicted trajectory \(\{\hat{\mathbf{u}}_t\}_{t=1}^T\), the NRMSE is defined as
    $\sqrt{\frac{\sum_{t=1}^T \|\mathbf{u}_t\;-\;\hat{\mathbf{u}}_t\|^2}{\sum_{t=1}^T \|\mathbf{u}_t\;-\;\overline{\mathbf{u}} \|^2}}$.

\textit{Valid Prediction Time (VPT):} 
To quantify long-horizon accuracy in a chaotic setting, we compute \emph{valid prediction time} \(T_{\mathrm{VPT}}\) as follows. We first define the time-averaged variance of \(\mathbf{u}_t\) by centering about its mean \(\overline{\mathbf{u}} \in \mathbb{R}^m\) and computing  
  $\bigl\langle\|\mathbf{u}_t-\overline{\mathbf{u}}\|^2\bigr\rangle$ 
 over the entire prediction horizon. Next, we define the \emph{normalized prediction error} at time \(t\) by
  $\delta(t)
  \;=\; 
  \frac{\|\mathbf{u}_t\;-\;\hat{\mathbf{u}}_t\|^2}
       {\bigl\langle\|\mathbf{u}_t-\overline{\mathbf{u}}\|^2\bigr\rangle}$.
For a task-specific threshold \(\theta\) (we use \ \(\theta=0.4\)  \cite{Pathak2018ModelFree}), \(T_{\mathrm{VPT}}\) is the earliest time \(t\) at which \(\delta_t\) exceeds \(\theta\). If \(\delta_t\le\theta\) for all \(t\), then \(T_{\mathrm{VPT}}\) is taken to be the final available time. Finally, to relate \(T_{\mathrm{VPT}}\) to the system’s characteristic divergence, we introduce the \emph{Lyapunov time} \(T_{L}=1/\lambda_{\max}\), where \(\lambda_{\max}>0\) is the LLE. The ratio \(\tfrac{T_{\mathrm{VPT}}}{T_{L}}\) indicates how many Lyapunov $e$-foldings the model’s predictions remain within the acceptable error threshold.

\textit{Attractor Deviation (ADev):}
To measure how well the predicted trajectory resembles the true one in phase‑space, we partition the domain into a uniform grid of $N_x\times N_y\times N_z$ cubes \cite{Zhai2023ResonanceML}.  For each cube $(i,j,k)$ we record an occupancy indicator  
$\chi^{\text{true}}_{ijk},\chi^{\text{pred}}_{ijk}\in\{0,1\}$,  
equal to 1 if the true (respectively predicted) trajectory visits that cube at least once during the prediction window and 0 otherwise.  The \emph{attractor deviation} is then
  $\mathrm{ADev}
  \;=\;
  \sum_{i=1}^{N_x}\sum_{j=1}^{N_y}\sum_{k=1}^{N_z}
        \bigl|\,
          \chi^{\text{true}}_{ijk}-\chi^{\text{pred}}_{ijk}
        \bigr|$.
ADev counts the number of cubes that are visited by exactly one of the two trajectories (the symmetric‐difference volume).  ADev $=0$ indicates perfect geometric agreement, whereas larger values reflect increasing mismatch.

\textit{Power Spectral Density (PSD):} While previous evaluation metrics assessed the models from a time-domain perspective, we now turn to a \emph{frequency-domain analysis} to evaluate how well the reconstructed trajectory preserves the spectral characteristics of the true signal. Specifically, we examine PSD of each component of the trajectory using \emph{Welch's Method}. Initially, a \emph{Hamming Window} $w_k$ is applied to the discrete signal $z_k = \mathbf{z}(k\,\Delta t)$ to minimize spectral leakage by tapering the edges of the signal. The PSD is then computed as the squared magnitude of the \emph{Fast Fourier Transform (FFT)} of the windowed signal $z_k \cdot w_k$, averaged over all segments
    $S(\omega) = \mathbb{E} \left[ \left| \mathcal{F} \left\{ z_k \cdot w_k \right\} \right|^2 \right]$.

\paragraph{Datasets.}

Our real‐world testbed spans three orders of temporal scale.  The \emph{Sunspot Monthly} series provides a 270-year, quasi-periodic benchmark whose Schwabe and Gleissberg modulations have long served as a litmus for nonlinear predictors; we standardise the SILSO v2.0 index and withhold the last 84 years for out-of-sample scoring \cite{SILSO2020}.  , we use the first 4,500 samples for training and the following 1,000 for testing in our forecasting setup \cite{Jaeger2007leaky,Weigend1994}.  Biomedical variability is probed with the \emph{MIT–BIH Arrhythmia} corpus, where 48 annotated ECG records sampled at 360 Hz are partitioned 80/20; this sequence mixes quasi-periodic sinus segments with abrupt ectopic events, stressing robustness to morphological outliers \cite{Goldberger2000,MoodyMark2001}. The aforesaid datasets are normalized to the $[0,1]$ range and processed using 3-dimensional delay embedding prior to training.  \textit{(details in supplementary)}

\begin{figure}[!ht]
    \centering
    \includegraphics[width=0.97\linewidth,
                     height=0.12\textheight]{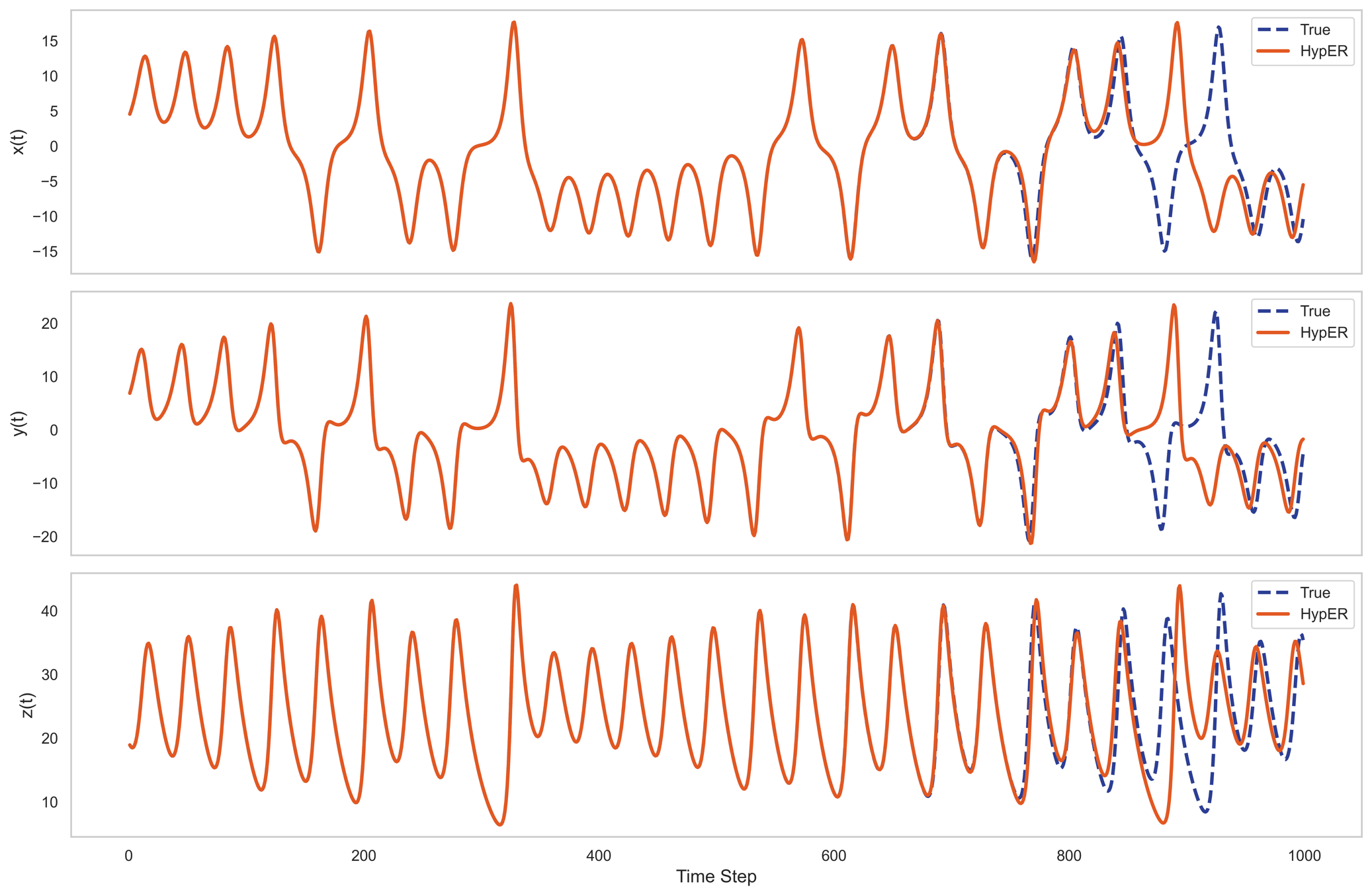}

    \caption{Predicted trajectories by HypER alongside ground truth for the test segment of the Lorenz system under autoregressive forecasting.}
    \label{fig:predicted-trajectories-hyper}
\end{figure}

\begin{figure}[ht]
  \centering

  \begin{minipage}[b]{0.49\linewidth}
    \centering
    \includegraphics[width=0.75\textwidth, height=0.66\textwidth]{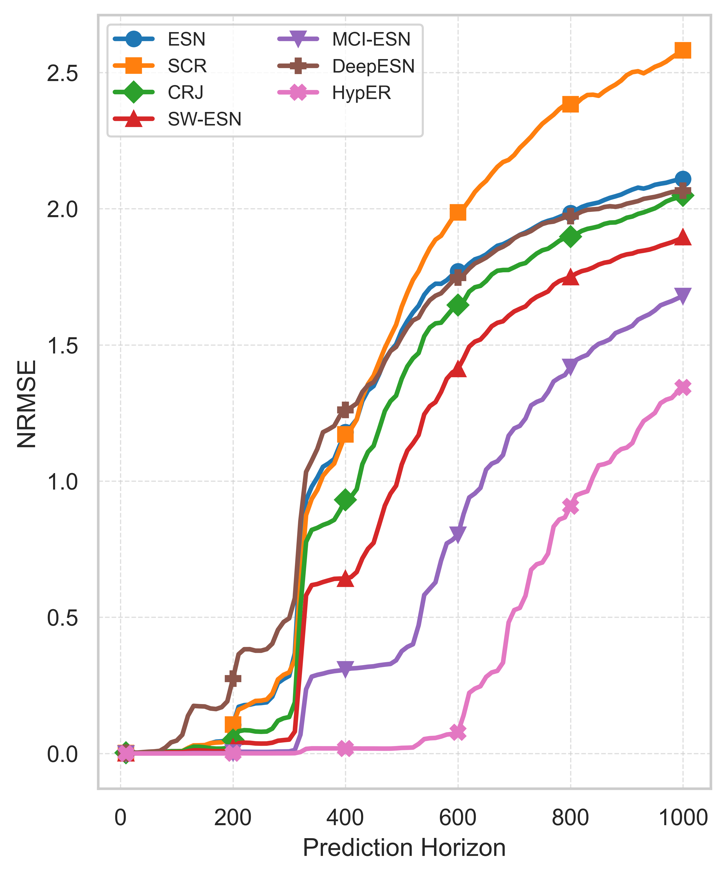}
    \textbf{\\(a)} Lorenz system
    \label{fig:nrmse_lorenz}
  \end{minipage}
  \hfill
  \begin{minipage}[b]{0.49\linewidth}
    \centering
    \includegraphics[width=0.75\textwidth, height=0.66\textwidth]{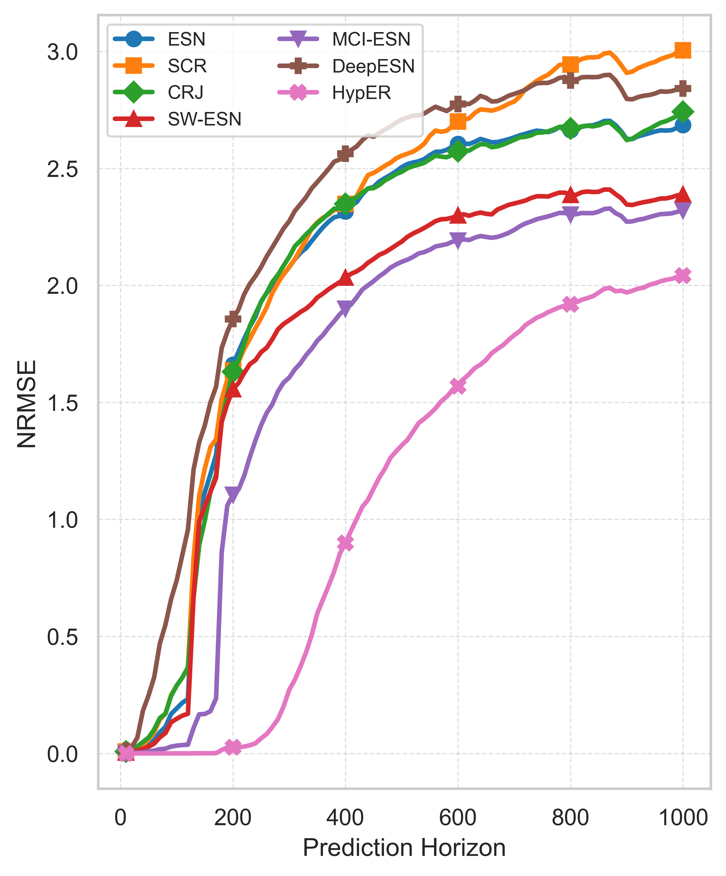}
   \\ \textbf{(b)} Chen-Ueta system
    \label{fig:nrmse_chen}
  \end{minipage}

  \caption{NRMSE for autoregressive predictions across multiple horizons.}
  \label{fig:nrmse}
\end{figure}

\begin{figure}[!ht]
  \centering

  \begin{minipage}[b]{0.49\linewidth}
    \centering
    \includegraphics[width=0.75\textwidth, height=0.67\textwidth]{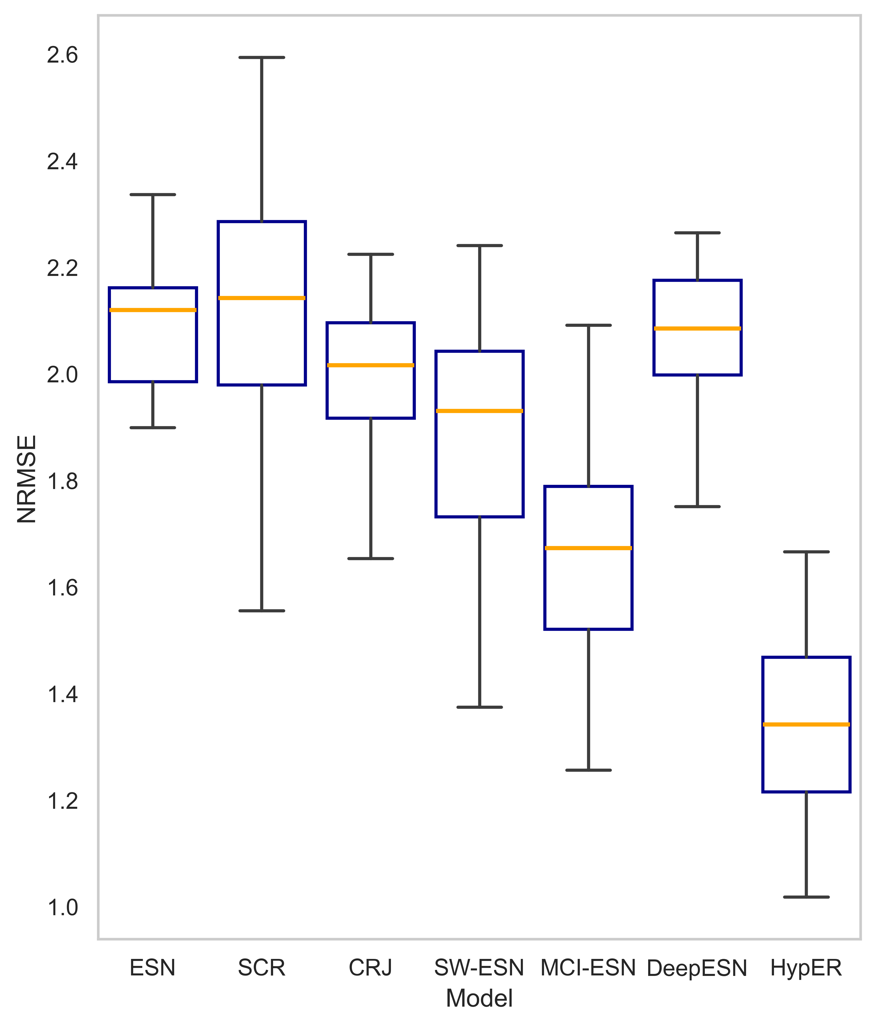}
   \\ \textbf{(a)} Lorenz system
    \label{fig:nrmse_lorenz2}
  \end{minipage}
  \hfill
  \begin{minipage}[b]{0.49\linewidth}
    \centering
    \includegraphics[width=0.75\textwidth, height=0.67\textwidth]{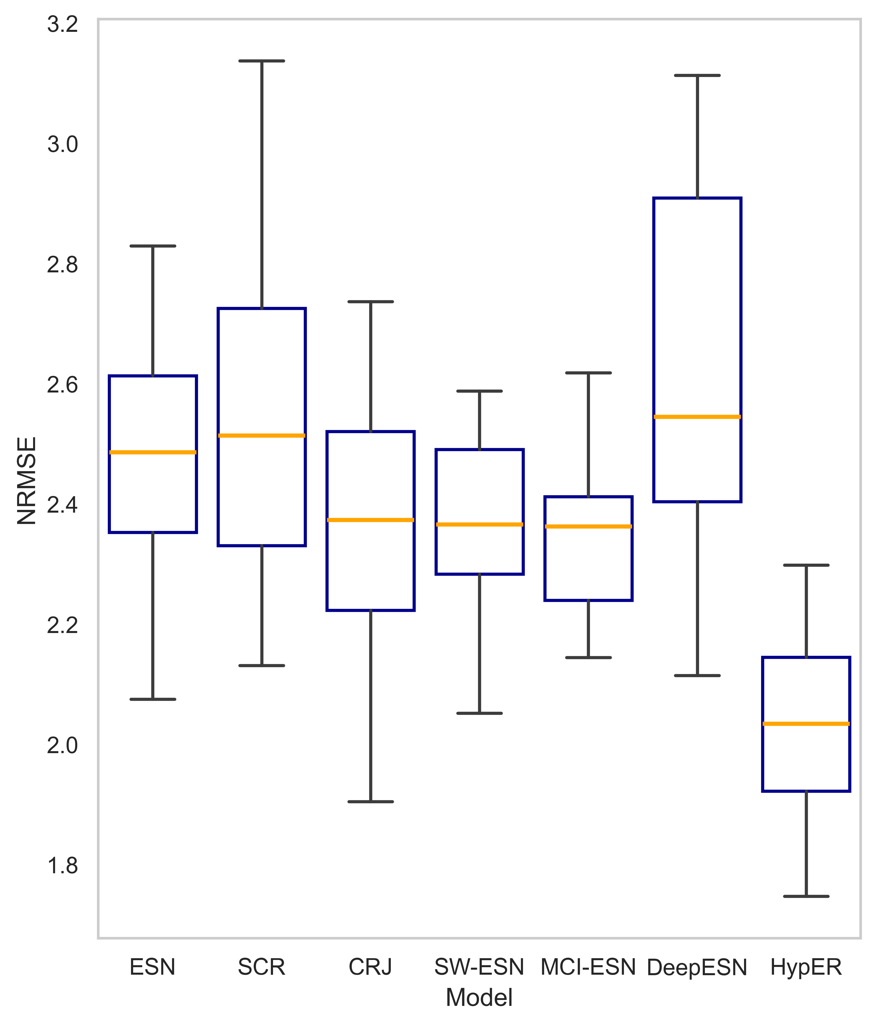}
   \\ \textbf{(b)} Chen-Ueta system
    \label{fig:nrmse_chen2}
  \end{minipage}

  \caption{Boxplots of NRMSE for autoregressive predictions at a 1000-step prediction horizon.}
  \label{fig:nrmse2}
\end{figure}

\begin{figure}[ht]
  \centering

 \begin{minipage}[b]{0.49\linewidth}
    \centering
    \includegraphics[width=\textwidth, height=0.75\textwidth]{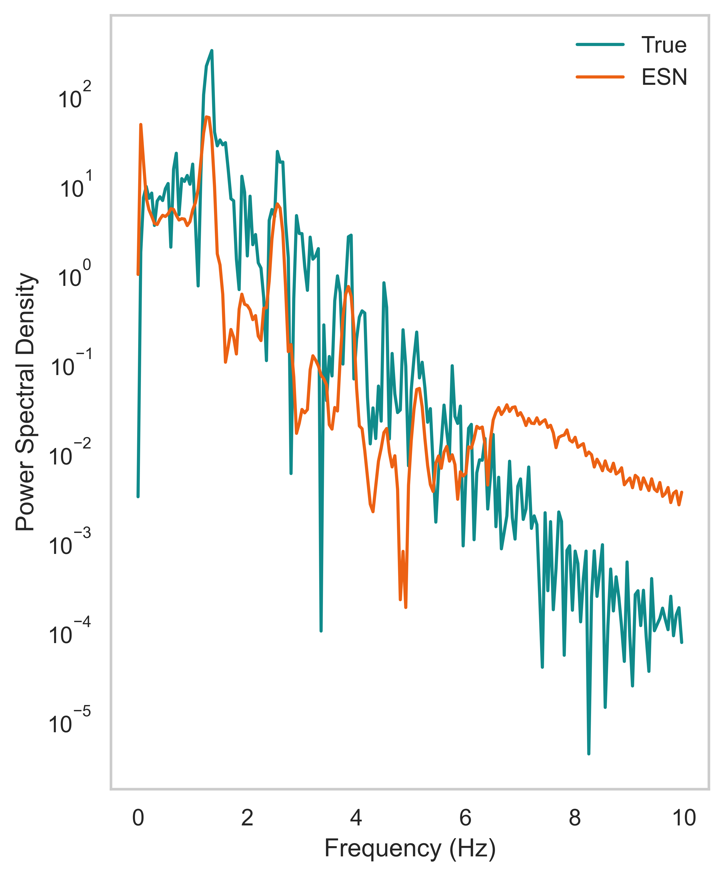}
    \textbf{(a)} Standard ESN
    \label{fig:psd_lorenz_esn}
  \end{minipage}
  \hfill
  \begin{minipage}[b]{0.49\linewidth}
    \centering
    \includegraphics[width=\textwidth, height=0.75\textwidth]{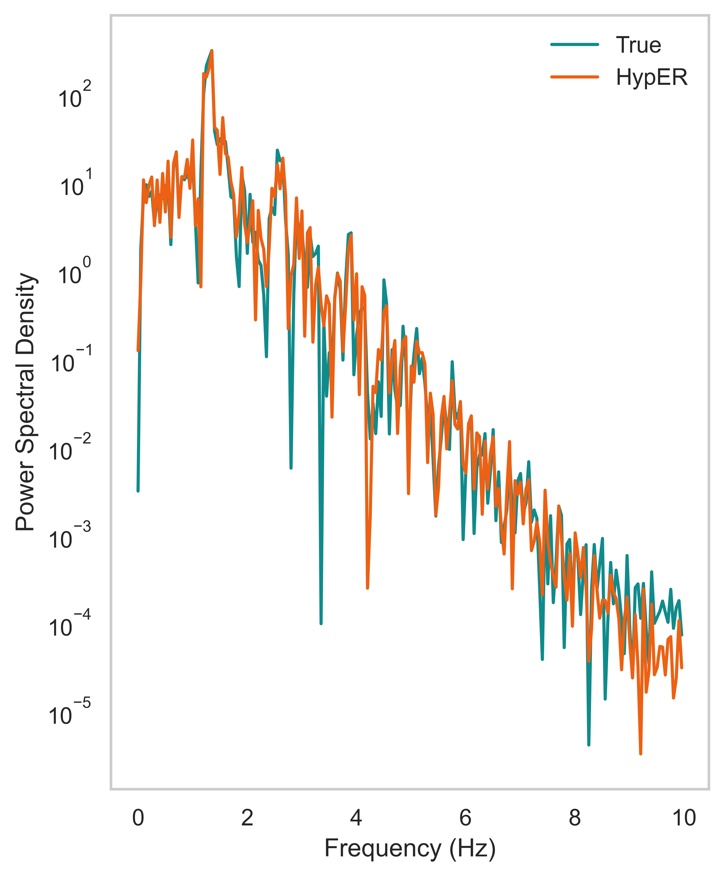}
    \textbf{(b)} Proposed HypER
    \label{fig:psd_lorenz_hypER}
  \end{minipage}

  \caption{PSD plots of autoregressive predictions at a 1000-step horizon when both networks are driven by the Lorenz system. }
  \label{fig:psd}
\end{figure}

\paragraph{Results.}
On chaotic testbeds
(Table \ref{tab:nrmse_horizons}),   
for Lorenz-63,  NRMSE increases rapidly with horizon for all Euclidean reservoirs, exceeding one (fully de-correlated forecasts) after \(\sim\!6T_L\).  By contrast \textsc{HypER} holds sub-percent error out to 800 steps on Lorenz (cf. Fig. \ref{fig:predicted-trajectories-hyper}) and keeps NRMSE one order of magnitude lower than the next best model on Rössler at every horizon, confirming that the hyperbolic expansion factor \(\beta(\sigma)>1\) indeed delays error amplification.  Even on the stiff hyper-chaotic Chen-Ueta attractor (cf. Figs. \ref{fig:nrmse}, \ref{fig:nrmse2}), where all baselines saturate near the aperiodic variance floor, HypER cuts long-horizon error by \(\approx\!12\%\) at 1000 steps, validating the theoretical claim that negative curvature raises the minimum Jacobian gain while preserving ESP. 
As Fig. \ref{fig:psd} shows, HypER retains the true Lorenz power spectrum, whereas the standard ESN spectrum collapses into spurious high-frequency noise.

On real-world data
(Table \ref{tab:nrmse_horizons_auto}),  
in fully open-loop cardiac (MIT-BIH) and laser-chaos benchmarks—both known to possess positive Lyapunov spectra—HypER cuts the best baseline NRMSE by $\approx 35\%$ at every horizon, showing that the geometric bias translates beyond synthetic flows.  On smoother solar-cycle data, the advantage disappears, aligning with theory: when the generator is quasi-periodic rather than chaotic, enlarging \(\lambda_{\min}(W)\) is unnecessary, so hyperbolic wiring behaves like a neutral prior.  Overall, the tables demonstrate that HypER’s curvature-controlled expansion preserves ESP yet provides exploitable separation exactly where standard reservoirs struggle. On the highly non-stationary Santa Fe laser series, HypER outperforms the nearest Euclidean reservoir by roughly 7\% across closed-loop horizons, indicating that its curvature-induced expansion remains beneficial even in noisy, broadband experimental chaos.

Across all three chaotic benchmarks HypER lifts the normalised VPT far beyond every Euclidean reservoir (Table \ref{tab:vpt_adev}), enlarging the window of trajectory fidelity by roughly \(+49\,\%\) on Rössler, \(+52\,\%\) on Lorenz, and a full \(2\times\) on the stiff Chen–Ueta flow.  At the same time it slashes ADev by \(41\%\) on Lorenz and about one-third on Rössler, while still shaving almost \(17\%\) off the Chen baseline.  These dual gains confirm the theoretical picture: the curvature-controlled lower bound on the Jacobian’s smallest singular value extends the period during which closed-loop dynamics track the true attractor, and the concomitant drop in ADev shows that this stability is achieved without sacrificing pointwise spatial accuracy.
\textit{(Results for open-loop setting are presented in the supplementary file.)}

\paragraph{Ablation.}
 
Holding reservoir size fixed (Table \ref{tab:ablation_full}), moving from a flat Euclidean lattice to the Poincaré disc cuts 1000-step NRMSE by \(3\%\) even when the nodes are still placed with Euclidean-volume sampling; switching to \emph{hyperbolic-uniform} sampling supplies the full curvature bias and yields an additional 7\% drop in error together with the best VPT and ADev.  Increasing the latent dimension beyond the disc (to \(d=3,4\)) weakens these gains, confirming the theorem’s prediction that the Jacobian expansion factor \(\beta(\sigma)\) is maximised when curvature is concentrated rather than spread over extra dimensions.

Replacing a homogeneous tanh reservoir by a tanh–sine–linear mix lifts every Euclidean baseline (Table \ref{tab:vpt_lorenz_tanh_mixed}) but lifts HypER most: its NRMSE is \(16\%\) lower, and VPT is \(27\%\) higher than the best graph-structured ESN (a ReLU variant is included for ablation).  This indicates that the curvature-induced amplification preserves the diverse nonlinear signatures instead of letting them collapse under ESP constraints.
The kernel width \(\sigma\) shows a clear optimum around \(0.1\); narrower kernels (\(\sigma<0.05\)) under-connect the graph and reduce VPT (Table \ref{tab:ablation_hyper_params}), while wider kernels dilute curvature and push \(\beta(\sigma)\) back toward unity.  Row-level sparsity \(\kappa\) is broad-tolerant: performance plateaus between 40 – 80 neighbours, with denser reservoirs slightly improving ADev but not NRMSE, confirming that HypER’s advantages arise from geometry rather than sheer connectivity.

\section{Conclusion} \label{sec:conclusion}

We introduced an RC framework—\textit{HypER}—that operates on a negatively curved manifold to capture the exponential divergence inherent in chaotic systems. By mapping reservoir nodes into the Poincaré ball, we explicitly encode geometric properties that mirror the local ``stretch‐and‐fold'' mechanism typical of chaos. Through both theoretical motivation and empirical evaluation on benchmark chaotic systems, we have shown that endowing the reservoir with hyperbolic geometry can extend valid prediction horizons beyond what is typically achievable with Euclidean or topologically random reservoirs.


Several important directions remain open for further research. First, while we have focused on select chaotic attractors, other higher‐dimensional flows or spatiotemporal PDEs may exhibit additional structures that can benefit from more general hyperbolic embeddings or layer‐by‐layer manifold compositions. Second, more sophisticated sampling schemes—such as adaptive node placement or non-uniform radial distributions—could enhance the reservoir’s ability to capture specific dynamical modes within highly complex attractors. Third, integrating the hyperbolic reservoir architecture with modern RC extensions (e.g., output feedback, hierarchical readouts, or online adaptation) might further improve long‐horizon stability. Finally, exploring systematic links between Lyapunov exponents and the negative curvature parameter offers a potential route to tailor the reservoir’s geometry to specific tasks.

\bibliography{ref}

\appendix

\section{Proofs of Lemmas}  

\begin{lemma}[Spectrum of the hyperbolic kernel]
Let \(\{p_i\}_{i=1}^N\subset\mathcal B^d\) be distinct points.
Set
\(\displaystyle
  \delta=\min_{i\neq j}d_{\mathbb H}(p_i,p_j).
\)
Then  
\[
  \lambda_{\min}(\widetilde W)\;\ge\;1-(N-1)e^{-\delta/\sigma},
  \quad
  \rho(\widetilde W)\;\le\;1+(N-1)e^{-\delta/\sigma}.
\]
\end{lemma}

\begin{proof}
     Because \(d_{\mathbb H}(p_i,p_j)=d_{\mathbb H}(p_j,p_i)\) and the exponential is positive,  
   \(\widetilde W\) is a real symmetric matrix with strictly positive entries.
   Hence its eigenvalues are real and can be ordered 
   \(\lambda_1\le\cdots\le\lambda_N\); we set
   \(\lambda_{\min}:=\lambda_1,\;\rho(\widetilde W):=\lambda_N\).
By Gershgorin disc theorem \cite{Gershgorin1931, HornJohnsonMatrix}, it follows that  
   for any square matrix \(A=(a_{ij})\) every eigenvalue \(\lambda\) lies in at least one Gershgorin disc  
   \(\mathcal D_i:=\{z\in\mathbb C:\;|z-a_{ii}|\le R_i\}\)  
   with radius \(R_i:=\sum_{j\neq i}|a_{ij}|\).  Because \(\widetilde W\) is symmetric and real, its eigenvalues lie on the real axis inside those intervals.
 The kernel gives \(d_{\mathbb H}(p_i,p_i)=0\); therefore centre of each disc  is 
   \(
     \widetilde W_{ii}=e^{-0/\sigma}=1\quad\text{for all }i.
   \)
   Off–diagonal terms satisfy  
   \(
     \widetilde W_{ij}=e^{-d_{\mathbb H}(p_i,p_j)/\sigma}
     \;\le\;
     e^{-\delta/\sigma}\quad(i\neq j),
   \)
   because \(d_{\mathbb H}(p_i,p_j)\ge\delta\).  Consequently, we get uniform bound on disc radii
   \[
     R_i
       =\sum_{j\neq i}\widetilde W_{ij}
       \le (N-1)e^{-\delta/\sigma}\quad\text{for all }i.
   \]
Next we work out the intervals containing the spectrum.
 Since all Gershgorin discs are centred at  
  \(c_i=\widetilde W_{ii}=1\), and every disc has radius  
  \(R_i=\sum_{j\neq i}\widetilde W_{ij}\le (N-1)e^{-\delta/\sigma}\), we
  call this uniform upper bound \(R_{\max}:=(N-1)e^{-\delta/\sigma}\).
Because \(\widetilde W\) is real-symmetric, every eigenvalue lies on the real axis.  
For a given row \(i\) the Gershgorin disc therefore collapses to the closed interval  
\[
\mathcal I_i
  \;=\;
  \bigl[c_i-R_i,\;c_i+R_i\bigr]
  \;=\;
  \bigl[\,1-R_i,\;1+R_i\bigr].
\]
Indeed, each interval is centred at the same point \(1\), and every \(R_i\) is \(\le R_{\max}\).
Hence every interval \(\mathcal I_i\) is contained in the single
“worst-case” interval
\[
\bigl[\,1-R_{\max},\;1+R_{\max}\bigr]
  \;=\;
  \bigl[\,1-(N-1)e^{-\delta/\sigma},\;
         1+(N-1)e^{-\delta/\sigma}\bigr].
\]
Because the Gershgorin theorem guarantees that every eigenvalue of
\(\widetilde W\) lies in \emph{some} \(\mathcal I_i\), it follows that
\emph{all} eigenvalues lie in their common super-interval.  
Taking the left end-point gives a global lower bound on
\(\lambda_{\min}(\widetilde W)\); taking the right end-point gives a
global upper bound on
\(\lambda_{\max}(\widetilde W)=\rho(\widetilde W)\) as
\[
\boxed{\,
\lambda_{\min}(\widetilde W)\;\ge\;
  1-(N-1)e^{-\delta/\sigma}\,},\quad
\boxed{\,
\rho(\widetilde W)\;\le\;
  1+(N-1)e^{-\delta/\sigma}\,}.
\]
   Because \(\delta>0\) for distinct points, the right-hand side of the first inequality is strictly positive whenever  
   \(e^{-\delta/\sigma}<(N-1)^{-1}\).  This condition is easily met in practice for moderate \(N\) once \(\sigma\) is chosen on the same order as or smaller than \(\delta\), confirming that \(\widetilde W\) is well-conditioned from below.
\end{proof}

\begin{figure}[!ht]
  \centering

  \begin{minipage}[t]{0.49\linewidth}
    \centering
    \includegraphics[width=1.1\textwidth]{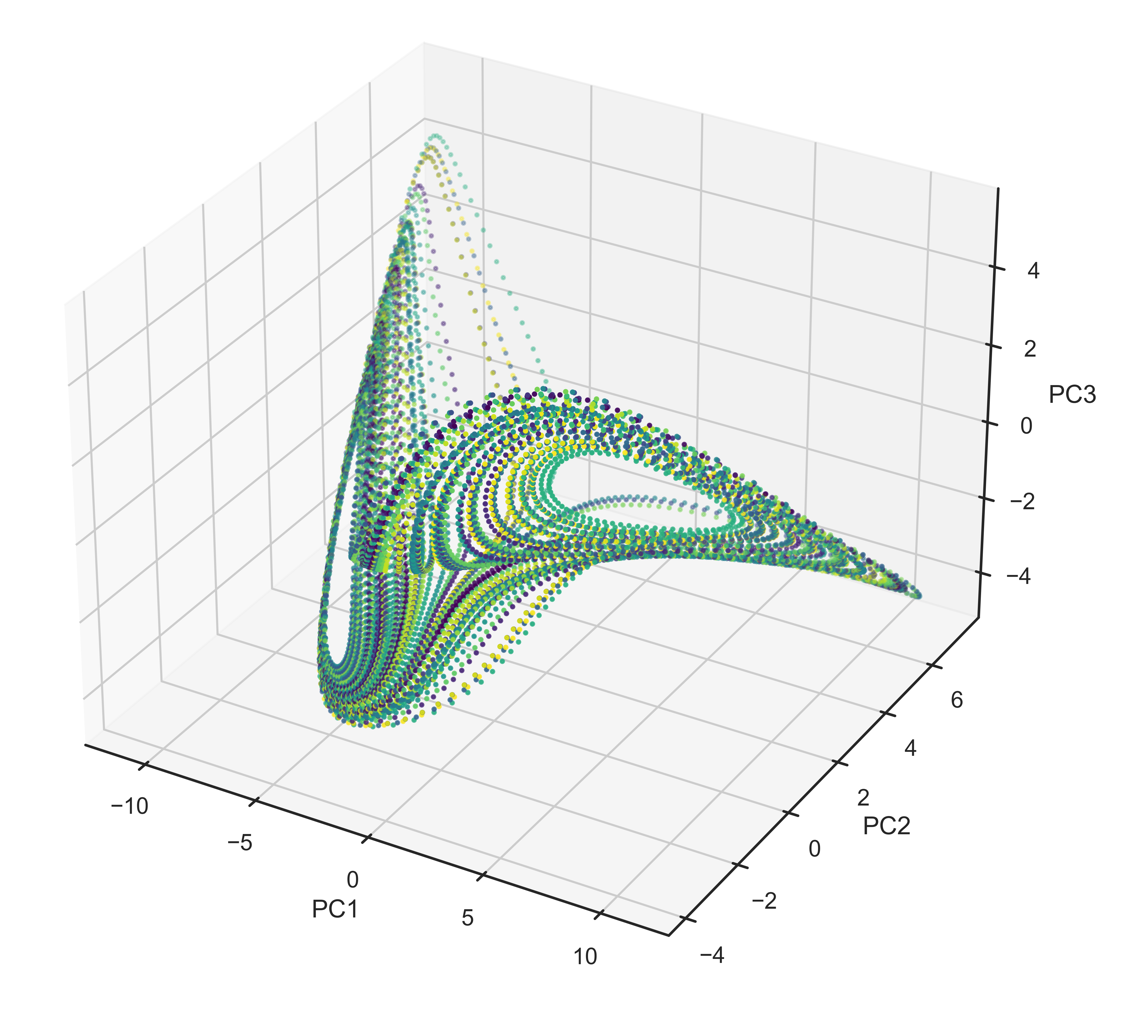}\\
    \textbf{(a)} Standard ESN
    \label{fig:sub1b}
  \end{minipage}
  \hfill
  \begin{minipage}[t]{0.49\linewidth}
    \centering
    \includegraphics[width=1.1\textwidth]{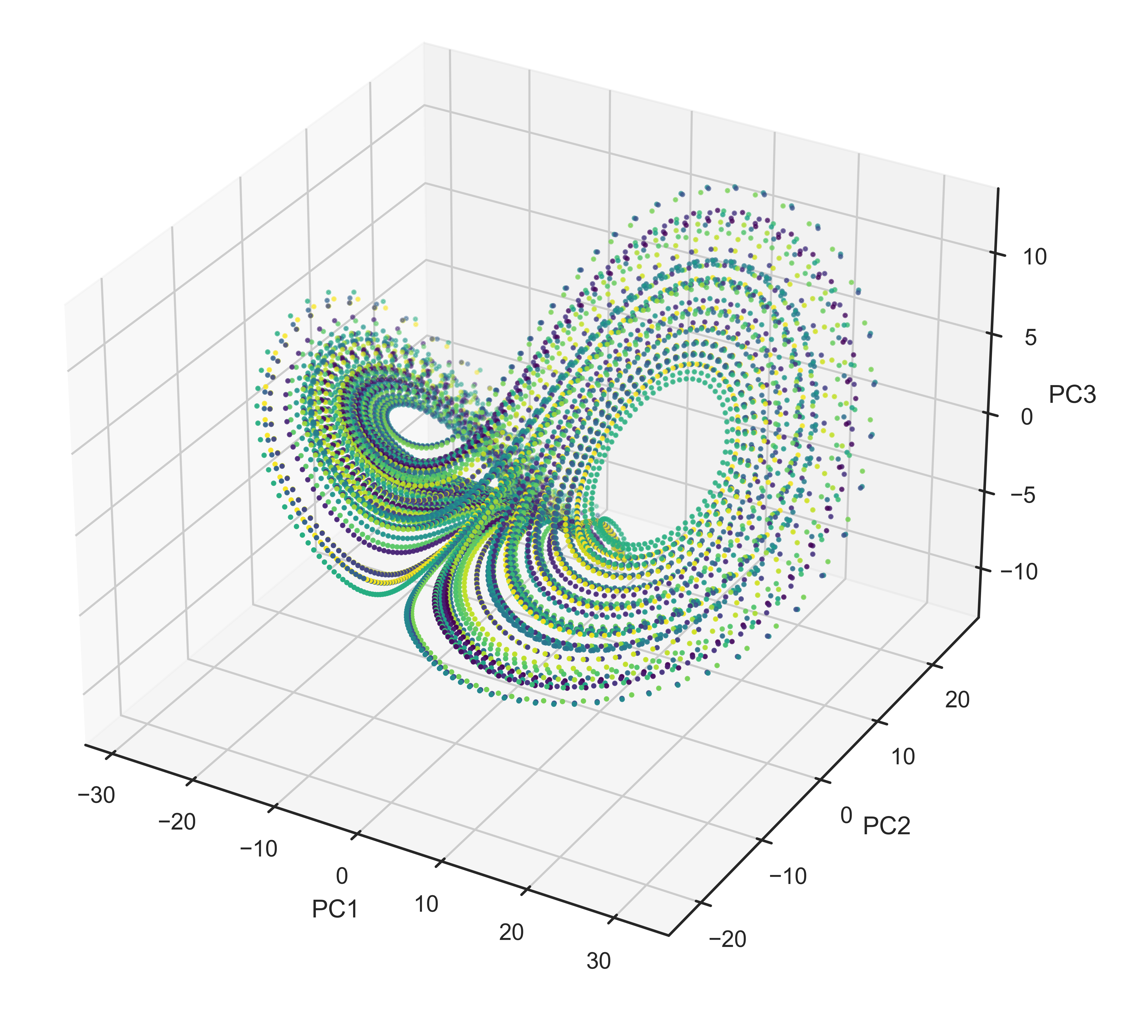}\\
    \textbf{(b)} Proposed HypER
    \label{fig:sub3b}
  \end{minipage}

  \caption{Three‑dimensional PCA projections of high-dimensional reservoir
states (\emph{no read-out training applied}) for (a) a standard ESN and (b) the proposed HypER, when both networks are driven by the Lorenz system.}
  \label{fig:lorenz_comparison_3d}
\end{figure}

\begin{lemma}[Linearized forward sensitivity]\label{lemma2b}
Let  
\(
x_{t+1}\;=\;(1-\alpha)\,x_t+\alpha\,\phi\ \!\bigl(Wx_t+Uu_t\bigr),
 0<\alpha\le 1,
\)
be the leaky-ESN state equation.  
Assume  
\(W\in\mathbb R^{N\times N}\) is symmetric, obtained from the exponential kernel then spectrally rescaled so that its spectral radius is \(\rho(W)<1\); denote its smallest eigenvalue by \(\lambda_{\min}(W)\ge 0\);
the activation \(\phi\) is \(C^{1}\) with derivative bounded on the reachable domain by  
  \(
    0<m\;\le\;\phi'(z)\;\le\;L<\infty.
  \)
For any current input \(u\) and reservoir state \(x\) define  
\[
J(x)\;=\; \frac{\partial x_{t+1}}{\partial x_t}
        \;=\;(1-\alpha)I+\alpha D_\phi(x)\,W,
\] where
\(D_\phi(x)=\operatorname{diag}\ \!\bigl(\phi'(Wx+Uu)\bigr).
\)
Then  
\[
\boxed{\;
  s_{\min}\bigl(J(x)\bigr)\;\ge\;
  \sqrt{\frac{m}{L}}\Bigl[(1-\alpha)+\alpha m\,\lambda_{\min}(W)\Bigr]
\;}
\]
and  
\[
\boxed{\;
  \|J(x)\|\;\le\;
  \sqrt{\frac{L}{m}}\Bigl[(1-\alpha)+\alpha L\,\rho(W)\Bigr].
\;}
\]

\end{lemma}

\begin{proof}
    We factorize the Jacobian through a similarity transform.  
Let  
\(D:=D_\phi(x)=\operatorname{diag}(d_1,\dots,d_N)\) with \(d_i\in[m,L]\).  
Because \(D\) is positive definite, its square root \(R:=D^{1/2}\) is well-defined and satisfies  
\[
m^{1/2}I\;\preceq\;R\;\preceq\;L^{1/2}I .
\tag{1}
\]

Write the Jacobian in the form  
\[
J(x)=(1-\alpha)I+\alpha D W
     \;=\;
     R\Bigl[(1-\alpha)I+\alpha\,RWR\Bigr]R^{-1}.
\tag{2}
\]
Set  
\(P:=RWR\).  
Because \(W\) is symmetric and \(R\) is diagonal, \(P\) is symmetric:
\(
P^\top=RWR = P.
\)
Equation (2) shows that \(J(x)\) is similar (via the invertible matrix \(R\)) to the symmetric matrix  
\[
K\;:=\;(1-\alpha)I+\alpha P .
\tag{3}
\]

For any unit vector \(v\), by Rayleigh quotient property, we have
\[
v^\top Pv
  =(Rv)^\top W (Rv)
  \;\ge\;
  \lambda_{\min}(W)\,\|Rv\|^{2}
  \;\ge\;
  \lambda_{\min}(W)\,m\,\|v\|^{2},
\]
where the last inequality uses (1).  
Hence  
\[
\lambda_{\min}(P)\;\ge\;m\,\lambda_{\min}(W),
\quad
\rho(P)\;=\;\lambda_{\max}(P)\;\le\;L\,\rho(W).
\tag{4}
\]


Because \(K\) is symmetric, its singular values equal the absolute values of its eigenvalues.  By (3)–(4):
\[
\lambda_{\min}(K)
  =(1-\alpha)+\alpha\,\lambda_{\min}(P)
  \;\ge\;
  (1-\alpha)+\alpha m\,\lambda_{\min}(W),
\tag{5}
\]
\[
\lambda_{\max}(K)
  =(1-\alpha)+\alpha\,\rho(P)
  \;\le\;
  (1-\alpha)+\alpha L\,\rho(W).
\tag{6}
\]


For any matrices \(A,B\) one has  
\(s_{\min}(ABA^{-1})\ge
  \dfrac{s_{\min}(A)}{s_{\max}(A)}\,s_{\min}(B)\)  
and  
\(\|ABA^{-1}\|\le
  \dfrac{s_{\max}(A)}{s_{\min}(A)}\,\|B\|\).
Apply these identities to \(A=R\) and \(B=K\):
\[
s_{\min}\bigl(J(x)\bigr)
  \;\ge\;
  \frac{s_{\min}(R)}{s_{\max}(R)}\,\lambda_{\min}(K),
\]
\[
\|J(x)\|
  \;\le\;
  \frac{s_{\max}(R)}{s_{\min}(R)}\,\lambda_{\max}(K).
\tag{7}
\]
Since \(R\) is diagonal, \(s_{\min}(R)=\sqrt{m}\) and \(s_{\max}(R)=\sqrt{L}\);  
plugging these together with (5)–(6) into (7) yields the announced bounds.
\end{proof}

\begin{figure}[!ht]
    \centering
    \includegraphics[width=\linewidth]{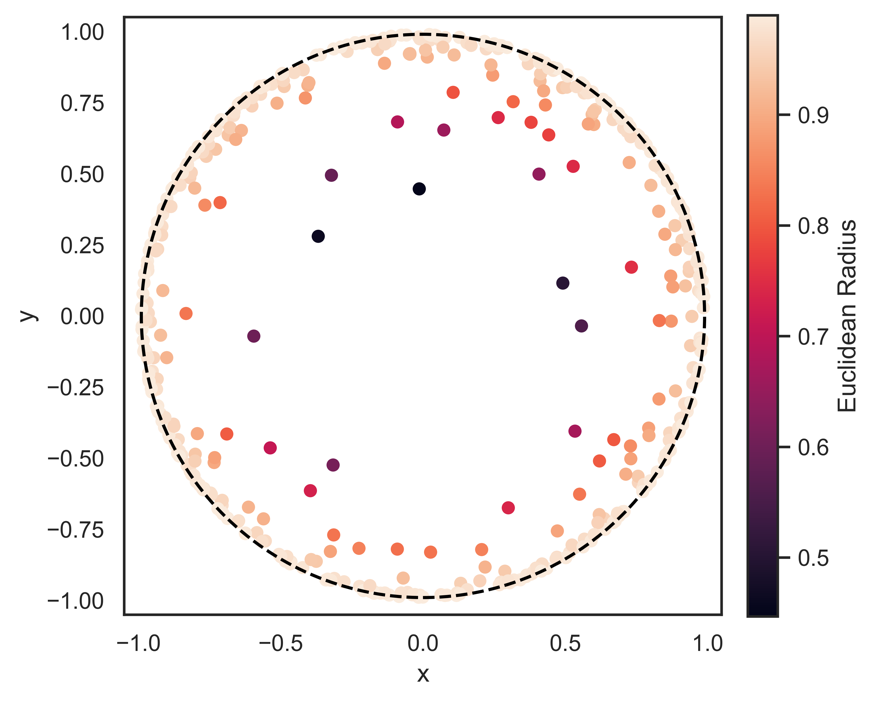}
    \caption{An illustration of node placement in the two-dimensional Poincaré disk of radius \(1\).  
 The sampled nodes, color-coded by their Euclidean radius \(\|\mathbf{p}_i\|\),  
with a dashed circle indicating the disk boundary. The radial distribution is drawn  
proportional to \(\sinh(\beta\,r)\).}
    \label{fig:enter-label}
\end{figure}

\begin{figure}[!ht]
    \centering
    \includegraphics[width=\linewidth]{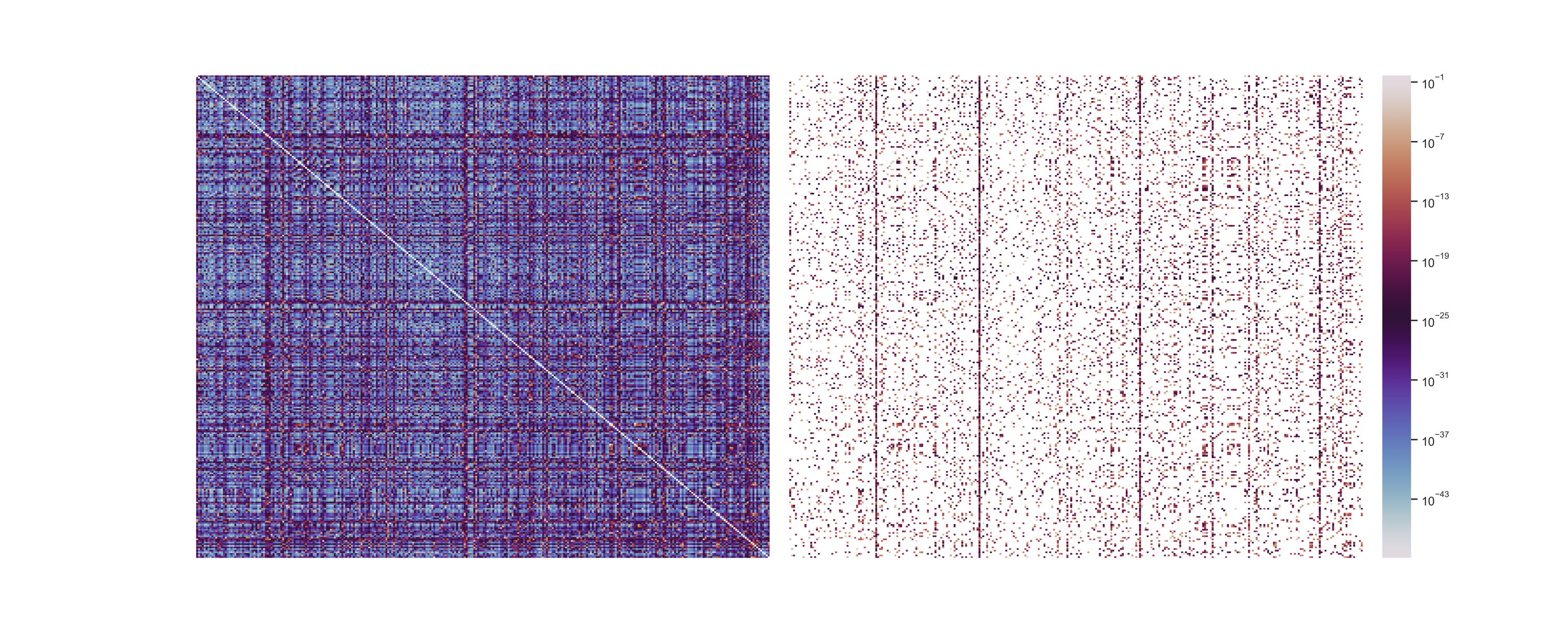}
    \caption{Heatmaps of HypER's adjacency matrix (a) before and (b) after row-level sparsity.  Each entry \(W_{ij}\) decays exponentially with hyperbolic distance \(d_{\mathbb H}\).  The strength distribution exhibits a highly non-uniform “patchy’’ fractal-like patterns, driven by the negative-curvature geometry that concentrates nodes near the Poincaré-disk boundary. \textit{Zoom in on the electronic version to see the fine-grained structure clearly.}}
    \label{fig:sparsity_figure}
\end{figure}

  \textit{Interpretation:}  
The lower bound says that any infinitesimal difference in reservoir states is amplified by at least the factor \(
\sqrt{\tfrac{m}{L}}\,
\bigl[(1-\alpha)+\alpha m\,\lambda_{\min}(W)\bigr],
\)
so long as the activation derivative does not vanish (\(m>0\)).  Negative curvature enters through \(\lambda_{\min}(W)\): the exponential kernel on the Poincaré disk forces \(\lambda_{\min}(W)\) away from zero (Lemma 1), thereby guaranteeing a strictly positive expansion even when the leak rate \(1-\alpha\) is small.
The upper bound links the choice of kernel width \(\sigma\) (which influences \(\rho(W)\)) and the leak \(\alpha\) to the Echo-State requirement that the global Lipschitz constant of the state map remain below one; it is thus the analytical counterpart of the usual empirical rule “keep \(\alpha\rho(W)\lesssim1\)”.

Theorem 3 (in paper) crystallises HypER’s core intuition in quantitative form: by wiring the reservoir through a hyperbolic kernel, then scaling it to spectral radius \(\varrho\), we obtain a Jacobian whose smallest singular value is bounded below by \(\beta(\sigma)\).  Because \(\widetilde W_{ij}\) carries the exponential metric of the Poincaré ball, shrinking the kernel width \(\sigma\) inflates the ratio \(\lambda_{\min}(\widetilde W)/\rho(\widetilde W)\) and pushes \(\beta(\sigma)\) past one, guaranteeing that \emph{every} infinitesimal perturbation is stretched rather than damped.  This delivers a task-aligned inductive bias: the reservoir’s worst-case local amplification exceeds unity, mirroring the positive Lyapunov exponent that defines chaos, yet the global Lipschitz constant remains controlled because the same \(\sigma\) enters the ESP-preserving rescale by \(\varrho<1\).  The theorem therefore turns the geometric knob \(\sigma\) and the stability knob \(\varrho\) into explicit levers whose joint tuning carves out a safe operating envelope—too small \(\varrho\) erodes expansion, too large violates ESP—while the square-root factor \(\sqrt{m/L}\) exposes how activation non-linearity (\(m\le\phi'\le L\)) conditions the bound.  Crucially, if the kernel distance were Euclidean, \(\lambda_{\min}(\widetilde W)\) would decay only polynomially with network size, forcing \(\beta(\sigma)\le 1\) in high dimensions; negative curvature alone provides the exponential volume growth needed to keep the bound positive.  

Once the hyperbolic kernel width \(\sigma\) is chosen so that the expansion factor  \(\beta(\sigma)\)  exceeds 1, \emph{every} infinitesimal mismatch injected into the reservoir is guaranteed to grow by at least \(\beta(\sigma)^{\tau-1}\) after \(\tau\) steps.  Under that strictly expanding regime, any two input trajectories that differ even slightly will separate along the unstable directions that the hyperbolic geometry preferentially preserves; consequently the subsequent point-wise nonlinear transforms applied at the nodes operate on already well-disentangled signals.  Deploying a heterogeneous palette of activations—tanh (\(m=1-L^{-1}\), saturating but monotone), ReLU (\(m=0^+\), piecewise-linear), sine (confined to a monotone half–period so that \(m>0\)), and the identity—therefore \emph{enlarges the functional basis} of the reservoir without jeopardising stability: each class of nonlinearity contributes an independent Taylor/Lipschitz signature that the read-out can exploit, and the theorem’s lower-bound ensures those signatures are not annihilated by contraction.  In Euclidean reservoirs, with \(\beta(\sigma)\le 1\) for any feasible scaling, initial differences are rapidly damped; the rich palette then degenerates to redundant or vanishing features.  Hence heterogeneous node-wise nonlinearities become genuinely beneficial only in a geometry—such as HypER’s negatively curved wiring—that provides a provable minimum expansion per step, preserving the diversity they inject into the state space. Empirically we therefore observe that mixed activations widen the dynamical basis only for HypER (cf. Table 
 5 in paper); for flat‑geometry models they push the system beyond its stability envelope and hurt prediction accuracy.
Hence the theorem upgrades HypER from geometric intuition to provable mechanism: any one-step input mismatch of norm \(\|\Delta u\|\) is amplified by \(\alpha m\beta(\sigma)^{\tau-1}\) after \(\tau\) steps, analytically explaining why Hyperbolic Embedding Reservoirs sustain accurate forecasts far beyond the 5–8 Lyapunov times achievable by Euclidean ESNs without sacrificing echo-state stability.

\begin{figure}[!ht]
  \centering
  \begin{minipage}[t]{0.49\linewidth}
    \centering
    \includegraphics[width=\linewidth]{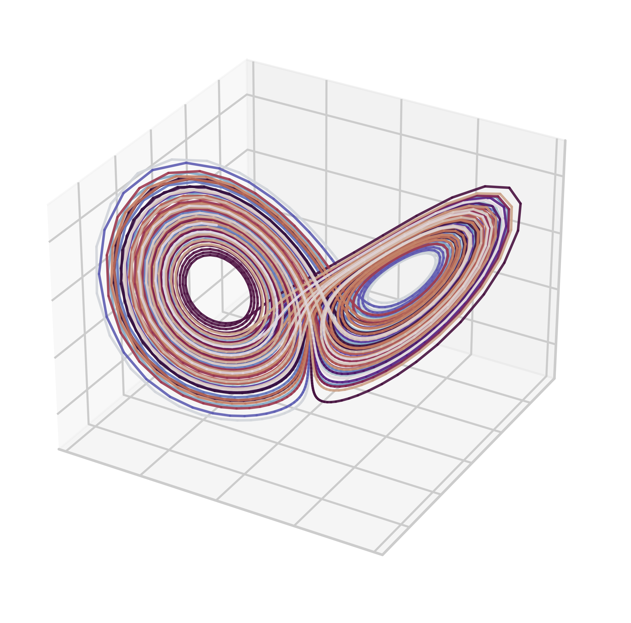}
    \textbf{(a)} Lorenz
    \label{fig:sub11c}
  \end{minipage}
  \hfill
  \begin{minipage}[t]{0.49\linewidth}
    \centering
    \includegraphics[width=\linewidth]{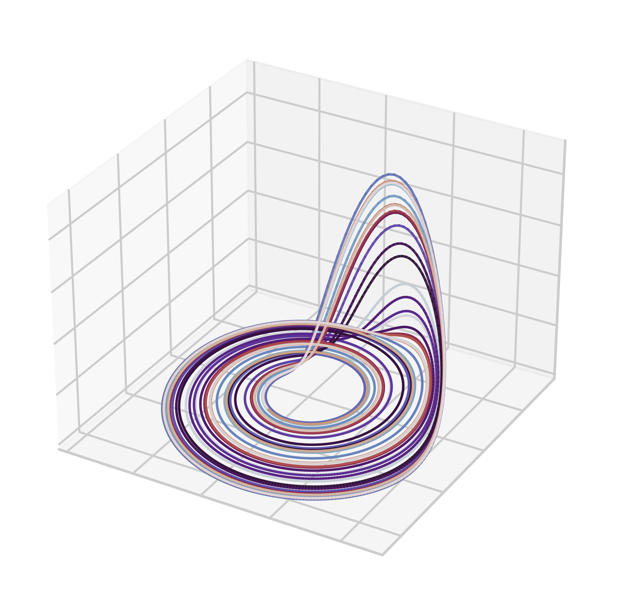}
    \textbf{(b)} Rössler
    \label{fig:sub2}
  \end{minipage}

  \vspace{1em}

  \begin{minipage}[t]{0.49\linewidth}
    \centering
    \includegraphics[width=\linewidth]{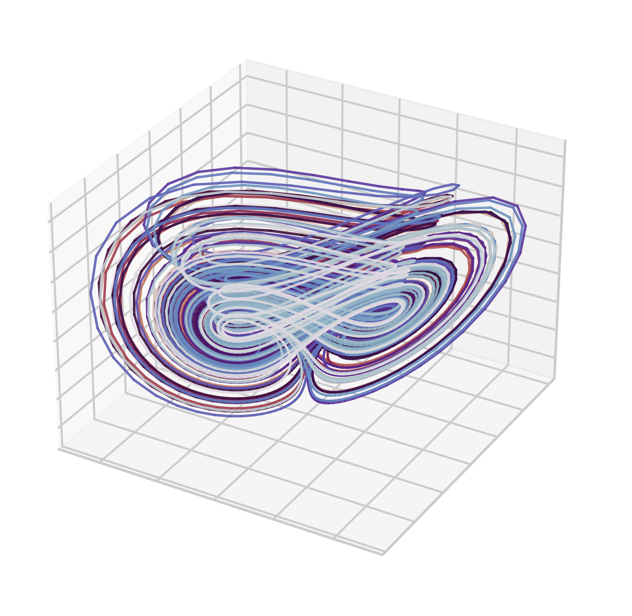}
    \textbf{(c)} Chen
    \label{fig:sub31}
  \end{minipage}
  \hfill
  \begin{minipage}[t]{0.49\linewidth}
    \centering
    \includegraphics[width=\linewidth]{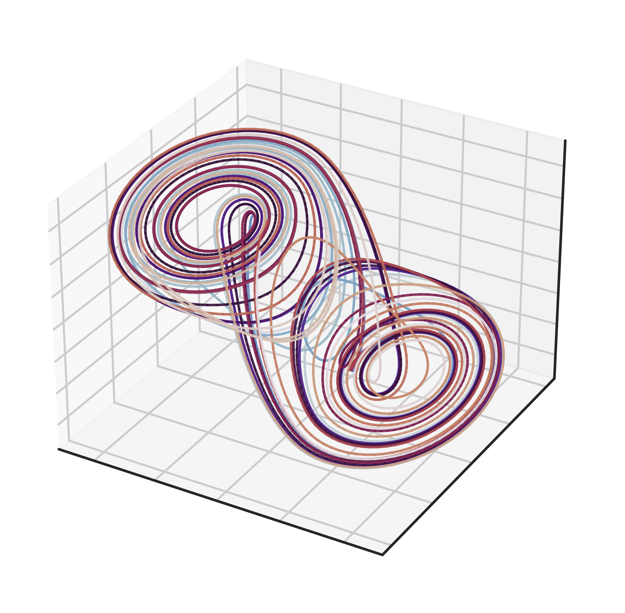}
    \textbf{(d)} Chua
    \label{fig:sub4}
  \end{minipage}

\caption{Canonical three-dimensional chaotic test beds.  Panels (a)–(d) show the strange attractors of the Lorenz-63, Rössler, Chen and Chua systems, respectively, integrated with identical step size and plotted in natural coordinates.  All four exhibit the hallmark stretch–and–fold geometry responsible for positive Lyapunov exponents and finite predictability horizons; these data sets constitute the evaluation suite on which HypER’s long-range forecasting performance is compared against Euclidean ESN baselines.}

  \label{fig:lorenz_comparison_3d2}
\end{figure}

\begin{figure}[!ht]
  \centering
  \includegraphics[width=0.7\linewidth]{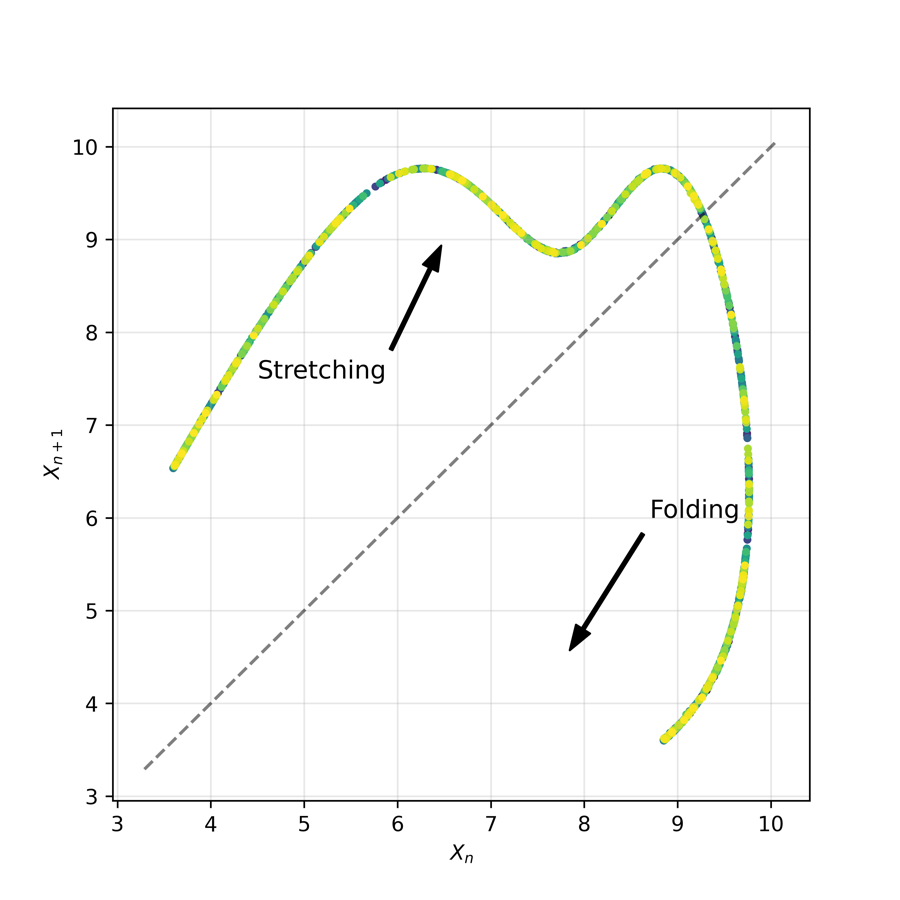}
  \caption{The first return map of the Rössler attractor exhibits ``stretch–and–fold'' dynamics.}
  \label{fig:rossler_return_map}
\end{figure}

\begin{table*}[!ht]
\centering
\caption{NRMSE of teacher-forced one-step ahead forecasting across multiple prediction horizons.}
\label{tab:nrmse_horizonsb}
\resizebox{0.99\textwidth}{!}{
\begin{tabular}{l cc c c c ccc}
\toprule
\multirow{2}{*}{\textbf{Dataset}} & \multirow{2}{*}{\textbf{Horizon}}& \multicolumn{7}{c}{\textbf{NRMSE ($\times 10^{-4}$) $\downarrow$}}\\
\cmidrule(lr){3-9}& & ESN& SCR& CRJ& SW-ESN& MCI-ESN &DeepESN &HypER\\
\midrule
 \multirow{5}{*}{Lorenz}& 200& 1.8996 ± 0.2990& 1.4125 ± 0.2717& 1.5643 ± 0.2405& 0.7354 ± 0.0955& 0.2557 ± 0.0235& 1.8844 ± 0.2839&\textbf{0.0098 ± 0.0020}\\
 & 400& 2.9158 ± 0.5809& 2.7863 ± 0.7634& 2.6788 ± 0.3923& 1.3756 ± 0.3007& 0.4292 ± 0.0451& 3.4581 ± 0.6540&\textbf{0.0286 ± 0.0159}\\
 & 600& 2.6095 ± 0.4832& 2.4310 ± 0.6351& 2.3650 ± 0.3213& 1.2035 ± 0.2473& 0.3780 ± 0.0369& 3.0275 ± 0.5417&\textbf{0.0245 ± 0.0132}\\
 & 800& 2.4403 ± 0.4260& 2.2120 ± 0.5437& 2.1863 ± 0.2711& 1.0929 ± 0.2079& 0.3487 ± 0.0314& 2.7804 ± 0.4631&\textbf{0.0218 ± 0.0110}\\
 & 1000& 2.3608 ± 0.3973& 2.0995 ± 0.4890& 2.1066 ± 0.2494& 1.0407 ± 0.1840& 0.3353 ± 0.0297&  2.6613 ± 0.4149&\textbf{0.0202 ± 0.0096}\\

 \midrule
 \multirow{5}{*}{Rössler}& 200& 0.9231 ± 0.2373& 1.9937 ± 0.1939& 2.6059 ± 0.2446& 2.1212 ± 0.4111& 1.1950 ± 0.4164& 5.3491 ± 0.5995&\textbf{0.3083 ± 0.0445}\\
 & 400& 0.7954 ± 0.1926& 1.7086 ± 0.1584& 2.1328 ± 0.1823& 1.7554 ± 0.3084& 1.7890 ± 0.6158& 4.7256 ± 0.5313&\textbf{0.2892 ± 0.0584}\\
 & 600& 0.7707 ± 0.1997& 1.6123 ± 0.1408& 2.0085 ± 0.1545& 1.6425 ± 0.2645& 1.5919 ± 0.5456& 4.5042 ± 0.4586&\textbf{0.2654 ± 0.0512}\\
 & 800& 0.8084 ± 0.1775& 1.7565 ± 0.1537& 2.2597 ± 0.1820& 1.8216 ± 0.3055& 1.7582 ± 0.5515& 4.6048 ± 0.4612&\textbf{0.3229 ± 0.0669}\\
 & 1000& 0.8033 ± 0.1570& 1.7009 ± 0.1084& 2.1034 ± 0.1544& 1.6580 ± 0.2311& 2.3943 ± 0.6273& 4.2651 ± 0.3709&\textbf{0.2711 ± 0.0545}\\
 \midrule
 \multirow{5}{*}{Chen}& 200
& 8.3461 ± 1.5647& 3.4524 ± 1.0955& 3.8714 ± 1.0121& 2.7018 ± 0.8006& 0.5529 ± 0.1016& 11.1928 ± 2.7676&\textbf{0.0289 ± 0.0136}\\
 & 400
& 7.8578 ± 1.3720& 3.1347 ± 0.9444& 3.5096 ± 0.8396& 2.4642 ± 0.6858& 0.5618 ± 0.1039& 11.2065 ± 2.6590&\textbf{0.0253 ± 0.0111}\\
 & 600
& 7.5713 ± 1.2529& 2.9454 ± 0.8627& 3.2970 ± 0.7462& 2.3379 ± 0.6388& 0.5471 ± 0.1008& 10.9863 ± 2.6134&\textbf{0.0237 ± 0.0099}\\
 & 800
& 7.8061 ± 1.2364& 2.9556 ± 0.8322& 3.3111 ± 0.7140& 2.3630 ± 0.6441& 0.5548 ± 0.1060& 11.3102 ± 2.6167&\textbf{0.0233 ± 0.0093}\\
 \midrule
 \multirow{4}{*}{Chua}& 200& 13.8035 ± 0.1938& 13.5475 ± 0.1996& 15.1684 ± 0.1935& 14.3335 ± 0.6491& 9.7472 ± 0.3295& 10.1497 ± 2.1081&\textbf{6.2185 ± 0.6540 }\\
 & 400& 15.1457 ± 0.1914& 15.3644 ± 0.2263& 16.6796 ± 0.1801& 16.1305 ± 0.6441& 13.4237 ± 1.0293& 16.5536 ± 5.9645&\textbf{9.9356 ± 1.8790}\\
 & 600& 15.9751 ± 0.1817& 16.2945 ± 0.2131& 17.5255 ± 0.2071& 17.3768 ± 0.6440& 15.7155 ± 1.4067& 19.1823 ± 5.2401&\textbf{11.9119 ± 2.3407}\\
 & 800& 15.3668 ± 0.1756& 15.5799 ± 0.1943& 16.8523 ± 0.1939& 16.6317 ± 0.6016& 15.5763 ± 1.2959& 17.7481 ± 4.1196&\textbf{11.1649 ± 1.8235}\\
 \midrule
 \multirow{5}{*}{Mackey–Glass}& 200
& 5.5661 ± 1.3796& 6.9932 ± 1.5362& 5.0097 ± 0.6247& 5.8431 ± 0.5661& 19.1107 ± 4.0490& 8.6601 ± 5.0794&\textbf{3.1238 ± 1.9327}\\
 & 400
& 6.2835 ± 1.4160& 6.6683 ± 1.2692& 6.2317 ± 0.6278& 7.4746 ± 0.7737& 16.3728 ± 3.5216& 8.4966 ± 4.1926&\textbf{2.6762 ± 1.6936}\\
 & 600
& 6.6987 ± 1.3977& 6.9751 ± 1.2262& 6.7004 ± 0.6669& 8.3971 ± 0.9325& 18.0263 ± 3.6055& 8.8932 ± 4.1969&\textbf{2.8812 ± 1.7718}\\
 & 800& 7.2707 ± 1.5154& 7.3480 ± 1.2577& 7.3534 ± 0.7506& 9.4275 ± 1.0753& 19.3275 ± 3.6537& 9.4067 ± 4.3209&\textbf{3.0088 ± 1.8308}\\
 & 1000& 6.3934 ± 1.1647& 6.0996 ± 0.9603& 6.2811 ± 0.5778& 8.5047 ± 0.7898& 17.7380 ± 3.0836& 8.1932 ± 3.5622&\textbf{2.6355 ± 1.6458}\\
 \bottomrule
\end{tabular}
}
\end{table*}

\begin{figure*}[!ht]
  \centering

  \begin{minipage}[t]{0.22\textwidth}
    \centering
    \includegraphics[width=\linewidth]{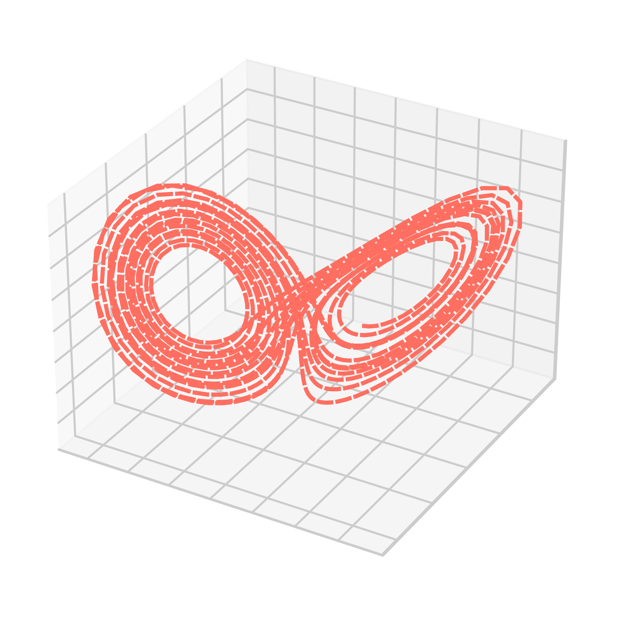}
    \textbf{(a)} Target
    \label{fig:minipage1}
  \end{minipage}
  \hfill
  \begin{minipage}[t]{0.22\textwidth}
    \centering
    \includegraphics[width=\linewidth]{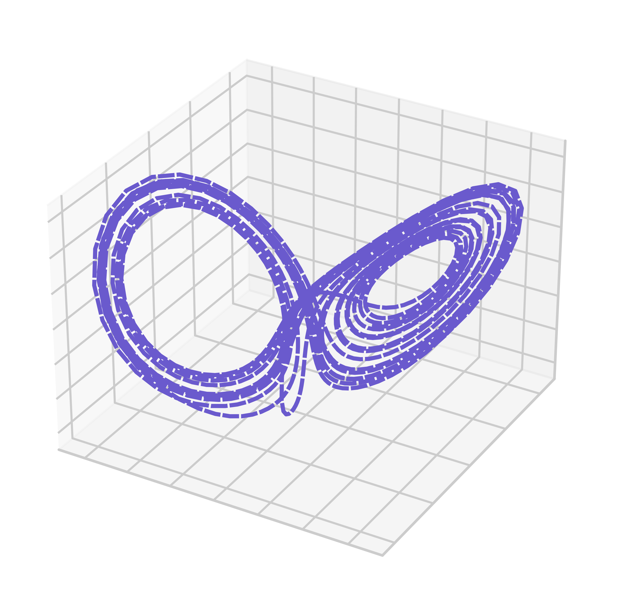}
    \textbf{(b)} ESN
    \label{fig:minipage2}
  \end{minipage}
  \hfill
  \begin{minipage}[t]{0.22\textwidth}
    \centering
    \includegraphics[width=\linewidth]{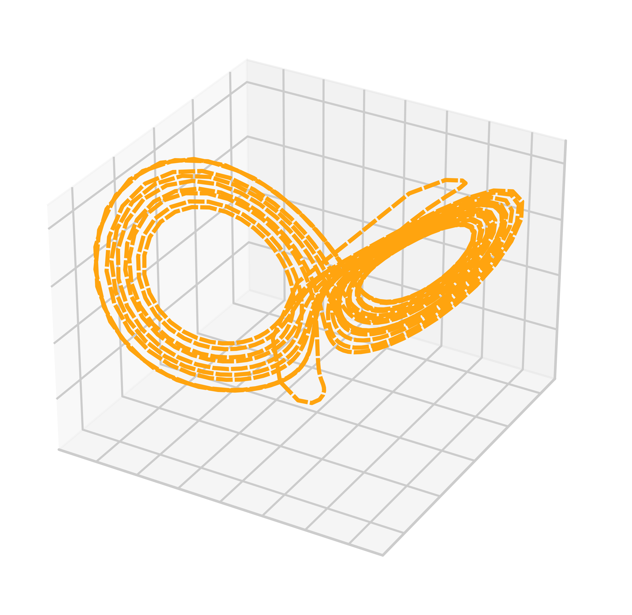}
    \textbf{(c)} SCR
    \label{fig:minipage3}
  \end{minipage}
  \hfill
  \begin{minipage}[t]{0.22\textwidth}
    \centering
    \includegraphics[width=\linewidth]{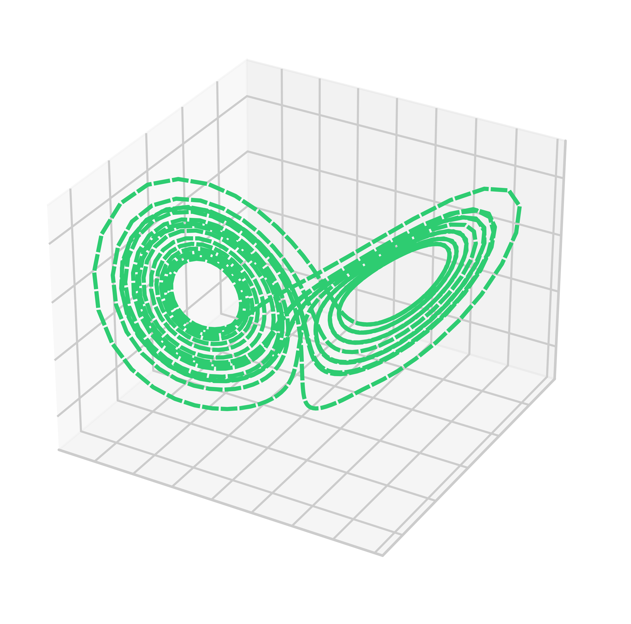}
    \textbf{(d)} CRJ
    \label{fig:minipage4}
  \end{minipage}

  \vspace{1em}

  \begin{minipage}[t]{0.22\textwidth}
    \centering
    \includegraphics[width=\linewidth]{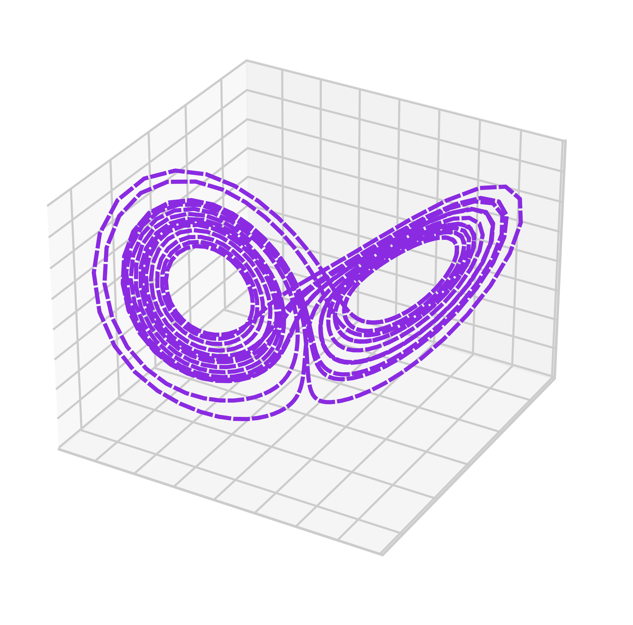}
    \textbf{(e)} SW-ESN
    \label{fig:minipage5}
  \end{minipage}
  \hfill
  \begin{minipage}[t]{0.22\textwidth}
    \centering
    \includegraphics[width=\linewidth]{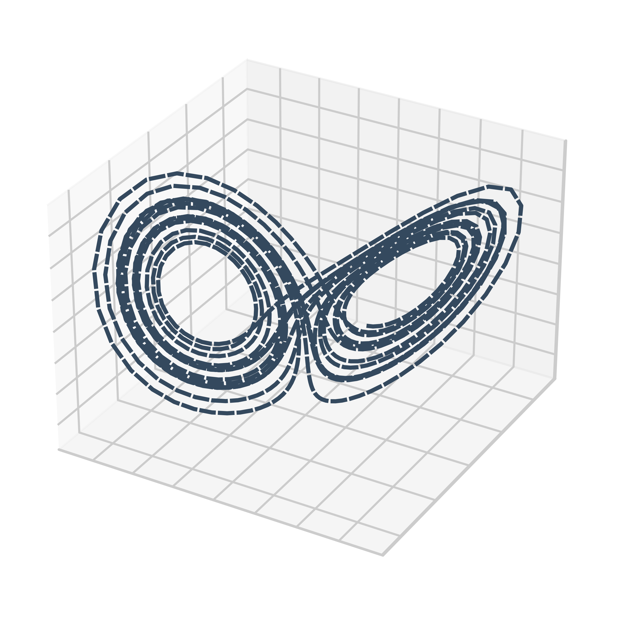}
    \textbf{(f)} MCI-ESN
    \label{fig:minipage6}
  \end{minipage}
  \hfill
  \begin{minipage}[t]{0.22\textwidth}
    \centering
    \includegraphics[width=\linewidth]{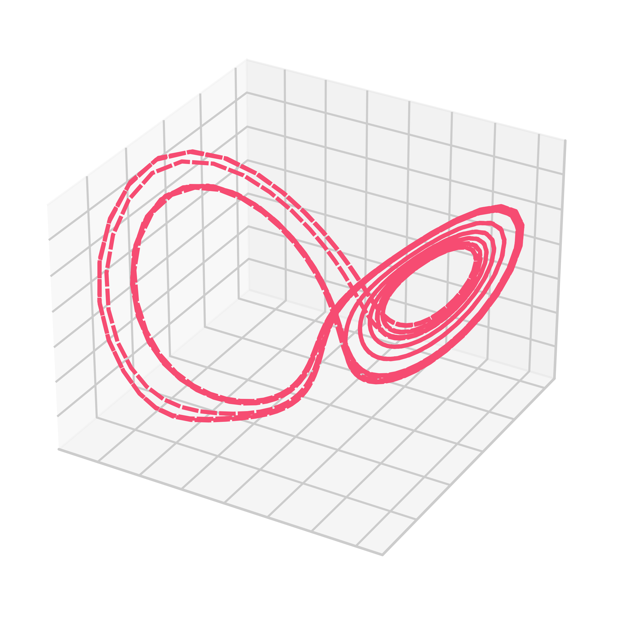}
    \textbf{(g)} DeepESN
    \label{fig:minipage7}
  \end{minipage}
  \hfill
  \begin{minipage}[t]{0.22\textwidth}
    \centering
    \includegraphics[width=\linewidth]{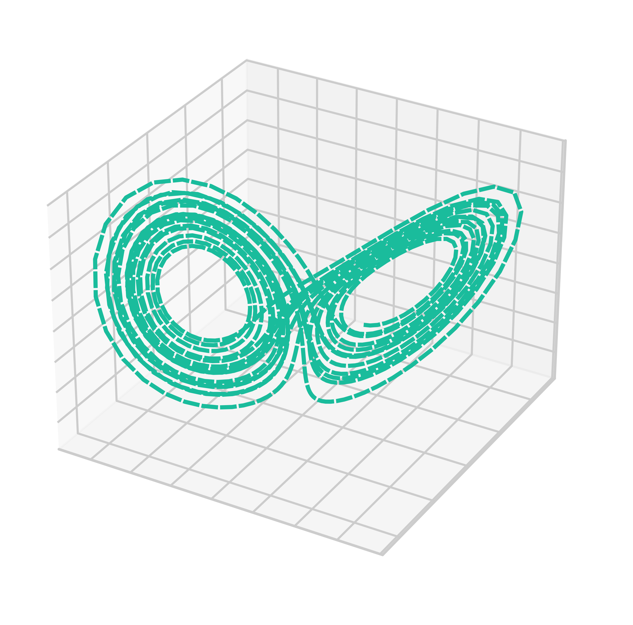}
    \textbf{(h)} HypER
    \label{fig:minipage8}
  \end{minipage}

  \caption{3D Phase portraits for the Lorenz system predicted by different reservoir architectures in closed-loop setting.}
  \label{fig:minipage_4x2}
\end{figure*}

\begin{figure*}[!ht]
    \centering
    \begin{minipage}{0.35\textwidth}
        \centering
        \includegraphics[width=\linewidth]{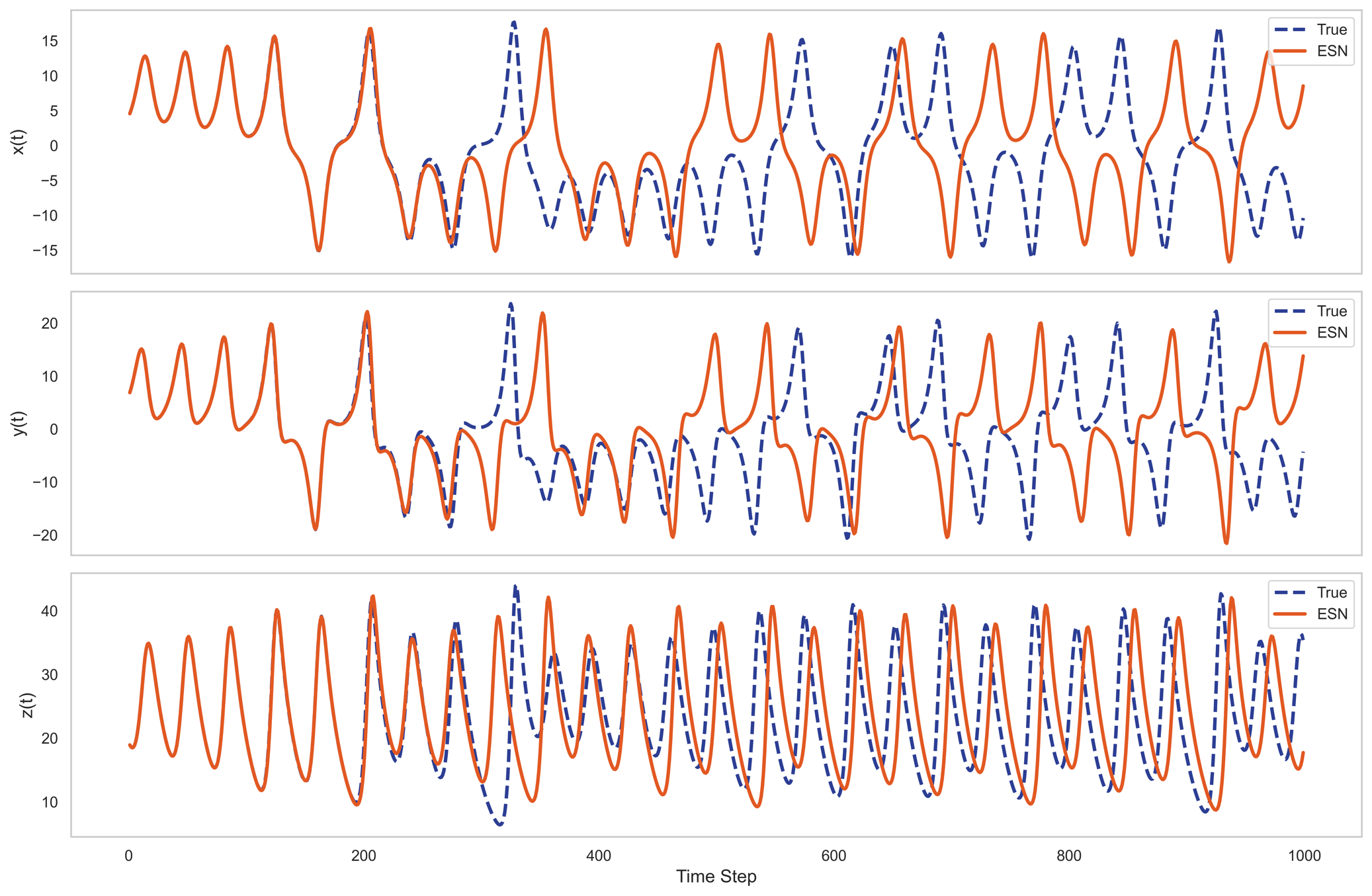}
            \textbf{(a)} ESN
    \end{minipage} \hspace{0.05\textwidth}
    \begin{minipage}{0.35\textwidth}
        \centering
        \includegraphics[width=\linewidth]{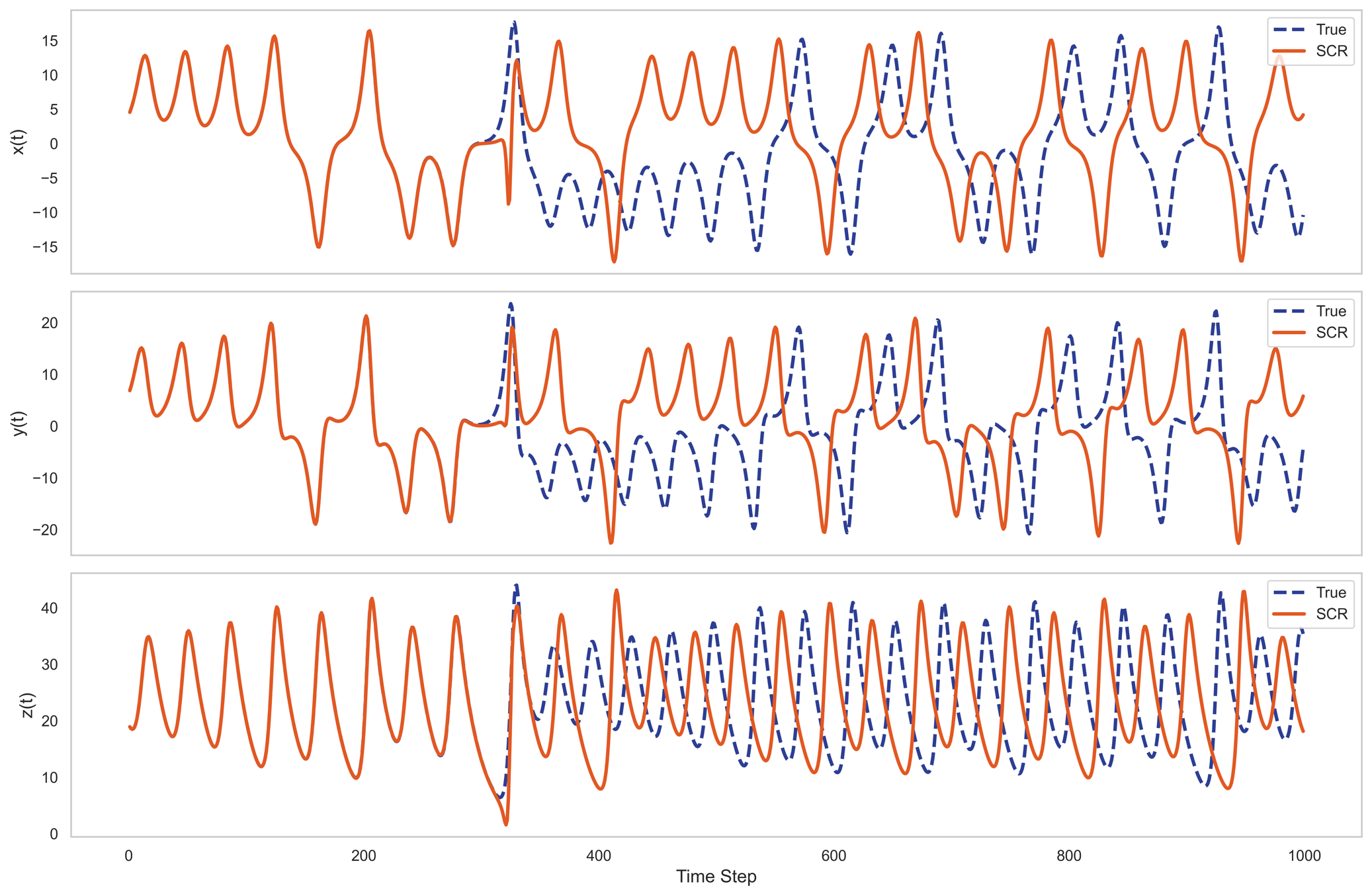}
            \textbf{(b)} SCR
    \end{minipage}

    \vspace{1em}

    \begin{minipage}{0.35\textwidth}
        \centering
        \includegraphics[width=\linewidth]{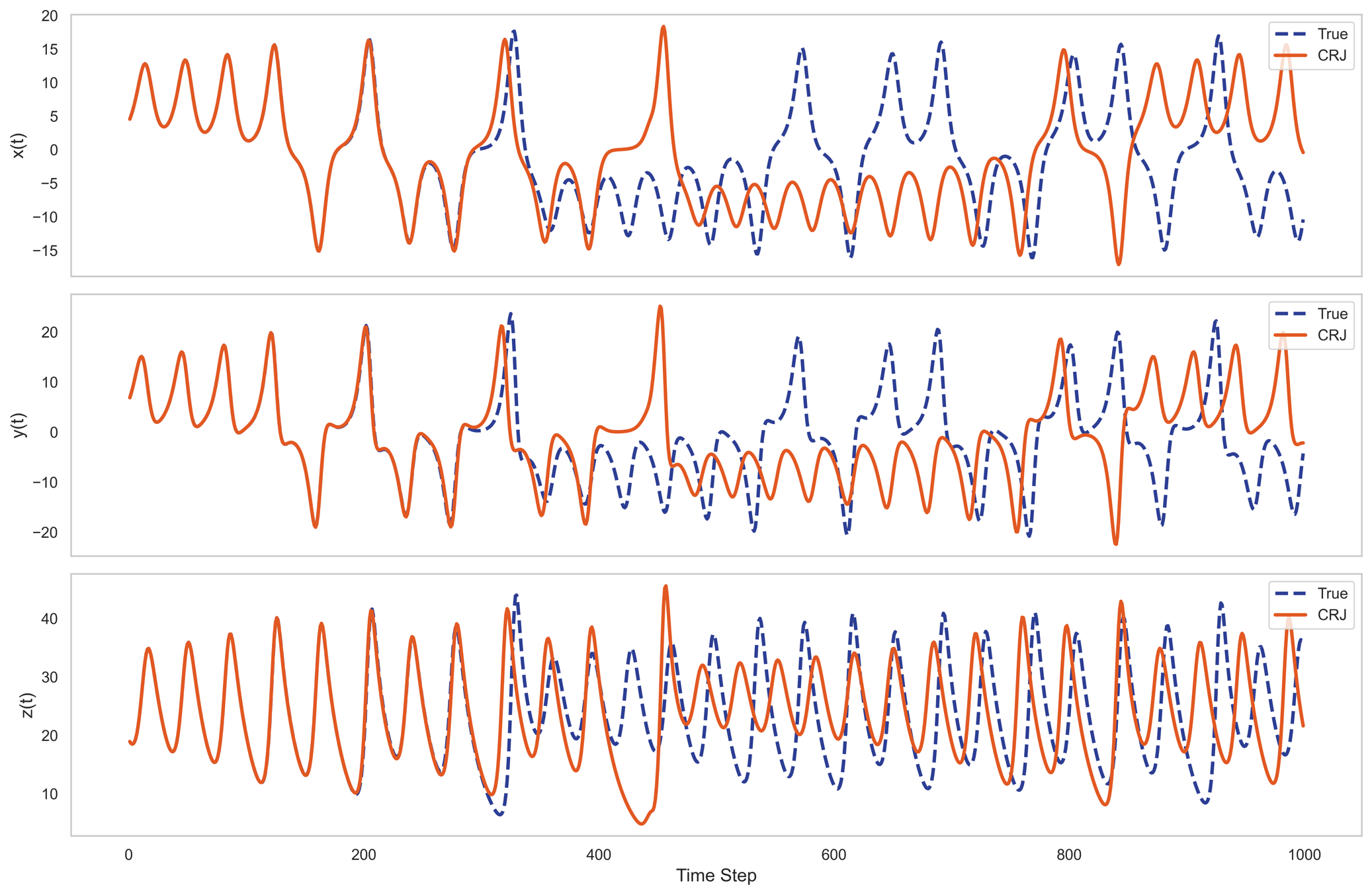}
            \textbf{(c)} CRJ
    \end{minipage} \hspace{0.05\textwidth}
    \begin{minipage}{0.35\textwidth}
        \centering
        \includegraphics[width=\linewidth]{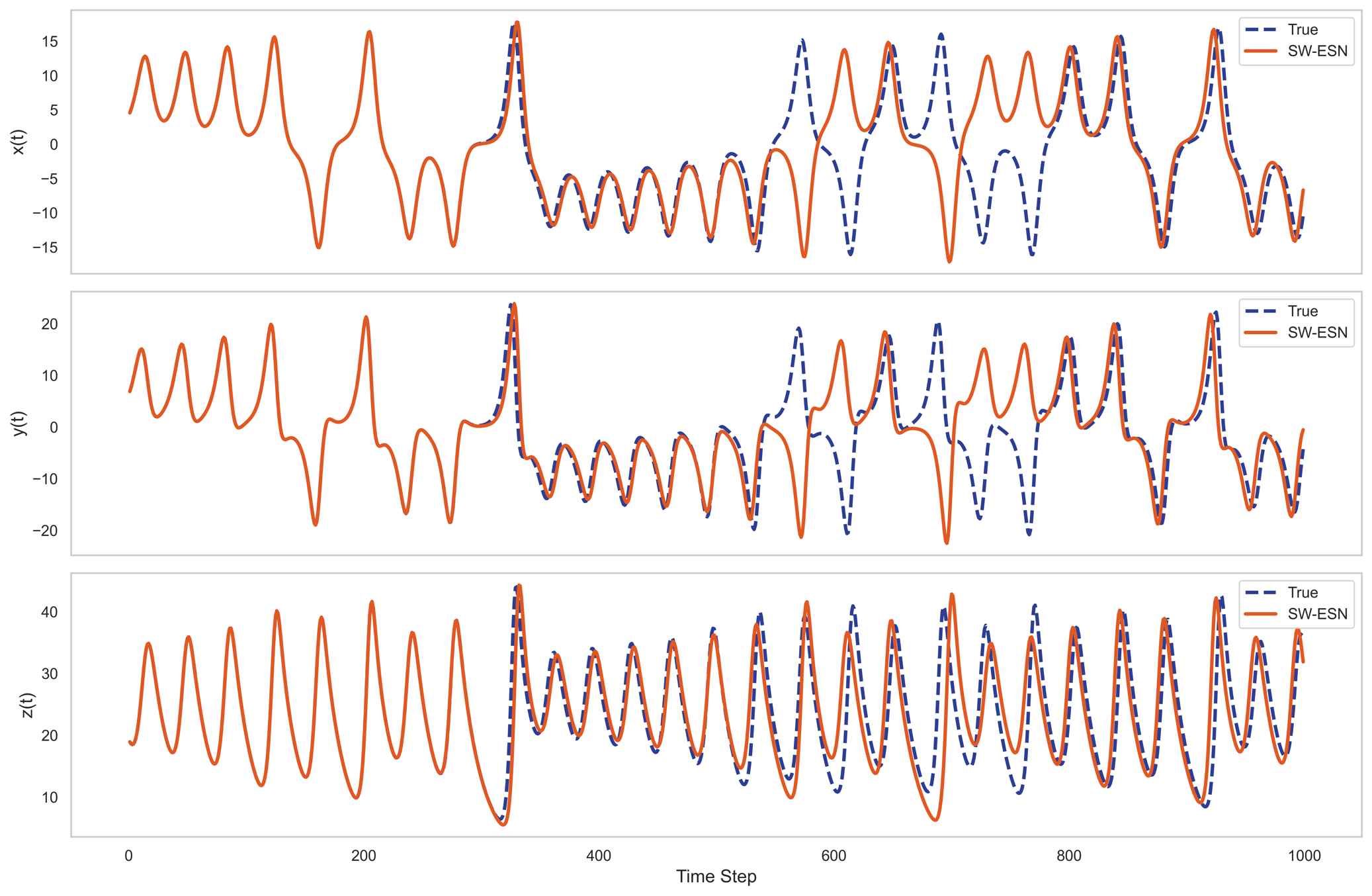}
    \textbf{(d)} SW-ESN
    \end{minipage}

    \vspace{1em}

    \begin{minipage}{0.35\textwidth}
        \centering
        \includegraphics[width=\linewidth]{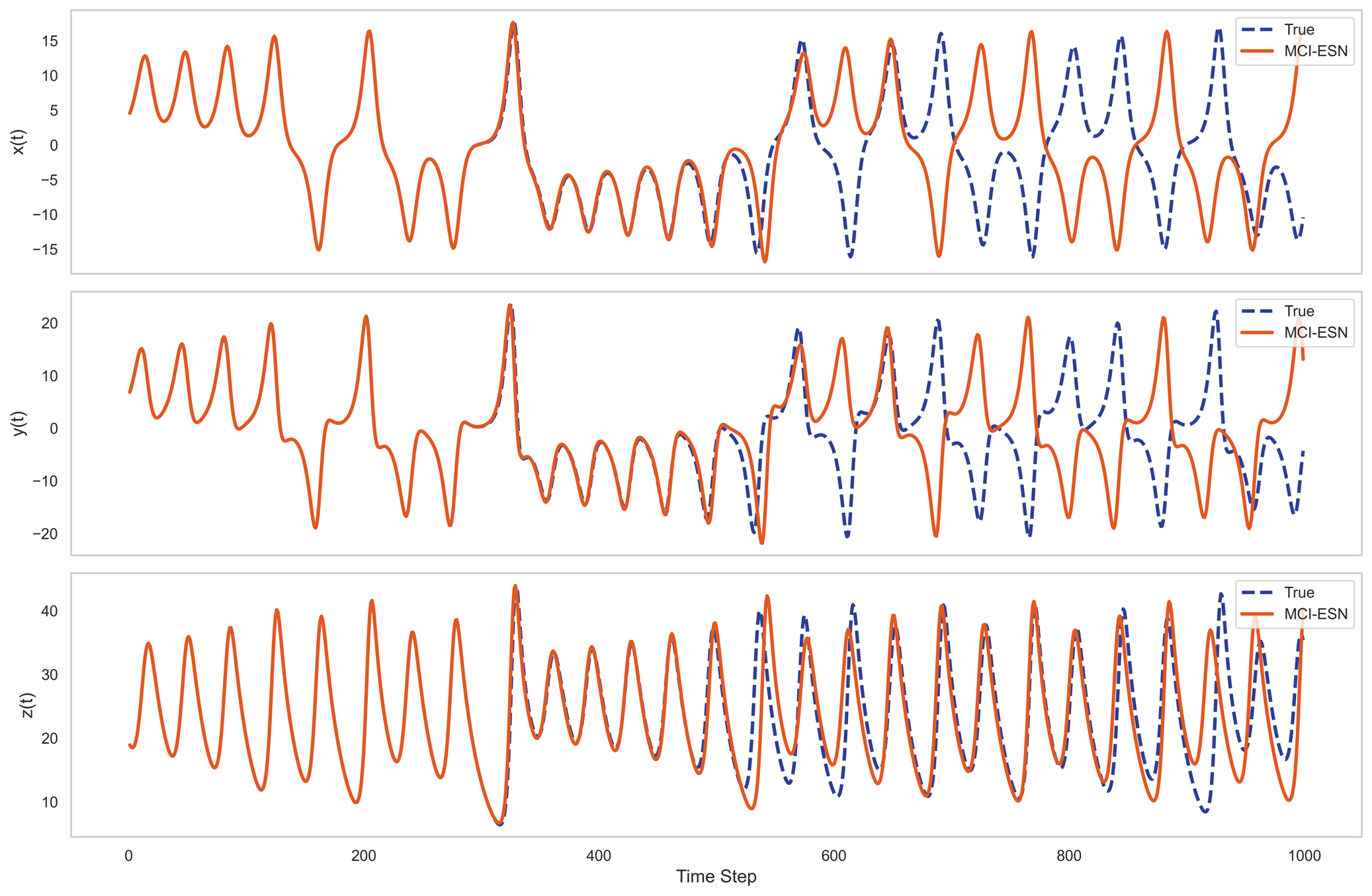}
      \textbf{(e)} MCI-ESN
    \end{minipage} \hspace{0.05\textwidth}
    \begin{minipage}{0.35\textwidth}
        \centering
        \includegraphics[width=\linewidth]{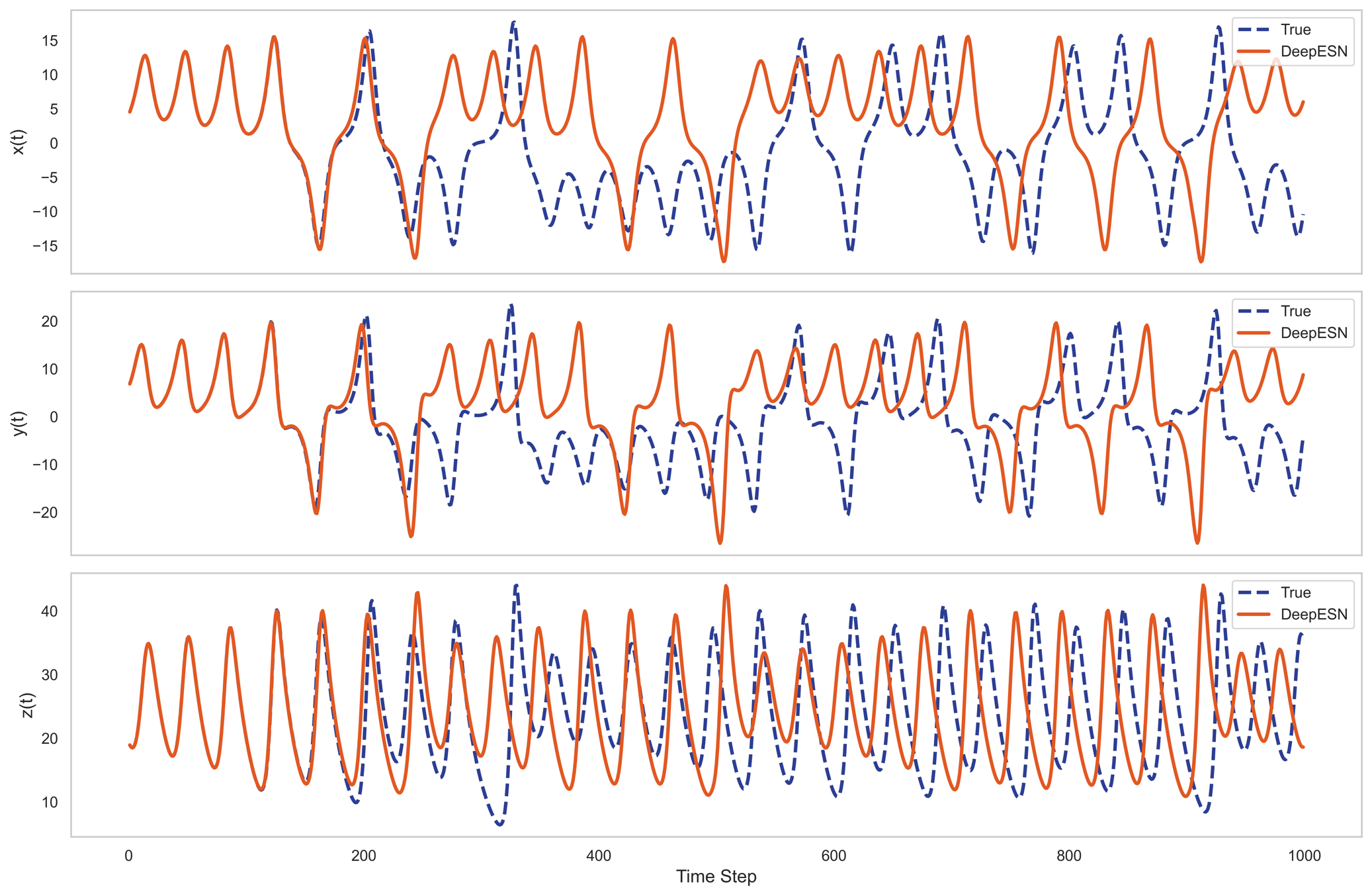}
    \textbf{(f)} DeepESN
    \end{minipage}

    \vspace{1em}

    \begin{minipage}{0.35\textwidth}
        \centering
        \includegraphics[width=\linewidth]{figures/Predicted_Trajectories_for_Test_Segment_Autoregressive_Forecasting.png}
    \textbf{(g)} HypER
    \end{minipage}

    \caption{Predicted trajectories by different reservoir architectures alongside the ground truth for the
test segment of the Lorenz system under autoregressive forecasting.}
    \label{fig: predicted-trajectories}
\end{figure*}

\section{Setup, Extended Results and Ablations}                 
\subsection{Datasets}

\paragraph{Sunspot Monthly.}  The \emph{International Sunspot Index v2.0} published by SILSO, Royal Observatory of Belgium, gives a homogeneous estimate of the mean total sunspot count for each calendar month from January 1749 to the present day, yielding \(T \approx 3{,}300\) temporally contiguous observations sampled at a fixed cadence of one observation per month \cite{SILSO2020}.  The series has long served as a canonical benchmark for nonlinear and chaos-theoretic forecasting studies because it combines a well-defined physical provenance with multi-century coverage and pronounced quasi-periodic components (the \(\sim\!11\)-year Schwabe cycle, its \(\sim\!22\)-year magnetic polarity counterpart, and longer Gleissberg modulations).  In all our experiments, the sunspot data is normalized to the $[0,1]$ range on the entire dataset, with the first 2,000 months (approximately 166 years) used for training, withholding the remaining data for out-of-sample evaluation. 

\paragraph{Santa Fe Dataset B.}  This multivariate trace was recorded from a patient in the sleep laboratory of Beth Israel Hospital (now Beth Israel Deaconess Medical Center) and released as Dataset B of the 1991 Santa Fe Time Series Prediction and Modelling Competition  \cite{Weigend1994,Jaeger2007leaky}.  The competition file contains simultaneous samples of three physiological parameters—heart rate, chest volume (respiration force), and blood oxygen concentration—measured in an evenly spaced sequence of 17000 samples, with no absolute timestamps; in the original laboratory electronics, successive samples were digitised every \(\Delta t = 0.5\,\mathrm{s}\), but that physical scale is intentionally omitted so that investigators treat one “time step’’ as the natural unit. We normalize the heart rate signal to the $[0,1]$ range and apply delay embedding with dimension 3 to reconstruct the system’s state space. We use the first 4,500 delay-embedded vectors for training and reserve the remaining portion for testing, forming a next-step forecasting setup in a chaotic, nonlinear regime.

\begin{figure}[!ht]
    \centering
    \includegraphics[width=\linewidth]{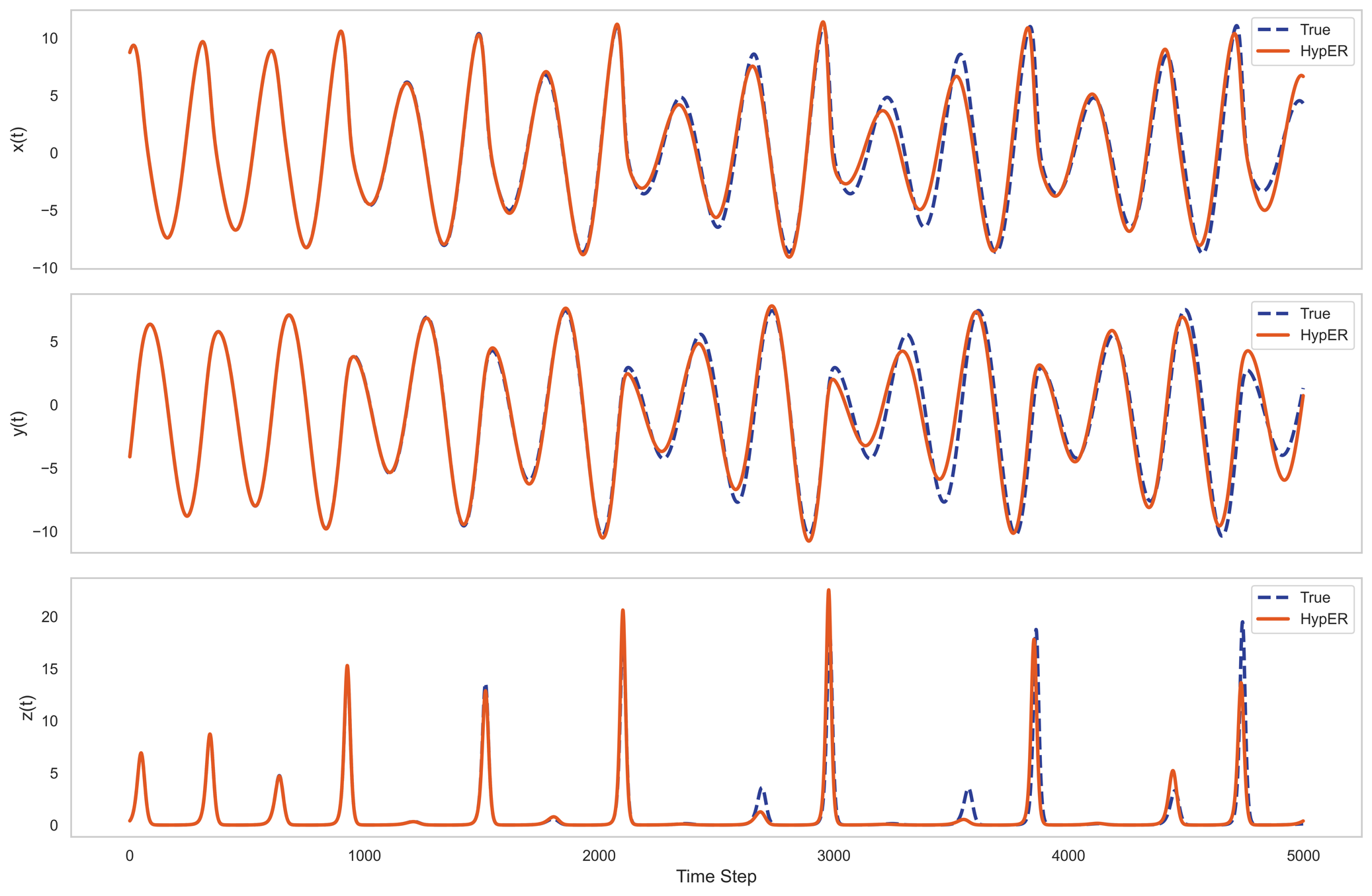}
    \caption{Predicted trajectories by HypER alongside ground truth for the test segment of the Rössler system under autoregressive forecasting.}
    \label{fig:trajectory_rossler}
\end{figure}

\begin{figure}[!ht]
    \centering
    \includegraphics[width=\linewidth]{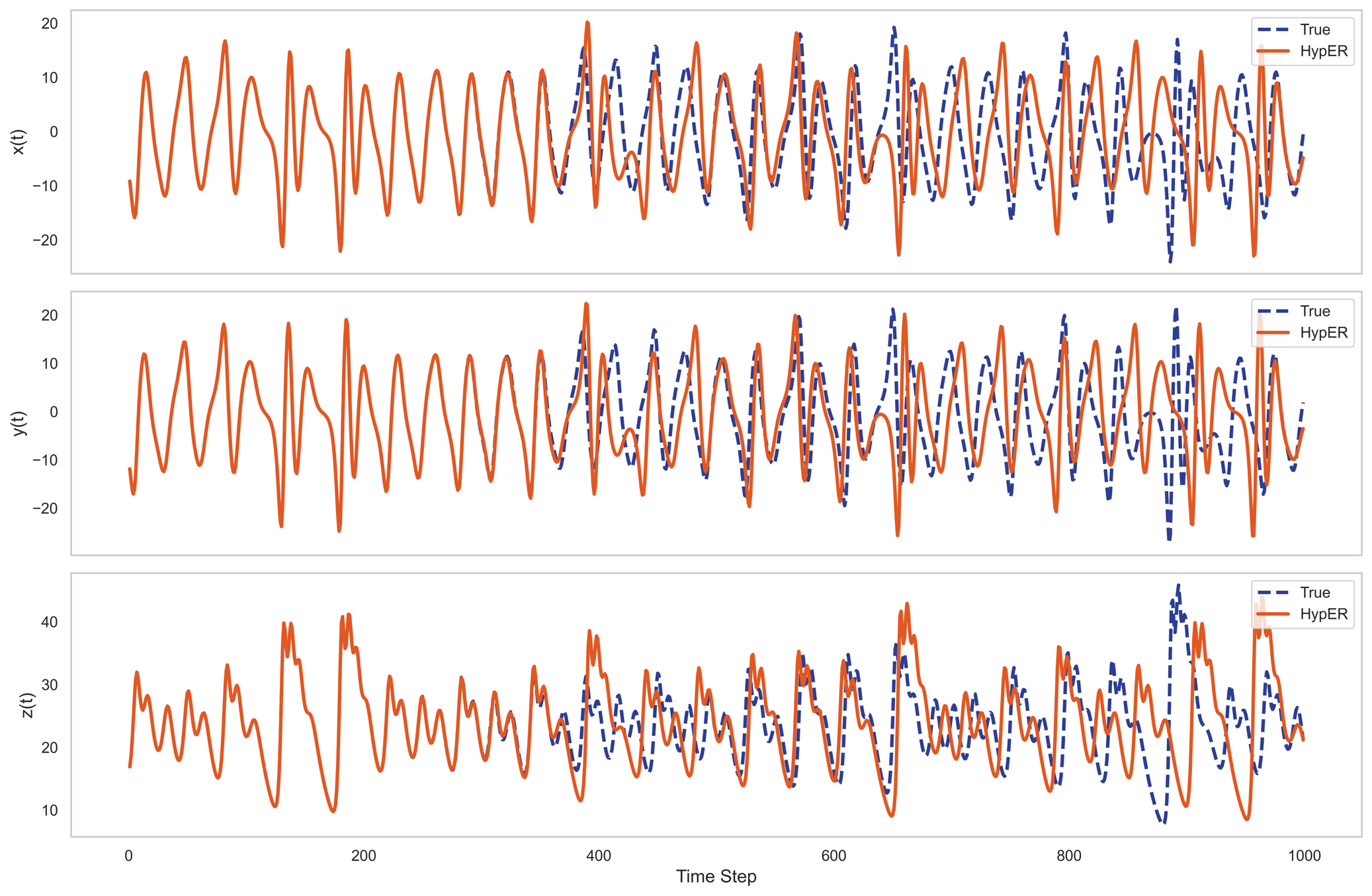}
    \caption{Predicted trajectories by HypER alongside ground truth for the test segment of the Chen system under autoregressive forecasting.}
    \label{fig:trajectory_rossler2}
\end{figure}

\begin{figure}[!ht]
  \centering

  \begin{minipage}[b]{0.49\linewidth}
    \centering
    \includegraphics[width=\textwidth]{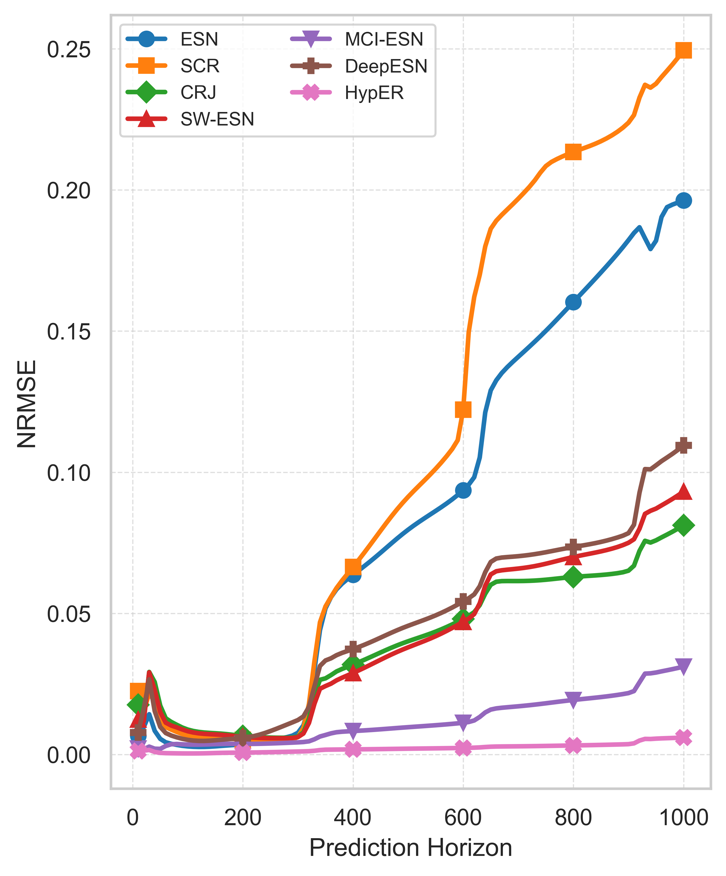}
    \textbf{(a)} Standard ESN
    \label{fig:nrmse_rossler1}
  \end{minipage}
  \hfill
  \begin{minipage}[b]{0.49\linewidth}
    \centering
    \includegraphics[width=\textwidth]{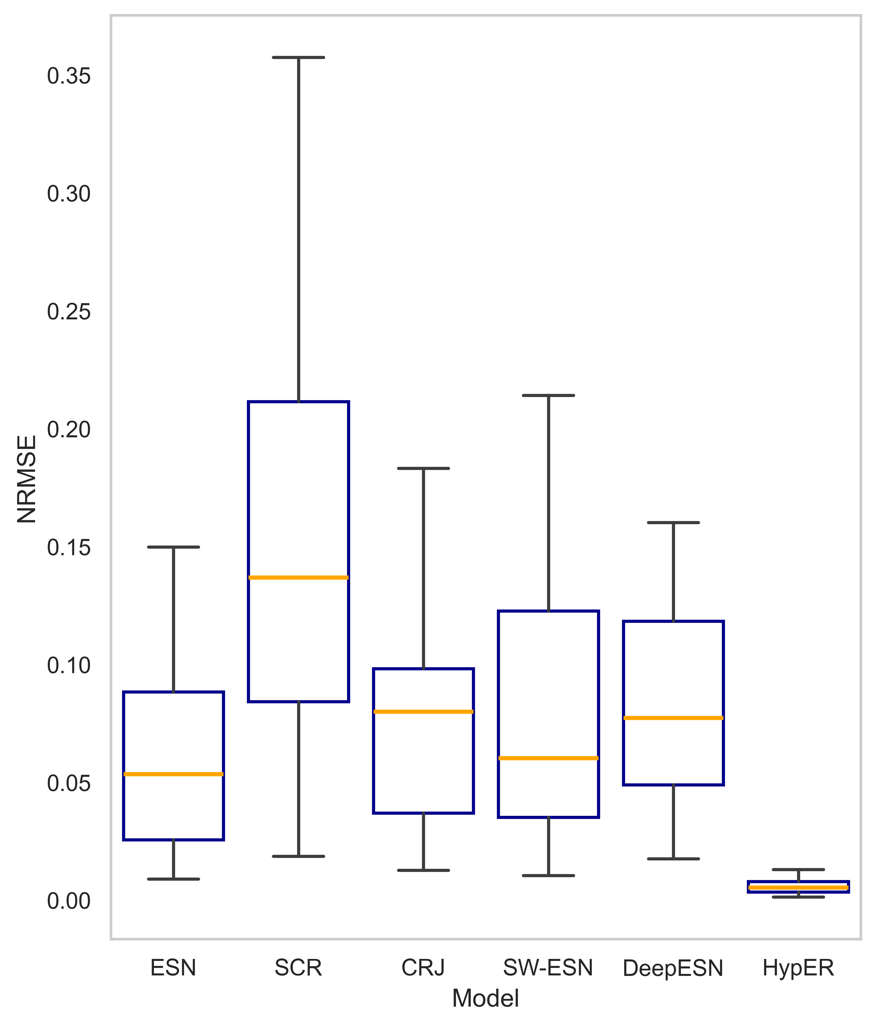}
    \textbf{(b)} Proposed HypER
    \label{fig:nrmse_rossler2}
  \end{minipage}

  \caption{NRMSE for autoregressive predictions across multiple horizons
for the Rössler system. }
  \label{fig:psdb}
\end{figure}

\begin{figure}[!ht]
  \centering

  \begin{minipage}[b]{0.49\linewidth}
    \centering
    \includegraphics[width=\textwidth]{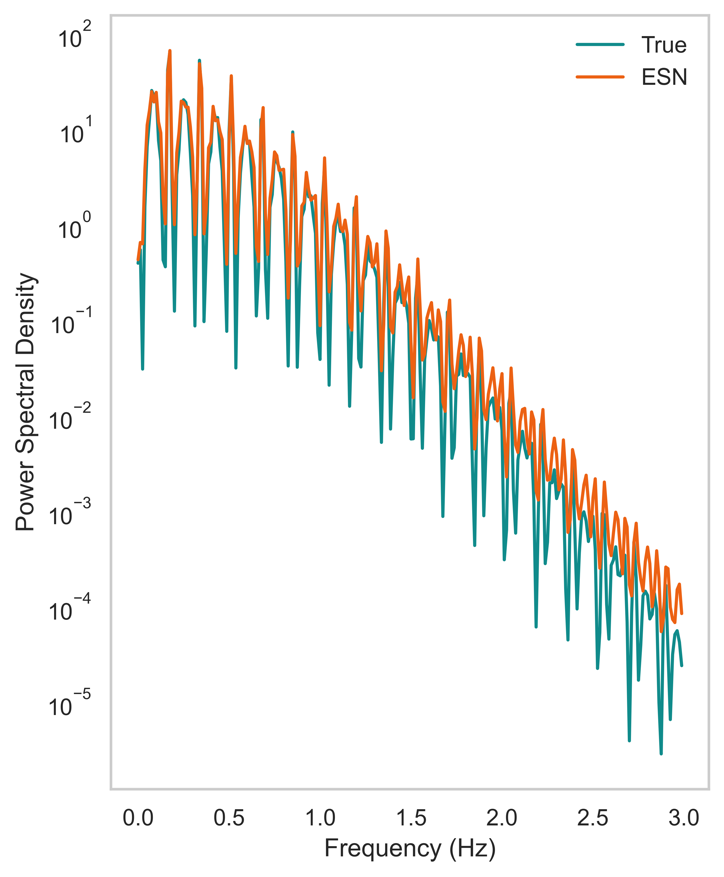}
    \textbf{(a)} Standard ESN
    \label{fig:psd_chen_esn}
  \end{minipage}
  \hfill
  \begin{minipage}[b]{0.49\linewidth}
    \centering
    \includegraphics[width=\textwidth]{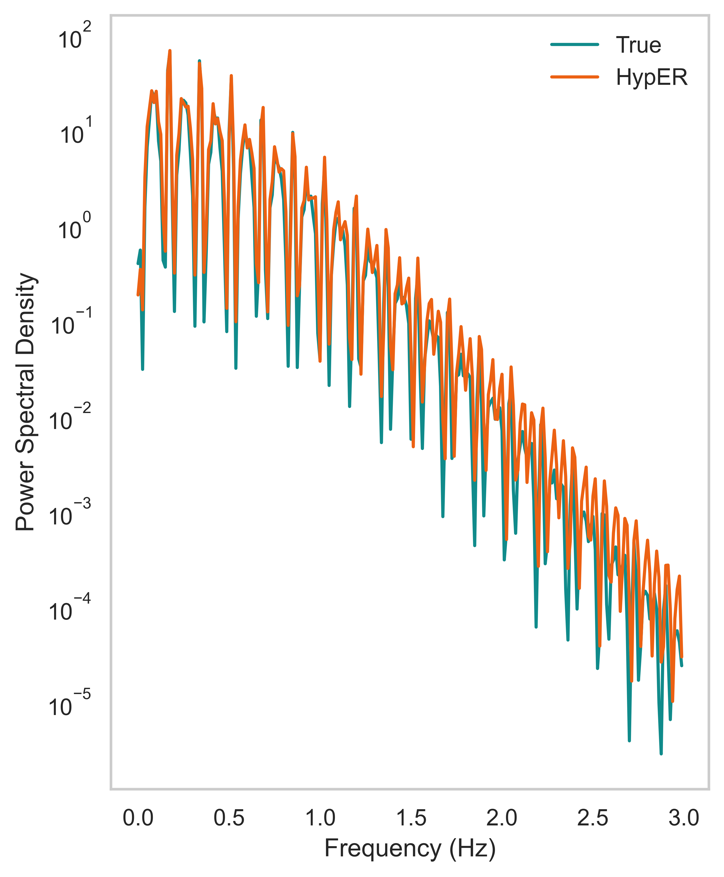}
    \textbf{(b)} Proposed HypER
    \label{fig:psd_rossler_hypER}
  \end{minipage}

  \caption{PSD plots of autoregressive predictions at a 3000-step horizon for (a) a standard ESN and (b) the proposed HypER, when both networks are driven by the Rössler system. }
  \label{fig:psd2}
\end{figure}

\begin{figure}[!ht]
  \centering

  \begin{minipage}[b]{0.49\linewidth}
    \centering
    \includegraphics[width=\textwidth]{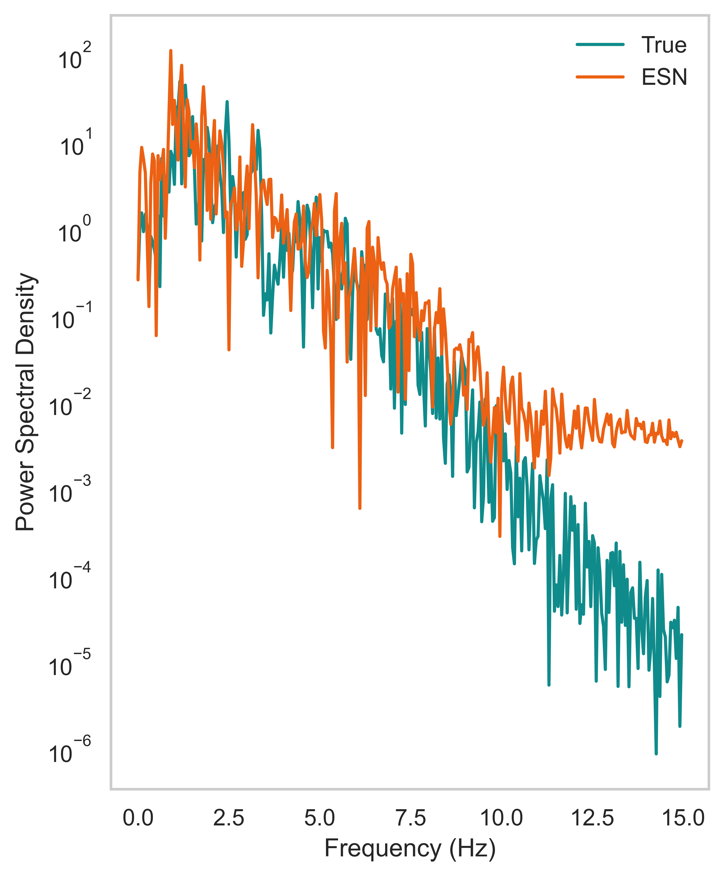}
    \textbf{(a)} Standard ESN
    \label{fig:psd_chen_esn2}
  \end{minipage}
  \hfill
  \begin{minipage}[b]{0.49\linewidth}
    \centering
    \includegraphics[width=\textwidth]{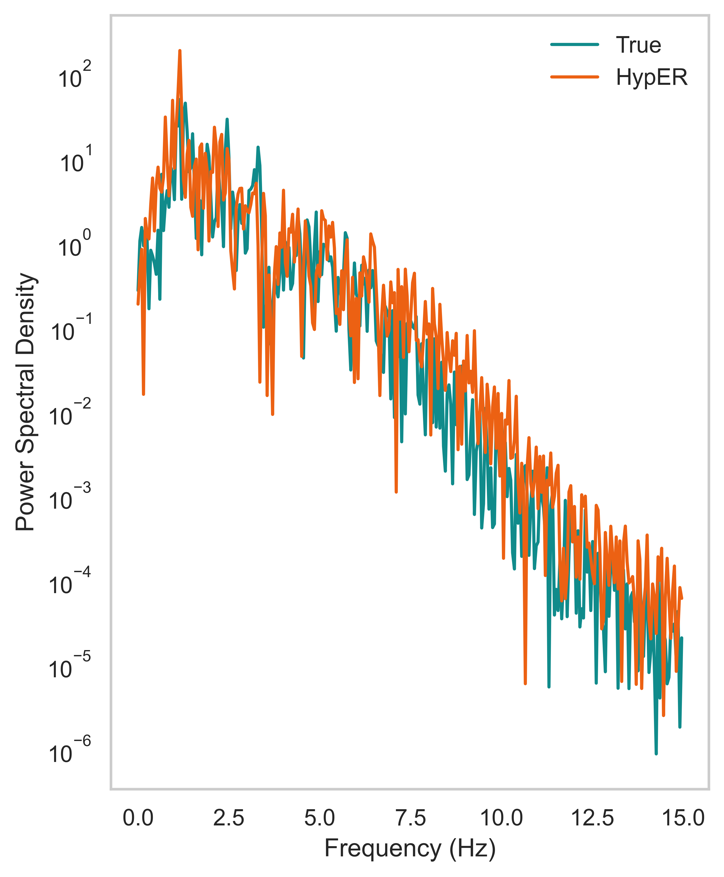}
    \textbf{(b)} Proposed HypER
    \label{fig:psd_rossler_hypER2}
  \end{minipage}

  \caption{PSD plots of autoregressive predictions at a 1000-step horizon for (a) a standard ESN and (b) the proposed HypER, when both networks are driven by the Chen system. }
  \label{fig:psd3}
\end{figure}

\begin{table}[!ht]
\centering
\caption{Ablation over canonical ESN parameters for HypER ($1000$-step autoregressive forecasting of the Lorenz dataset).}
\label{tab:ablation_hyper_params_esn}
\resizebox{0.36\textwidth}{!}{
\begin{tabular}{lccc}
\toprule
 \textbf{Setting}& \textbf{NRMSE$\downarrow$} & \textbf{Norm. VPT$\uparrow$} & \textbf{ADev$\downarrow$}\\
 \midrule
  \multicolumn{4}{c}{Spectral Radius}\\
  \midrule
 $\rho = 0.70$& 1.3176& 11.946& 20.53\\
 $\rho = 0.90$& 1.2800& 12.222& 17.70\\
 $\rho = 0.99$& 1.2580& 12.215& 19.67\\
 $\rho = 1.05$& 1.2884& 12.308& 11.00\\
  \midrule
 \multicolumn{4}{c}{Input Scale}\\
\midrule
 $\gamma= 0.1$& 1.4225& 11.498& 24.00\\
 $\gamma=0.2$& 1.2580& 12.215& 19.67\\
 $\gamma=0.3$& 1.3807& 12.170& 22.60\\
 $\gamma =0.5$& 1.4661& 10.879& 26.13\\
 $\gamma= 0.8$& 1.5473& 9.994& 26.13\\
  $\gamma= 1.0$& 1.6175& 8.899&26.10\\
  \midrule
 \multicolumn{4}{c}{Leak Rate}\\
 \midrule
  $\alpha= 0.5$& 1.3287& 11.916&22.07\\
  $\alpha= 0.7$& 1.3249& 12.182&20.57\\
  $\alpha= 0.8$& 1.2580& 12.215&19.67\\
  $\alpha= 0.9$& 1.3164& 11.935&17.33\\
  $\alpha= 1.0$& 1.2997& 11.973&19.00\\
  \bottomrule
\end{tabular}
}
\end{table}

\begin{table}[!ht]
\centering
\caption{Parameter settings and initial states for benchmark chaotic systems.}
\label{tab:system-params}
\resizebox{0.45\textwidth}{!}{
\begin{tabular}{lcc}
\toprule
\textbf{System} & \textbf{Parameters} & \textbf{Initial State}\\
\midrule
Lorenz & $\sigma=10$, $\rho=28$, $\beta=8/3$ & (1.0, 1.0, 1.0) \\
\midrule
Rössler & $a=0.2$, $b=0.2$, $c=5.7$ & (1.0, 0.0, 0.0)\\
\midrule
 Chen& $a=35$, $b=3$, $c=28$&(1.0, 1.0, 1.0)\\
\midrule
\multirow{2}{*}{Chua} & $\alpha=15.6$, $\beta=28$, & \multirow{2}{*}{(0.2, 0.0, 0.0)}\\
 & $m_0=-1.143$, $m_1=-0.714$ & \\
 \midrule
 Mackey-Glass& $\beta=0.2, \gamma=0.1, \tau=17$&0.2\\
 \bottomrule
\end{tabular}
}
\end{table}

\begin{table}[!ht]
\centering
\caption{Hyperparameter configurations of different reservoir models for the Lorenz dataset.}
\label{tab:model-hyperparams}
\resizebox{0.49\textwidth}{!}{
\begin{tabular}{llc}
\toprule
\textbf{Model}& \textbf{Hyperparameter}& \textbf{Value}\\
\midrule
\multirow{2}{*}{ESN}& Reservoir Size& 300\\
 & Connectivity Ratio&0.05\\
\midrule
 \multirow{2}{*}{SCR}& Reservoir Size&300\\
 & Edge Weight&0.8\\
 \midrule
 \multirow{3}{*}{CRJ}& Reservoir Size&300\\
 & Edge Weight&0.8\\
 & Jump Size&10\\
 \midrule
 \multirow{3}{*}{SW-ESN}& Reservoir Size&300\\
 & Node Degree $E$&2\\
 & Rewiring Probability $p$&0.3\\
 \midrule
 \multirow{4}{*}{MCI-ESN}& Sub-reservoir Size&300\\
 & Edge Weight $\mu$&0.8\\
 &  Inter-reservoir Connection Weight $\eta$&0.8\\
 & Weight Coefficient $\theta$&0.1\\
 \midrule
 \multirow{2}{*}{DeepESN}& Number of Layers&3\\
 & Reservoir Sizes&100, 100, 100\\
 \midrule
 \multirow{3}{*}{HypER}& Reservoir Size&300\\
 & Kernel Width $\sigma$&0.1\\
 & Row-level Sparsity $\kappa$ &40\\
 \bottomrule
\end{tabular}
}
\end{table}

\begin{table}[!ht]
\centering
\caption{Hyperparameter configurations of different reservoir models for the Rössler dataset.}
\label{tab:model-hyperparams2}
\resizebox{0.49\textwidth}{!}{
\begin{tabular}{llc}
\toprule
\textbf{Model}& \textbf{Hyperparameter}& \textbf{Value}\\
\midrule
\multirow{2}{*}{ESN}& Reservoir Size& 300\\
 & Connectivity Ratio&0.001\\
\midrule
 \multirow{2}{*}{SCR}& Reservoir Size&300\\
 & Edge Weight&0.001\\
 \midrule
 \multirow{3}{*}{CRJ}& Reservoir Size&300\\
 & Edge Weight&0.001\\
 & Jump Size&5\\
 \midrule
 \multirow{3}{*}{SW-ESN}& Reservoir Size&300\\
 & Node Degree $E$&3\\
 & Rewiring Probability $p$&0.005\\
 \midrule
 \multirow{4}{*}{MCI-ESN}& Sub-reservoir Size&300\\
 & Edge Weight $\mu$&0.8\\
 &  Inter-reservoir Connection Weight $\eta$&0.3\\
 & Weight Coefficient $\theta$&0.5\\
 \midrule
 \multirow{2}{*}{DeepESN}& Number of Layers&3\\
 & Connectivity Ratio&0.0005\\
 & Reservoir Sizes&100, 100, 100\\
 \midrule
 \multirow{3}{*}{HypER}& Reservoir Size&300\\
 & Kernel Width $\sigma$&0.05\\
 & Row-level Sparsity $\kappa$ &5\\
 \bottomrule
\end{tabular}
}
\end{table}

\begin{table}[!ht]
\centering
\caption{Hyperparameter configurations of different reservoir models for the Chen-Ueta dataset.}
\label{tab:model-hyperparams3}
\resizebox{0.49\textwidth}{!}{
\begin{tabular}{llc}
\toprule
\textbf{Model}& \textbf{Hyperparameter}& \textbf{Value}\\
\midrule
\multirow{2}{*}{ESN}& Reservoir Size& 300\\
 & Connectivity Ratio&0.05\\
\midrule
 \multirow{2}{*}{SCR}& Reservoir Size&300\\
 & Edge Weight&0.8\\
 \midrule
 \multirow{3}{*}{CRJ}& Reservoir Size&300\\
 & Edge Weight&0.8\\
 & Jump Size&10\\
 \midrule
 \multirow{3}{*}{SW-ESN}& Reservoir Size&300\\
 & Node Degree $E$&2\\
 & Rewiring Probability $p$&0.3\\
 \midrule
 \multirow{4}{*}{MCI-ESN}& Sub-reservoir Size&300\\
 & Edge Weight $\mu$&0.8\\
 &  Inter-reservoir Connection Weight $\eta$&0.8\\
 & Weight Coefficient $\theta$&0.1\\
 \midrule
 \multirow{2}{*}{DeepESN}& Number of Layers&3\\
 & Reservoir Sizes&100, 100, 100\\
 \midrule
 \multirow{3}{*}{HypER}& Reservoir Size&300\\
 & Kernel Width $\sigma$&0.2\\
 & Row-level Sparsity $\kappa$ &10\\
 \bottomrule
\end{tabular}
}
\end{table}

\begin{table}[!ht]
\centering
\caption{Hyperparameter configurations of different reservoir models for the Chua dataset.}
\label{tab:model-hyperparams4}
\resizebox{0.49\textwidth}{!}{
\begin{tabular}{llc}
\toprule
\textbf{Model}& \textbf{Hyperparameter}& \textbf{Value}\\
\midrule
\multirow{2}{*}{ESN}& Reservoir Size& 300\\
 & Connectivity Ratio&0.3\\
\midrule
 \multirow{2}{*}{SCR}& Reservoir Size&300\\
 & Edge Weight&0.8\\
 \midrule
 \multirow{3}{*}{CRJ}& Reservoir Size&300\\
 & Edge Weight&0.8\\
 & Jump Size&15\\
 \midrule
 \multirow{3}{*}{SW-ESN}& Reservoir Size&300\\
 & Node Degree $E$&2\\
 & Rewiring Probability $p$&0.3\\
 \midrule
 \multirow{4}{*}{MCI-ESN}& Sub-reservoir Size&300\\
 & Edge Weight $\mu$&0.8\\
 &  Inter-reservoir Connection Weight $\eta$&0.8\\
 & Weight Coefficient $\theta$&0.1\\
 \midrule
 \multirow{2}{*}{DeepESN}& Number of Layers&3\\
 & Connectivity Ratio&0.5\\
 & Reservoir Sizes&100, 100, 100\\
 \midrule
 \multirow{3}{*}{HypER}& Reservoir Size&300\\
 & Kernel Width $\sigma$&0.4\\
 & Row-level Sparsity $\kappa$ &60\\
 \bottomrule
\end{tabular}
}
\end{table}

\paragraph{MIT–BIH Arrhythmia.}  The MIT–BIH Arrhythmia Database comprises 48 half-hour two-lead electrocardiogram recordings collected at Boston’s Beth Israel Hospital between 1975 and 1979, sampled at \(360\,\text{Hz}\) with 11-bit resolution over a \(\pm5\,\text{mV}\) range \cite{MoodyMark2001,Goldberger2000}.  Each record includes expert beat-level and rhythm-level annotations, yielding approximately \(109\,000\) labelled heartbeats across a diverse population of sinus rhythm, premature contractions, and less common arrhythmias.  For our single-channel forecasting experiments, we select lead II from record 100, extracting the first 25,000 samples (\textasciitilde70 seconds) at the original 360 Hz sampling rate, and normalize the signal. To capture temporal dynamics, we apply delay embedding with an embedding dimension of 3. The resulting sequence poses a markedly different prediction challenge from the two chaotic physics data sets above: it combines quasi-periodicity with abrupt morphological changes driven by sporadic ectopic beats, providing a stringent test of the reservoir’s ability to model real-world biomedical variability.

\paragraph{Lorenz–63.}  
The three–variable Lorenz equations  
\(
\dot x=\sigma(y-x),\;
\dot y=x(\rho-z)-y,\;
\dot z=xy-bz
\)
with the canonical parameters \(\sigma=10,\;\rho=28,\;b=8/3\) form the archetype of deterministic chaos: the flow possesses one positive Lyapunov exponent \(\lambda_{\max}\approx0.905\) and a strange attractor of Hausdorff dimension \(\approx2.06\) \cite{lorenz1963deterministic}.  After discarding an initial transient, we harvest a 1000-step window whose variability spans both wings of the well-known butterfly attractor, yielding a balanced long-horizon prediction task.

\paragraph{Rössler.}  
The Rössler system  
\(
\dot x=-y-z,\;
\dot y=x+ay,\;
\dot z=b+z(x-c)
\)
with \(a=0.2,\;b=0.2,\;c=5.7\) exhibits a single-scroll chaotic attractor and a one-dimensional first-return map that is topologically conjugate to a unimodal logistic map \cite{rossler1976equation}.  Its weaker nonlinearity relative to Lorenz leads to a smaller positive Lyapunov exponent (\(\approx0.071\)), making it an instructive contrast for assessing how geometry-aware reservoirs cope with slow error growth.

\paragraph{Chen–Ueta.}  
A structural perturbation of the Lorenz equations, the Chen–Ueta flow  
\(
\dot x=a(y-x),\;
\dot y=(c-a)x-xz+cy,\;
\dot z=xy-bz
\)
with \(a=35,\;b=3,\;c=28\) generates a \emph{hyper-chaotic} attractor that possesses two positive Lyapunov exponents of comparable magnitude \cite{chen1999yet}.  The resulting 1000-step segment therefore contains higher local instability and richer folding behaviour than either Lorenz or Rössler.

\paragraph{Chua.}  
Chua’s circuit is governed by a third-order piecewise-linear ordinary differential equation whose double-scroll attractor is reproducible with off-the-shelf electronic components \cite{chua1986double}.  The non-smooth nonlinearity produces sharp turns and nearly linear flight phases; these kinks are notoriously challenging for smooth recurrent models but provide an excellent diagnostic for geometry-induced diversity in the reservoir state.

\paragraph{Mackey–Glass.}  
The Mackey–Glass delay-differential equation  
\(
\dot x(t)=\beta\,x(t-\tau)/(1+x(t-\tau)^n)-\gamma x(t)
\)
with \(\beta=0.2,\;\gamma=0.1,\;n=10,\;\tau=17\) yields an infinite-dimensional chaotic attractor whose scalar observation exhibits long-memory amplitude modulations \cite{mackey1977oscillation}.  Unlike the three-dimensional ODEs above, this DDE forces the reservoir to model history-dependent dynamics over a latent state space that cannot be embedded in any finite Euclidean dimension, providing a stringent test of the inductive bias introduced by hyperbolic connectivity.


\subsection{Baselines}\label{sec:baselines}

To gauge the benefit of hyperbolic wiring we compare HypER with six widely–used
reservoir variants, all trained with identical ridge read-outs and data splits.
\begin{itemize}[leftmargin=1.5em]
\item \textit{ESN} \cite{jaeger2001echo}: Erdős–Rényi reservoir, spectral radius and
      input scale tuned on a $3\times3$ logarithmic grid.
\item \textit{SCR} \cite{li2024simplecyclereservoirsuniversal}: single directed cycle
      with fixed edge weight; only the weight magnitude is tuned.
\item \textit{CRJ} \cite{rodan2011simple}: cycle reservoir with uniform jump
      connections; we sweep the jump length and edge weight.
\item \textit{SW-ESN} \cite{kawai2019smallworld}: Watts–Strogatz small-world
      reservoir; node degree~$E$ and rewiring probability~$p$ are optimised.
\item \textit{MCI-ESN} \cite{liu2024minimum}: two sparsely coupled ESNs (sizes
      fixed to match HypER), with intra- and inter-reservoir weights
      $\mu,\eta$ and combination coefficient $\theta$ selected by grid search.
\item \textit{DeepESN} \cite{gallicchio2020deepechostatenetwork}: three stacked
      reservoirs of equal size, common spectral radius and leak tuned as above.
\end{itemize}
All baselines use the same reservoir size as HypER (300 units per layer where
applicable) and the same wash-out, regularisation and optimisation settings; the
exact hyper-parameters chosen for each data set are listed in
Tables~\ref{tab:model-hyperparams}–\ref{tab:model-hyperparams4}.

\subsection{CDFs for dimensions 3 and 4}

In the hyperbolic-uniform sampling scheme, every radial coordinate \(\rho\) is drawn with respect to the \((d{-}1)\)-dimensional hyperbolic volume form, whose density is \(p_d(\rho)\propto\sinh^{d-1}\!\rho\).  For \(d=3\) this gives \(p_3(\rho)\propto\sinh^{2}\!\rho\).  Integrating once and choosing the normalisation constant so that \(F_3(\rho_{\max})=1\) produces  
\begin{equation}
F_3(\rho)=\frac{\sinh(2\rho)-2\rho}{\sinh(2\rho_{\max})-2\rho_{\max}}\;,
\qquad 0\le\rho\le\rho_{\max}.
\end{equation}
Inverse-transform sampling therefore, requires solving  
\(\sinh(2\rho)-2\rho = u\,[\sinh(2\rho_{\max})-2\rho_{\max}]\)  
for each independent \(u\sim\operatorname{Uniform}(0,1)\).  Because the left-hand side is strictly increasing, a single Newton iteration  
\(f(\rho)=\sinh(2\rho)-2\rho-\text{RHS},\; f'(\rho)=2\cosh(2\rho)-2\)  
converges quadratically and is numerically stable even as \(\rho_{\max}\to\infty\) (where the condition number improves exponentially).

For \(d=4\) the density becomes \(p_4(\rho)\propto\sinh^{3}\!\rho\).  A straightforward antiderivative yields  
\begin{equation}
F_4(\rho)=\frac{\cosh^{3}\!\rho-3\cosh\rho+2}
                 {\cosh^{3}\!\rho_{\max}-3\cosh\rho_{\max}+2},
\end{equation}
so the inverse CDF is defined implicitly by  
\(\cosh^{3}\!\rho-3\cosh\rho+2
     = u\,[\cosh^{3}\!\rho_{\max}-3\cosh\rho_{\max}+2]\).
Setting \(g(\rho)=\cosh^{3}\!\rho-3\cosh\rho+2-\text{RHS}\) and noting  
\(g'(\rho)=3\sinh\rho(\cosh^{2}\!\rho-1)=3\sinh^{3}\!\rho>0\),  
Newton–Raphson again gives a monotone and rapidly convergent solver.

\subsection{Training Protocol}\label{sec:training}
For every data set the raw sequence is divided chronologically into three disjoint segments: an initial \emph{wash-out} period that lasts $100$ reservoir updates (longer than the maximal memory length observed in our leaky dynamics), a fitting window that spans the next $4500$ available samples and supplies the design matrix $X\in\mathbb{R}^{T\times N}$ for ridge regression, and a hold-out window comprising the remaining $1000$ for hyper-parameter selection and final scoring.  During fitting, the reservoir is driven in \emph{teacher-forcing} mode so that the hidden state trajectory is uniquely determined by the past inputs and the wash-out renders the dependence on $x_{0}$ negligible, thereby satisfying the \emph{Echo–State Property} in practice.  The read-out weight matrix $\mathbf{W}_{\mathrm{out}}^*
  \;=\;\mathbf{Y}\,\mathbf{\Xi}^{\mathsf{T}} (\mathbf{\Xi} \mathbf{\Xi}^{\mathsf{T}} + \lambda I)^{-1}$ is solved in double precision by Householder–QR to avoid squaring the condition number; subsequently, the network is run in closed loop by feeding its own one-step forecasts back as input.  Performance is summarised by the \emph{valid prediction time} $T_{\mathrm{VPT}}$, defined as the first horizon at which the normalised RMSE exceeds a system-specific threshold—0.40 for Lorenz-63, 0.50 for Rössler, and 0.30 for Chen—consistent with conventions in reservoir-computing studies of chaotic flows \cite{pathak2018hybrid}.  All ADev evaluations were conducted using a uniform cube size of $4 \times 4 \times 4$ across datasets to ensure consistency in spatial resolution. 

\subsection{Extended Results}\label{sec:extended}

Table~\ref{tab:nrmse_horizonsb} reports the one–step\,$\rightarrow$\,multi–step teacher–forced
NRMSE over multiple prediction horizons for every chaotic benchmark.
Across all systems and horizons, HypER is the only model whose error stays
below $3\times10^{-3}$ for Lorenz, Rössler, Chen, and Mackey–Glass and below
$1.2\times10^{-3}$ for the more dissipative Chua circuit, outperforming the
strongest Euclidean baselines (MCI–ESN or DeepESN) by one to three orders of
magnitude.  The standard deviation columns confirm that this advantage is
statistically robust over 30 random seeds.  Table~\ref{tab:ablation_hyper_params_esn}
examines the three canonical ESN hyperparameters for HypER in isolation.  Error is
minimal at $\rho\!\approx\!0.99$, $\gamma\!\approx\!0.2$, and
$\alpha\!\approx\!0.8$, a region that satisfies
$\beta(\sigma)\!>\!1$ and thus corroborates the theoretical guideline derived
in \S\ref{sec:methodology}: pushing $\rho$ or $\gamma$ higher sacrifices the
echo–state margin, whereas smaller values erode the guaranteed expansion and
reduce the normalised VPT.  Finally,
Table~\ref{tab:system-params} lists the ODE or delay-differential
parameters and initial conditions used to generate the training and test
trajectories; these match the classic values in the chaos-forecasting
literature and enable exact replication of every curve reported in the main document. Dataset-specific hyper-parameter grids for every baseline and for HypER are summarised in Tables \ref{tab:model-hyperparams}–\ref{tab:model-hyperparams4}; these reproduce all scores in the main text with a single seed-controlled run.

\subsection{Hyperparameter Grid and Final Choices}\label{sec:hypers}
A single grid is used for all data sets to demonstrate that HypER is robust across operating regimes.  The search spans  leak rate $\alpha\in\{0.1,0.3,0.5,0.8,1.0\}$, target spectral radius $\varrho\in\{0.7,0.8,0.9,0.95\}$, kernel width $\sigma\in\{0.05,0.08,0.12,0.20\}$ (expressed in units of the mean hyperbolic distance), out-degree $\kappa\in\{5,10,20\}$ after sparsification, and ridge constant $\lambda\in\{10^{-4},10^{-5},10^{-6}\}$.  Each configuration is evaluated once on the validation slice, and the tuple that maximises VPT averaged over three random seeds is retained.  The same setting $(\alpha,\varrho,\sigma,\kappa,\lambda)=(0.8,0.95,0.12,10,10^{-6})$ that emerges as either the top or statistically tied on every benchmark is frozen, and we use those values for the final test runs reported in the paper on thirty seeds.  That invariance supports the claim that the negative-curvature geometry, rather than fine-grained tuning, is the principal driver of the forecasting improvement.

\subsection{Complexity}
\label{sec:extensions}

\textit{Time Complexity.} Constructing the hyperbolic adjacency matrix \(\mathbf{W}\) naively requires \(O(N^2)\) distance computations, plus exponentiation. However, imposing sparsity \(\kappa\) yields a final matrix \(\mathbf{W}\) with \(O(\kappa N)\) nonzeros. The spectral normalization is dominated by an eigenvalue computation, \(O(N^3)\) or faster if iterative methods are used. The forward pass of each ESN update is \(O(\kappa N)\) per time-step.
 While \(\tanh(\cdot)\) is standard, the hyperbolic setting admits possibilities for \emph{node-varying} nonlinearities, e.g.\ \(\phi_i(\cdot)\) that adapt to the radial position of node \(i\). We can also consider \emph{Möbius addition} or other manifold-specific operations \cite{ganea2018hyperbolic}, although we adopt a simpler Euclidean gating inside the reservoir state vector for computational tractability.

\subsection{Numerical Stability Safeguards}\label{sec:stability}

All experiments were engineered to preclude floating-point overflow, underflow, or catastrophic loss of significance, without compromising the generic reproducibility of the code.

\textit{Spectral control of the reservoir.}  After computing the hyperbolic kernel
$ \mathbf{\widetilde{W}}_{ij}=\exp\ \!\bigl(-d_{\mathbb H}(p_i,p_j)/\sigma\bigr)$
in \texttt{float64} precision, we apply the similarity-free rescaling
$\mathbf{W}=\varrho\,\widetilde W/\rho(\widetilde W)$ with
$\varrho<1$.  This guarantees that the global Lipschitz constant of the leaky ESN map
$\mathbf{x}_{t+1}=(1-\alpha)\mathbf{x}_t+\alpha\phi(\mathbf{W}\mathbf{x}_t+\mathbf{Uu}_t)$
is strictly below unity for any $\alpha\le 1$ whenever the activation derivative is bounded, and therefore enforces the Echo–State Property irrespective of input magnitude.  The spectral radius is measured with an implicitly restarted Lanczos routine that yields machine-precision eigenvalues even for $N=10^{4}$.

\textit{Kernel underflow.}  For very large hyperbolic separations, the naïve exponential may underflow; we therefore clip the argument so that $\widetilde W_{ij}\ge 10^{-16}$, a threshold far below the sparsity mask applied subsequently and hence neutral with respect to graph connectivity.  

\textit{State-space conditioning.}  Reservoir states are centred online and divided by a running standard deviation (time constant \(10^{3}\) steps) before entering the ridge-regression read-out.  The Tikhonov coefficient is fixed at $\lambda=10^{-6}$, small enough not to bias the solution yet sufficient to bound the condition number of $X^\top X$ below \(10^{8}\) in every data set.  Linear systems are solved with a QR factorisation rather than a normal–equation inversion to avoid squaring the condition number.

\textit{Runtime precision policy.}  Weight matrices remain in \texttt{float64} during construction; after spectral scaling they are cast to \texttt{float32} for GPU execution.  All eigendecompositions, QR factorizations, and Lyapunov-spectrum computations are carried out on the CPU in 64-bit arithmetic, following the recommendations of \cite{Higham2002} for mixed-precision numerical linear algebra.

We ensure that no experiment exhibits numerical divergence, that reservoir dynamics remain within the theoretical bounds established by Lemma~\ref{lemma2b}, and that predictive performance variations originate from model geometry rather than artefacts of finite-precision arithmetic.

\subsection{Software License, Dependencies and Compute Budget}\label{sec:hardware}

All code and pretrained weights accompanying this paper will be released under the
\textit{MIT License}, permitting unrestricted academic and commercial use provided
that the original copyright notice is retained.  Experiments were run with
\textit{Python\,3.10.13} on \textit{Ubuntu\,22.04}; key third-party packages (exact
\texttt{pip} hashes are pinned in the repository) are:
\begin{itemize}
\item \texttt{PyTorch\,2.2.2} (\texttt{CUDA\,12.1}) for tensor operations and GPU acceleration,
\item \texttt{NumPy\,1.25.2}, \texttt{SciPy\,1.15.2}, \texttt{seaborn\,0.12.2} and \texttt{NetworkX\,3.2.1} for linear algebra and graph utilities,
\item \texttt{Matplotlib\,3.10.1} for visualisation,
\item \texttt{scikit-learn\,1.6.1} for ridge regression and statistical metrics.
\end{itemize}
Re-running the notebooks requires a single NVIDIA A100 (80 GB) or comparable GPU
for the largest hyperparameter sweep; all smaller experiments execute in under
15 minutes on a mid-range laptop GPU (RTX-3060/8 GB).  A reproducibility
script reproduces every figure and table with one command.
The full hyper-parameter sweep (216 configurations $\times$ 8 data sets $\times$ 30 seeds) consumes 1.6\,GPU-hours and 61.4\,CPU-core-hours; the winning model for any single data set trains in under 15 s of wall-clock time and occupies under 25 MB of GPU memory, confirming the suitability of HypER for resource-constrained deployments.

\subsection{Notation Summary}

\renewcommand{\arraystretch}{1.15}
\begin{center}
\begin{tabular}{@{}ll@{}}
\toprule
Symbol & Meaning \\ \midrule
$\mathbf{x}_t\in\mathbb R^{N}$              & reservoir state at discrete time $t$ \\
$\mathbf{u}_t\in\mathbb R^{m}$              & external input (drive) at time $t$ \\
$F_{\mathbf{u}}(\cdot)$                     & leaky–ESN map $(1-\alpha)\,\cdot+\alpha\,\phi(\mathbf{W}\,\cdot+\mathbf{Uu})$\\
$\alpha\in(0,1]$                            & leak rate (memory parameter) \\
$\phi$                                      & node non-linearity, $m\le\phi'\le L$ on operating range \\
$m,\;L$                                     & global lower / upper derivative bounds of $\phi$ \\
$\mathbf{W} \in\mathbb R^{N\times N}$& recurrent weight matrix after spectral rescaling \\
$\mathbf{\widetilde{W}}$& unscaled hyperbolic kernel matrix \\
$\varrho$                                   & target spectral radius after rescaling ($0<\varrho<1$) \\
$\rho(\cdot)$                               & spectral radius (largest eigenvalue magnitude) \\
$\lambda_{\min}(\cdot)$                     & smallest eigenvalue of a symmetric matrix \\
$p_i\in\mathcal B^{d}$                       & position of neuron $i$ in the $d$-dimensional Poincaré ball \\
$d_{\mathbb H}(p_i,p_j)$                    & hyperbolic distance between $p_i$ and $p_j$ \\
$\sigma>0$                                  & kernel width in $\widetilde W_{ij}=e^{-d_{\mathbb H}(p_i,p_j)/\sigma}$ \\
$\delta$                                    & $\min_{i\neq j}d_{\mathbb H}(p_i,p_j)$, nearest-neighbour hyperbolic gap \\
$\mathbf{U}\in\mathbb R^{N\times m}$& input weight matrix \\
$\mathbf{e}_t=\mathbf{x}_t-\mathbf{y}_t$& state difference between two trajectories \\
$J(\mathbf{x})$                             & Jacobian $\partial F_{\mathbf{u}}/\partial\mathbf{x}$ at state $\mathbf{x}$ \\
$\beta(\sigma)$                             & lower-bound expansion factor \\ \bottomrule
\end{tabular}
\end{center}

\end{document}